\newtheorem{theorem}{Theorem}
\newtheorem{definition}{Definition}
\newtheorem{lemma}{Lemma}
\newtheorem{remark}{Remark}
\newtheorem{corollary}{Corollary}
\newtheorem*{theorem*}{Theorem}
\newtheorem*{example*}{Example} 
\newtheorem*{definition*}{Definition}
\newtheorem*{lemma*}{Lemma}
\newtheorem*{remark*}{Remark}
\newtheorem*{corollary*}{Corollary}
\newtheorem*{proposition*}{Proposition}
\newtheorem*{assumption*}{Assumption}
\newtheorem*{claim*}{Claim}
\newtheoremstyle{TheoremNum}
        {\topsep}{\topsep}              
        {\itshape}                      
        {}                              
        {\bfseries}                     
        {.}                             
        { }                             
        {\thmname{#1}\thmnote{ \bfseries #3}}
\theoremstyle{TheoremNum}
\newtheoremstyle{LemmaNum}
        {\topsep}{\topsep}              
        {\itshape}                      
        {}                              
        {\bfseries}                     
        {.}                             
        { }                             
        {\thmname{#1}\thmnote{ \bfseries #3}}
\theoremstyle{LemmaNum}
\renewcommand{\Pr}{ \mathbb{P} }
\newcommand{\I}{ \mathcal{I} }
\newcommand{\A}{ \mathcal{A} }
\newcommand{\wtr}{\widetilde{r}}
\newcommand{\B}{\mathbb{B}}
\newcommand{\E}{\mathbb{E}}
\newcommand{\N}{\mathcal{N}}
\renewcommand{\stop}{\mathrm{stop}}
\renewcommand{\[}{\left[ }
\renewcommand{\]}{\right] }
\renewcommand{\(}{\left( }
\renewcommand{\)}{\right) }
\newcommand{\wt}{\widetilde }
\newcommand{\wh}{\widehat }
\newcommand{\x}{\mathbf{x}}
\newcommand{\y}{\mathbf{y}}
\title{Lipschitz Bandits with Batched Feedback}
\author{Yasong Feng\thanks{Shanghai Center for Mathematical Sciences, Fudan University; email: \texttt{ysfeng20@fudan.edu.cn}.}\and
        Zengfeng Huang\thanks{School of Data Science, Fudan University; email: \texttt{huangzf@fudan.edu.cn}.}\and
	Tianyu Wang\thanks{Shanghai Center for Mathematical Sciences, Fudan University; email: \texttt{wangtianyu@fudan.edu.cn}.
}}
\date{}
\begin{document}

\maketitle

\begin{abstract}
    \noindent In this paper, we study Lipschitz bandit problems with batched feedback, where the expected reward is Lipschitz and the reward observations are communicated to the player in batches. We introduce a novel landscape-aware algorithm, called Batched Lipschitz Narrowing (BLiN), that optimally solves this problem. Specifically, we show that for a $T$-step problem with Lipschitz reward of zooming dimension $d_z$, our algorithm achieves theoretically optimal (up to logarithmic factors) regret rate $\widetilde{\mathcal{O}}\left(T^{\frac{d_z+1}{d_z+2}}\right)$ using only $ \mathcal{O} \left( \log\log T\right) $ batches. We also provide complexity analysis for this problem. Our theoretical lower bound implies that $\Omega(\log\log T)$ batches are necessary for any algorithm to achieve the optimal regret. Thus, BLiN achieves optimal regret rate (up to logarithmic factors) using minimal communication.
\end{abstract}

\setlength{\parindent}{0pt}

\section{Introduction}\label{intro} 
    Multi-Armed Bandit (MAB) algorithms aim to exploit the good options while explore the decision space. These algorithms and methodologies find successful applications in artificial intelligence and reinforcement learning (e.g., \cite{silver2016mastering}). 
	While the classic MAB setting assumes that the rewards are immediately observed after each arm pull, real-world data often arrives in different patterns. 
	For example, observations from clinical trials are often be collected in a batched fashion \cite{pocock1977group}. Another example is from online advertising, where strategies are tested on multiple customers at the same time \cite{bertsimas2007learning}. In such cases, any observation-dependent decision-making should comply with this data-arriving pattern, including MAB algorithms. 
	
	
	In this paper, we study the Lipschitz bandit problem with batched feedback -- a MAB problem where the expected reward is Lipschitz and the reward observations are communicated to the player in batches. In such settings, rewards are communicated only at the end of the batches, and the algorithm can only make decisions based on information up to the previous batch. Existing Lipschitz bandit algorithms heavily rely on timely access to the reward samples, since the partition of arm space may change at any time. Therefore, they can not solve the batched feedback setting. To address this difficulty, we present a novel adaptive algorithm for Lipschitz bandits with communication constraints, named \textit{Batched Lipschitz Narrowing} (BLiN). BLiN learns the landscape of the reward by adaptively narrowing the arm set, so that regions of high reward are more frequently played. 
	Also, BLiN determines the data collection procedure adaptively, so that only very few data communications are needed. 
	
	The above BLiN procedure achieves optimal regret rate $\wt{\mathcal{O}} \left( T^{\frac{d_z + 1}{d_z + 2}} \right)$ ($d_z$ is the zooming dimension \cite{kleinberg2008multi,bubeck2011x}), and can be implemented in a clean and friendly form. 
	In addition to achieving the optimal regret rate, BLiN also improves the state-of-the-art results in the following senses: 
    \leftmargini=4mm
    \begin{itemize}
	    \item BLiN's communication complexity is optimal. BLiN only needs $  \mathcal{O}(\log\log T) $ rounds of communications to achieve the optimal regret rate (Theorem \ref{thm:ablin}), and no algorithm can achieve this rate with fewer than $  \Omega(\log\log T) $ rounds of communications (Corollary \ref{coro}).
	    \item BLiN's time complexity is optimal: if the arithmetic operations and sampling are of complexity $\mathcal{O} (1)$, then the time complexity of BLiN is $\mathcal{O} (T)$, which improves the best known time complexity $\mathcal{O} (T \log T)$ for Lipschitz bandit problems \cite{bubeck2011x}.
            \item The space complexity of BLiN is $\mathcal{O}\left(T^{\frac{d_z+1}{d_z+2}} (\log T)^{-\frac{d_z+1}{d_z+2}}\right)$, which also improves the best known result. This is because we do not need to store information of cubes in previous batches. The detailed time and space complexity analysis of BLiN is in Remark \ref{rk:complexity}.
	\end{itemize} 

 In Table \ref{tab:lip} we provide a comparison of BLiN and state-of-the-art Lipschitz bandit algorithms in terms of regret bound, communication bound, time complexity and space complexity.

     \begin{table*}[th]
        \centering
        \caption{Comparison with State-of-the-art Lipschitz Bandit Algorithms}
        \begin{tabular}{ccccc}
             \toprule
             algorithm & regret & communication & time complexity & space complexity\\
             \midrule
             Zooming \cite{kleinberg2008multi} & $\widetilde{\mathcal{O}}\left(T^{\frac{d_z+1}{d_z+2}}\right)$ & $T$ & $\mathcal{O}\left(T^2\right)$ & $\mathcal{O}(T)$ \\
             HOO \cite{bubeck2011x} & $\widetilde{\mathcal{O}}\left(T^{\frac{d_z+1}{d_z+2}}\right)$ & $T$ & $\mathcal{O}\left(T\log T\right)$ & $\mathcal{O}(T)$ \\
             A-BLiN (our work) & $\bm{\widetilde{\mathcal{O}}\left(T^{\frac{d_z+1}{d_z+2}}\right)}$ & $\bm{\mathcal{O}(\log\log T)}$ & $\bm{\mathcal{O}\left(T\right)}$ & $\bm{\mathcal{O}\left(T^{\frac{d_z+1}{d_z+2}} (\log T)^{-\frac{d_z+1}{d_z+2}}\right)}$ \\
             \bottomrule
        \end{tabular}
        \label{tab:lip}
    \end{table*}
	
	
\subsection{Settings \& Preliminaries} \label{prel}

For a Lipschitz bandit problem (with communication constraints), the arm set is a compact doubling metric space $( \mathcal{A}, d_\mathcal{A} )$. The expected reward $\mu : \mathcal{A} \rightarrow \mathbb{R}$ is $1$-Lipschitz with respect to the metric $d_{\mathcal{A}}$, that is, 
$ |\mu(x_1)-\mu(x_2)|\leq d_\mathcal{A}(x_1,x_2)$ for any $ x_1,x_2\in\mathcal{A} $.


At time $t\leq T$, the learning agent  pulls an arm $x_t\in\mathcal{A}$ that yields a reward sample $y_t = \mu (x_t) + \epsilon_t $, where $ \epsilon_{t} $ is a mean-zero independent sub-Gaussian noise. Without loss of generality, we assume that $\epsilon_t\sim\mathcal{N}(0,1)$, since generalizations to other sub-Gaussian noises are not hard. 




Similar to most bandit learning problems, the agent seeks to minimize regret in the batched feedback environment. The regret is defined as $R(T)=\sum_{t=1}^T\left(\mu^*-\mu(x_t)\right)$, where $\mu^*$ denotes $\max_{x\in\mathcal{A}}\mu(x)$. For simplicity, we define $\Delta_x=\mu^*-\mu(x)$ (called optimality gap of $x$) for all $x\in\mathcal{A}$. 
\subsubsection{Doubling Metric Spaces and the $([0,1]^d, \| \cdot \|_\infty)$ Metric Space}

By the Assouad's embedding theorem \cite{assouad1983plongements}, the (compact) doubling metric space $ ( \mathcal{A} , d_{\mathcal{A}} ) $ can be embedded into a Euclidean space with some distortion of the metric; See \cite{pmlr-v119-wang20q} for more discussions in a machine learning context. Due to existence of such embedding, the metric space $ ([0,1]^d, \| \cdot \|_{\infty})  $, where metric balls are hypercubes, is sufficient for the purpose of our paper. For the rest of the paper, we will use hypercubes in algorithm design for simplicity, while our algorithmic idea generalizes to other doubling metric spaces.

\subsubsection{Zooming Number and Zooming Dimension} 

An important concept for bandit problems in metric spaces is the zooming number and the zooming dimension \cite{kleinberg2008multi, bubeck2008tree, slivkins2011contextual}, which we discuss now. We start with the definition of packing numbers.
\begin{definition}
    Let $(\A,d_\A)$ be a metric space. The $r$-packing number $\N(\mathcal{S},r)$ of $\mathcal{S}\subset\A$ is the size of the largest packing of $\mathcal{S}$ with disjoint $d_\A$-open balls with radius $r$.
\end{definition}
Then we define the zooming number and the zooming dimension.
\begin{definition}
    For a problem instance with arm set $\mathcal{A}$ and expected reward $\mu$, we let $S(r)$ denote the set of $r$-optimal arms, that is, $ S (r) = \{ x \in \A : \Delta_x=\mu^*-\mu(x) \le r \} $. We define the $r$-zooming number $N_r$ as $N_r=\N\left(S(16r),\frac{r}{2}\right)$. The zooming dimension is then defined as 
    \begin{align*} 
        d_z :=\; \min \left\{d\geq0: \exists a>0,\;N_r \le ar^{-d},\;\forall 0<r<1 \right\}. 
    \end{align*}
    Moreover, we define the zooming constant $C_z$ as
    \begin{align*}
        C_z=\min\left\{a>0:\;N_r \le ar^{-d_z},\;\forall 0<r<1 \right\}.
    \end{align*}
\end{definition}

Zooming dimension $d_z$ can be significantly smaller than ambient dimension $d$ and can be zero. For a simple example, consider a problem with ambient dimension $d=1$ and expected reward function $\mu(x)=x$ for $0\leq x\leq1$. Then for any $r=2^{-i}$ with $i\geq4$, we have $S(16r)=[1-16r,1]$ and $N_r=16$. Therefore, for this problem the zooming dimension equals to $0$, with zooming constant $C_z=16$.

\subsection{Batched feedback pattern and our results}
In the batched feedback setting, for a $T$-step game, the player determines a grid $\mathcal{T}=\{t_0,\cdots,t_B\}$ \emph{adaptively}, where $0=t_0<t_1<\cdots<t_B=T$ and $B\ll T$. During the game, reward observations are communicated to the player only at the grid points $t_1, \cdots, t_B$. As a consequence, for any time $t$ in the $j$-th batch, that is, $t_{j-1}<t\leq t_j$, the reward $y_t$ cannot be observed until time $t_j$, and the decision made at time $t$ depends only on rewards up to time $t_{j-1}$. The determination of the grid $\mathcal{T}$ is adaptive in the sense that the player chooses each grid point $t_j\in\mathcal{T}$ based on the operations and observations up to the previous point $t_{j-1}$.

In this work, we present BLiN algorithm to solve Lipschitz bandits under batched feedback. During the learning procedure, BLiN detects and eliminates the `bad area' of the arm set in batches and partition the remaining area according to an approporiate \emph{edge-length sequence}. Our first theoretical upper bound is that with simple Doubling Edge-length Sequence $r_m=2^{-m+1}$, BLiN achieves optimal regret rate $\widetilde{\mathcal{O}}\left(T^{\frac{d_z+1}{d_z+2}}\right)$ by using $\mathcal{O}(\log T)$ batches.
\begin{theorem}\label{thm:dblin}
    With probability exceeding $1-\frac{2}{T^6}$, the $T$-step total regret $R(T)$ of BLiN with Doubling Edge-length Sequence (D-BLiN) satisfies
    \begin{equation*}
        R(T)\lesssim T^\frac{d_z+1}{d_z+2}\cdot (\log T)^{\frac{1}{d_z+2}},
    \end{equation*}
    where $d_z$ is the zooming dimension of the problem instance. In addition, D-BLiN only needs no more than $\mathcal{O}(\log T)$ rounds of communications to achieve this regret rate. Here and henceforth, $\lesssim$ only omits constants. 
\end{theorem}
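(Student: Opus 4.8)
The plan is to analyze D-BLiN as an elimination algorithm operating in phases indexed by $m$, where in phase $m$ the surviving arm space is covered by standard cubes of edge length $r_m = 2^{-m+1}$, each such cube is sampled enough times to estimate its mean reward to within $\Theta(r_m)$, and cubes whose estimated reward is more than $\Theta(r_m)$ below the best estimate are eliminated. First I would establish the \emph{clean event}: using the $\mathcal{N}(0,1)$ noise and a union bound over all cubes in all phases, with probability at least $1 - 2/T^6$, every cube's empirical mean is within, say, $r_m/8$ (or some fixed fraction) of its true average reward over the cube, provided each surviving cube of edge length $r_m$ is pulled $n_m \asymp r_m^{-2}\log T$ times. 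This is the standard sub-Gaussian concentration plus union bound computation, and it is where the $\log T$ factor and the $1/T^6$ failure probability come from.

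On the clean event I would then argue two things. First, \textbf{correctness of elimination}: no cube containing an arm close to optimal is ever removed; more precisely, any surviving cube at phase $m$ has optimality gap $O(r_m)$ for all its points, because a cube is only eliminated when its estimated value is provably suboptimal by a margin exceeding the estimation error, and Lipschitzness transfers the cube-average bound to every point in the cube. Consequently, at the start of phase $m$ every surviving cube lies in $S(c\, r_{m-1})$ for a suitable constant $c$, so the number of surviving cubes of edge length $r_m$ is bounded by the zooming number $N_{r_m}$ (up to the constant absorbed in the definition using $S(16r)$), hence by $C_z r_m^{-d_z}$. Second, \textbf{per-phase regret}: each pull in phase $m$ is from a surviving cube and thus incurs instantaneous regret $O(r_{m-1}) = O(r_m)$, and there are at most $N_{r_m} \cdot n_m \lesssim r_m^{-d_z} \cdot r_m^{-2}\log T$ pulls in phase $m$, so the phase-$m$ regret is $\lesssim r_m^{-d_z-1}\log T$.

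Next I would sum the per-phase regret over phases $m = 1, \dots, M$, where $M$ is the last phase, i.e.\ $r_M$ is the finest scale reached before the budget $T$ is exhausted. The geometric sum $\sum_m r_m^{-(d_z+1)}\log T$ is dominated by its last term $\lesssim r_M^{-(d_z+1)}\log T$. It remains to relate $r_M$ to $T$: the total number of pulls up through phase $M$ is $\lesssim \sum_m r_m^{-(d_z+2)}\log T \asymp r_M^{-(d_z+2)}\log T$, and this is $\le T$, so $r_M \gtrsim (T/\log T)^{-1/(d_z+2)}$, equivalently $r_M^{-1} \lesssim (T/\log T)^{1/(d_z+2)}$. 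Substituting, $R(T) \lesssim r_M^{-(d_z+1)}\log T \lesssim (T/\log T)^{(d_z+1)/(d_z+2)}\log T = T^{(d_z+1)/(d_z+2)} (\log T)^{1/(d_z+2)}$, as claimed. Finally, the batch count: since $r_m$ halves each phase and the algorithm stops once $r_m^{-(d_z+2)}\log T$ reaches order $T$, we have $2^{M} \lesssim (T/\log T)^{1/(d_z+2)} \le T$, so $M = O(\log T)$ batches suffice; I also need to check that a cube with too few remaining pulls to complete its quota is handled gracefully (the algorithm simply spends the remaining budget, contributing at most $O(r_M \cdot T) $ to regret, which is within the stated bound — or equivalently this is the last, truncated phase).

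The main obstacle I expect is the bookkeeping that ties the elimination rule, the concentration radius, and the zooming-number bound together with matching constants: one must choose the confidence radius at phase $m$ to be a small enough constant times $r_m$ that (i) it is covered by the concentration bound from $n_m \asymp r_m^{-2}\log T$ samples, (ii) the elimination rule guarantees every surviving cube sits inside $S(16 r_m)$ (or $S(c r_m)$ for the constant used in the definition of $N_r$) so that $N_{r_m}$ actually upper-bounds the survivor count, and (iii) no near-optimal cube is killed. Making these three constraints mutually consistent — and verifying the clean-event union bound spans all $O(\log T)$ phases and all $\lesssim r_m^{-d}$ cubes per phase without blowing up the failure probability past $2/T^6$ — is the technical heart; the regret summation itself is then a routine geometric-series estimate.
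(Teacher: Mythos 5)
Your clean event, elimination-correctness, and per-phase regret bounds match the paper's Lemmas \ref{lem:concen}--\ref{lem:eli} and are fine; the genuine gap is in the final step, where you relate the finest reached scale $r_M$ to $T$. You argue that the total number of pulls through phase $M$ is $\lesssim \sum_{m\le M} r_m^{-(d_z+2)}\log T \asymp r_M^{-(d_z+2)}\log T$, ``and this is $\le T$,'' hence $r_M^{-1}\lesssim (T/\log T)^{1/(d_z+2)}$. The inequality runs the wrong way: the constraint $\le T$ applies to the \emph{actual} number of pulls, which is only \emph{upper}-bounded by $r_M^{-(d_z+2)}\log T$; to conclude $r_M^{-(d_z+2)}\log T \lesssim T$ you would need a matching \emph{lower} bound on the pulls per phase, i.e.\ that $|\A_m| \gtrsim r_m^{-d_z}$. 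That is not guaranteed: $N_{r_m}$ only upper-bounds the survivor count, and on instances where elimination is far more aggressive than the $S(16r_m)$ containment suggests (say $O(1)$ surviving cubes per phase while $d_z>0$), the budget lasts until $r_M \asymp \sqrt{\log T/T}$, so your ``last term of the geometric series'' bound $r_M^{-(d_z+1)}\log T \asymp (T/\log T)^{(d_z+1)/2}\log T$ exceeds the claimed rate and for $d_z\ge 1$ is even worse than the trivial bound $T$. The same issue infects your batch-count justification $2^{M}\lesssim (T/\log T)^{1/(d_z+2)}$, although $M=\mathcal{O}(\log T)$ still holds for the weaker reason that batch $m$ consumes at least $n_m = 16\log T\cdot 4^{m-1}$ pulls.

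The repair is the paper's decomposition: fix a deterministic cutoff $B^* = 1 + \frac{\log (T/\log T)}{d_z+2}$, depending only on $T$ and $d_z$ and not on the realized run, bound the regret of batches $m\le B^*$ by the geometric sum $\sum_{m\le B^*} N_{r_m} n_m \cdot 16 r_m \lesssim 2^{(B^*-1)(d_z+1)}\log T$, and bound \emph{all} regret incurred after batch $B^*$ --- however many further batches the algorithm actually runs --- by $16\, r_{B^*} T$, since Lemma \ref{lem:eli} guarantees that every arm played after batch $B^*$ lies in a cube of edge length at most $r_{B^*}$ and hence has optimality gap at most $16 r_{B^*}$. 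At this $B^*$ both terms are $\lesssim T^{\frac{d_z+1}{d_z+2}}(\log T)^{\frac{1}{d_z+2}}$, no control of the last phase index is needed, and the communication claim follows directly as well: the algorithm can stop refining after $B^*$ batches and spend the remaining budget in the cleanup step, which is how the paper obtains the sharper count $\frac{\log T-\log\log T}{d_z+2}+2$.
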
 
While D-BLiN is efficient for batched Lipschitz bandits, its communication complexity is not optimal. We then propose a new edge-length sequence, which we call Appropriate Combined Edge-length Sequence (ACE Sequence) to improve the algorithm. The idea behind this sequence is that by appropriately combining some batches, the algorithm can achieve better communication bound without incurring increased regret. As we shall see, BLiN with ACE Sequence (A-BLiN) achieves regret rate $\widetilde{\mathcal{O}}\left(T^{\frac{d_z+1}{d_z+2}}\right)$ with only $\mathcal{O}(\log\log T)$ batches.
\begin{theorem}\label{thm:ablin}
    With probability exceeding $1-\frac{2}{T^6}$, the $T$-step total regret $R(T)$ of A-BLiN satisfies
    \begin{equation*}
        R(T)\lesssim T^\frac{d_z+1}{d_z+2}\cdot(\log T)^{\frac{1}{d_z+2}}\cdot\log\log T,
    \end{equation*}
    where $d_z$ is the zooming dimension of the problem instance. In addition, Algorithm \ref{alg_Op} only needs no more than $\mathcal{O}(\log\log T)$ rounds of communications to achieve this regret rate. 
\end{theorem}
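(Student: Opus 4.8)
The plan is to analyze A-BLiN by comparing it to D-BLiN (Theorem \ref{thm:dblin}) and showing that the ACE Sequence preserves the regret guarantee while collapsing the batch count from $\mathcal{O}(\log T)$ to $\mathcal{O}(\log\log T)$. First I would recall the core mechanism underlying the D-BLiN analysis: at each phase $m$ the algorithm maintains a collection of active cubes of edge length $r_m$, pulls each active cube enough times so that its empirical mean is accurate to within $\Theta(r_m)$ (up to log factors, which forces roughly $n_m \asymp r_m^{-2}\log T$ pulls per cube), then eliminates any cube whose empirical mean is more than a constant multiple of $r_m$ below the running maximum, and subdivides the survivors into cubes of edge length $r_{m+1}$. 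The key structural fact — which I expect is proved as a lemma before this theorem — is that with probability $1-2/T^6$ all confidence intervals are valid, and conditioned on that event every surviving cube at phase $m$ lies in $S(c\, r_m)$ for a constant $c$; hence the number of active cubes at phase $m$ is controlled by the zooming number $N_{r_m}\le C_z r_m^{-d_z}$. Summing the per-phase regret contribution (number of active cubes) $\times$ (pulls per cube) $\times$ (per-pull gap $O(r_m)$) $= O(N_{r_m}\cdot r_m^{-2}\log T\cdot r_m) = O(C_z\, r_m^{-d_z-1}\log T)$ over phases and balancing against the horizon constraint $\sum_m N_{r_m} n_m \le T$ yields the $\widetilde{O}(T^{(d_z+1)/(d_z+2)})$ bound; this is the content I would lift wholesale from the D-BLiN proof.

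Second, I would define the ACE Sequence precisely and show it does not hurt the regret. The idea is that the Doubling Sequence $r_m = 2^{-m+1}$ uses $\Theta(\log T)$ phases only because each step shrinks the edge length by a factor $2$; the bottleneck for regret is really the \emph{final} edge length $r_{\text{final}}\asymp (\log T/T)^{1/(d_z+2)}$, together with the requirement that we never ``jump'' so aggressively that the per-cube sample budget at an intermediate phase blows up. The ACE Sequence instead lets the edge length shrink faster — roughly squaring the reduction ratio each time, so that after $k$ phases the edge length is $\approx 2^{-2^k}$ — which reaches the target final scale in $\mathcal{O}(\log\log T)$ phases. I would verify two things: (i) the elimination argument still localizes survivors at phase $m$ to $S(c\,r_m)$, so the bound $N_{r_m}\le C_z r_m^{-d_z}$ on active cubes carries over verbatim — this only uses that the confidence width at phase $m$ is $\Theta(r_m)$ and is insensitive to how $r_m$ was reached; and (ii) the regret incurred during phase $m$ is still $O(C_z r_m^{-d_z-1}\log T)$, and because the $r_m$ now decrease geometrically in the \emph{exponent}, the sum $\sum_m r_m^{-d_z-1}$ is dominated (up to the $\log\log T$ number of terms) by its largest term at the final scale, giving the stated bound with the extra $\log\log T$ factor. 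The horizon budget $\sum_m N_{r_m} n_m \le T$ must be re-checked for the new sequence, but it is satisfied with room to spare precisely because faster shrinkage means fewer, not more, total pulls at coarse scales.

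Third, the communication count: each phase of BLiN corresponds to exactly one batch (reward observations are communicated at the end of each phase), so the number of batches equals the number of phases in the edge-length sequence. I would show the ACE Sequence has length $\mathcal{O}(\log\log T)$ by direct computation: if $\rho_m := -\log_2 r_m$ is the (negative log) edge length and the sequence is designed so that $\rho_{m+1}\approx 2\rho_m$, then reaching $\rho_{\text{final}} = \Theta(\log T)$ (equivalently $r_{\text{final}} = \Theta((\log T/T)^{1/(d_z+2)})$ up to the adjustment needed to make the final phase exhaust the remaining budget) takes $\log_2 \rho_{\text{final}} + O(1) = \log_2\log_2 T + O(1) = \mathcal{O}(\log\log T)$ steps. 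One subtlety to handle carefully is the very first and very last phases: the sequence must start at a constant edge length and the last phase must be ``rounded'' so that the total number of pulls hits $T$ exactly, and I would check that these boundary adjustments change the phase count by only $O(1)$ and the regret by only constant factors.

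The main obstacle, I expect, is item (ii) above — controlling the per-phase sample complexity $n_m$ under aggressive shrinkage. With the Doubling Sequence, moving from $r_m$ to $r_{m+1}=r_m/2$ keeps $n_{m+1}/n_m = 4$, so the budget grows smoothly; with the ACE Sequence, $r_{m+1}$ can be far smaller than $r_m$, so $n_{m+1}\asymp r_{m+1}^{-2}\log T$ could in principle be enormous relative to the number of surviving cubes times available time. The delicate point is that the ACE Sequence must be chosen so that at \emph{every} phase the product (active cubes $N_{r_m}$) $\times$ ($n_m$) stays within the remaining horizon; this is where ``appropriately combining batches'' does real work, and the proof of Theorem \ref{thm:ablin} will need a careful inductive accounting showing that the geometric-in-exponent schedule, combined with the $N_{r_m}\le C_z r_m^{-d_z}$ bound, keeps each phase affordable. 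Once that invariant is established, the regret sum and the $\log\log T$ batch count both follow by the same computations as in the D-BLiN analysis, with the telescoping now over a doubly-logarithmic number of terms.
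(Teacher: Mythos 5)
There is a genuine gap, located exactly at the point you flag as the ``main obstacle'' but never resolve: your claims (i) and (ii) are false under aggressive shrinkage, and the schedule you posit ($\rho_{m+1}\approx 2\rho_m$, i.e.\ $r_k\approx 2^{-2^k}$) is qualitatively the opposite of the ACE sequence (Definition \ref{def:ace}) and would not work. The elimination at the end of batch $m-1$ localizes the surviving region only to $S(8r_{m-1})$ — localization is at the \emph{parent} scale, since no sample at scale $r_m$ exists when $\A_m$ is formed. Hence the active-cube count in batch $m$ is bounded by $|\A_{m-1}^+|\left(r_{m-1}/r_m\right)^d\lesssim C_z r_{m-1}^{-d_z}\left(r_{m-1}/r_m\right)^d$, not by $N_{r_m}\lesssim r_m^{-d_z}$, so the per-batch regret is $\asymp C_z\log T\cdot r_{m-1}^{\,d+1-d_z}r_m^{-(d+2)}$ rather than your $C_z\log T\cdot r_m^{-(d_z+1)}$. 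Requiring this cross term to stay at the target level $T^{\frac{d_z+1}{d_z+2}}(\log T)^{\frac{1}{d_z+2}}$ forces, writing $\rho_m=-\log_2 r_m$, the recursion $\rho_m\le\eta\rho_{m-1}+c_1$ with $\eta=\frac{d+1-d_z}{d+2}<1$; admissible schedules therefore \emph{decelerate} toward the fixed point $\rho_\infty=\frac{1}{d_z+2}\log\frac{T}{\log T}$, i.e.\ $r_m$ never goes below the final scale $(\log T/T)^{1/(d_z+2)}$. An exponent-doubling schedule violates this: at the final scale, with $r_{m-1}=r_m^{1/2}$, its per-batch regret is of order $(T/\log T)^{\frac{d+d_z+3}{2(d_z+2)}}\log T$, which exceeds $T^{\frac{d_z+1}{d_z+2}}$ whenever $d\ge d_z$, and its per-batch budget likewise overruns the horizon. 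Your budget intuition (``faster shrinkage means fewer total pulls at coarse scales'') is also backwards, since $n_m\propto r_m^{-2}$ and the subdivision factor $(r_{m-1}/r_m)^d$ both inflate under fast shrinkage.

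The paper's actual mechanism is the reverse of what you describe: $r_m=2^{-\sum_{i\le m}c_i}$ with $c_{i+1}=\eta c_i$ and a single large first jump $c_1=\frac{d_z+1}{(d+2)(d_z+2)}\log\frac{T}{\log T}$, chosen so that the exponent $C_m=(\sum_{i<m}c_i)(d_z+1)+c_m(d+2)$ is constant in $m$; every batch then contributes exactly $128C_z\,T^{\frac{d_z+1}{d_z+2}}(\log T)^{\frac{1}{d_z+2}}$, and the $\log\log T$ factor in the regret comes from summing $B^*\approx\log\log T/\log\frac{d+2}{d+1-d_z}$ equal per-batch terms — not from a largest-term domination as you claim. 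The batch count is $\mathcal{O}(\log\log T)$ because the leftover regret after $B$ batches is $8r_BT\le 8T^{\frac{d_z+1}{d_z+2}}(\log T)^{\frac{1}{d_z+2}}T^{\eta^B/(d_z+2)}$, which is $\mathcal{O}(1)$-inflated once $\eta^B\le(d_z+2)/\log T$ — not because the edge length races to $2^{-2^k}$ (it never does). So the missing idea is the parent-scale accounting that dictates the ACE sequence; with your proposed sequence both the regret invariant and the horizon constraint fail.
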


As a comparison, seminal works \cite{kleinberg2008multi,slivkins2011contextual,bubeck2011x} show that the optimal regret bound for Lipschitz bandits without communications constraints, where the reward observations are immediately observable after each arm pull, is $R(T)\lesssim T^{\frac{d_z+1}{d_z+2}}\cdot(\log T)^{\frac{1}{d_z+2}}$.
Therefore, A-BLiN achieves optimal regret rate of Lipschitz bandits by using very few batches.

Furthermore, we provide a theoretical lower bound for Lipschitz bandits with batched feedback.

\begin{theorem}
    \label{thm:lower-adaptive-dz-intro}
    Consider Lipschitz bandit problems with time horizon $T$, ambient dimension $d$ and zooming dimension $d_z\leq d$. If $B$ rounds of communications are allowed, then for any policy $\pi$, there exists a problem instance with zooming dimension $d_z$ such that 
	\begin{align*} 
	    \E \left[ R_T(\pi) \right] 
	    \geq \frac{1}{512B^2}T^{\frac{1-\frac{1}{d_z+2}}{1-\left(\frac{1}{d_z+2}\right)^B}}.
	\end{align*}
\end{theorem}

In the lower bound analysis, we use a ``linear-decaying extension'' technique to construct problem instances with zooming dimension $d_z$. To the best of our knowledge, our construction provides the first minimax lower bound for Lipschitz bandits where the zooming dimension $d_z$ is explicitly different from the ambient dimension $d$. As a result of Theorem \ref{thm:lower-adaptive-dz-intro}, we can derive the minimum rounds
of communications needed to achieve optimal regret bound for
Lipschitz bandit problem, which is stated in Corollary \ref{coro}. The proof of Corollary \ref{coro} is deferred to Appendix \ref{app:coro}.

\begin{corollary}\label{coro}
    For Lipschitz bandit problems with ambient dimension $d$, zooming dimension $d_z\leq d$ and time horizon $T$, any algorithm needs $\Omega(\log\log T)$ rounds of communications to achieve the optimal regret rate $\mathcal{O}\left(T^{\frac{d_z+1}{d_z+2}}\right)$. 
\end{corollary} 

Consequently, BLiN algorithm is optimal in terms of both regret and communication.
	
\subsection{Related Works} 
    The history of the Multi-Armed Bandit (MAB) problem can date back to Thompson \cite{thompson1933likelihood}. Solvers for this problems include the UCB algorithms \cite{lai1985asymptotically, agrawal1995sample, auer2002finite}, the arm elimination method \cite{even2006action, perchet2013multi, salgia2021domain}, the $\epsilon$-greedy strategy \cite{auer2002finite,sutton2018reinforcement}, the exponential weights and mirror descent framework \cite{auer2002nonstochastic}. 
    
    Recently, with the prevalence of distributed computing and large-scale field experiments, the setting of batched feedback has captured attention (e.g., \cite{cesa2013online}). Perchet et al. \cite{perchet2016batched} mainly consider batched bandit with two arms, and a matching lower bound for static grid is proved. 
    It was then generalized by Gao et al. \cite{gao2019batched} to finite-armed bandit problems. In their work, the authors designed an elimination method for finite-armed bandit problem and proved matching lower bounds for both static and adaptive grid. Soon afterwards, Zhang et al. \cite{zhang2020inference} studies inference for batched bandits. Esfandiari et al. \cite{esfandiari2021regret} studies batched linear bandits and batched adversarial bandits. Han et al. \cite{han2020sequential} and Ruan et al. \cite{ruan2021linear} provide solutions for batched contextual linear bandits. Li and Scarlett \cite{li2022gaussian} studies batched Gaussian process bandits. Batched dueling bandits have also been studied by Agarwal et al. \cite{agarwal2022batched}. Parallel to the regret control regime, best arm identification with limited number of batches was studied in \cite{agarwal2017learning} and \cite{jun2016top}. Top-$k$ arm identification in the collaborative learning framework is also closely related to the batched setting, where the goal is to minimize the number of iterations (or communication steps) between agents. In this setting, tight bounds have been obtained in the recent works \cite{tao2019collaborative,karpov2020collaborative}.
    Yet the problem of Lipschitz bandit with communication constraints remains unsolved.
    
    The Lipschitz bandit problem is important in its own stand. 
    The Lipschitz bandit problem was introduced as ``continuum-armed bandits'' \cite{agrawal1995continuum}, where the arm space is a compact interval. Along this line, bandits that are Lipschitz (or H\"older) continuous have been studied. For this problem, Kleinberg \cite{kleinberg2005nearly} proves a $\Omega (T^{2/3})$ lower bound and introduced a matching algorithm. Under extra conditions on top of Lipschitzness, regret rate of $\widetilde{\mathcal{O}} (T^{1/2})$ was achieved \cite{auer2007improved, cope2009regret}. For general (doubling) metric spaces, the Zooming bandit algorithm \cite{kleinberg2008multi} and the Hierarchical Optimistic Optimization (HOO) algorithm \cite{bubeck2011x} were developed. In more recent years, some attention has been focused on Lipschitz bandit problems with certain extra structures. To name a few, Bubeck et al. \cite{bubeck2011lipschitz} study Lipschitz bandits for differentiable rewards, which enables algorithms to run without explicitly knowing the Lipschitz constants. Wang et al. \cite{10.1145/3412815.3416885} studied discretization-based Lipschitz bandit algorithms from a Gaussian process perspective. Magureanu et al. \cite{magureanu2014lipschitz} derive a new concentration inequality and study discrete Lipschitz bandits. The idea of robust mean estimators \cite{bickel1965some, alon1999space, bubeck2013bandits} was applied to the Lipschitz bandit problem to cope with heavy-tail rewards, leading to the development of a near-optimal algorithm for Lipschitz bandit with heavy-tailed rewards \cite{lu2019optimal}. Wanigasekara and Yu \cite{christina2019nonparametric} studied Lipschitz bandits where a clustering is used to infer the underlying metric. Contextual Lipschitz bandits have been studied by Slivkins \cite{slivkins2011contextual}. Contextual bandits with continuous actions have also been studied by Krishnamurthy et al. \cite{krishnamurthy2020contextual} and Majzoubi et al. \cite{majzoubi2020efficient} through a smoothness approach. Yet all of the existing works for Lipschitz bandits assume that the reward sample is immediately observed after each arm pull, and none of them solve the Lipschitz bandit problem with communication constraints.

This paper is organized as follows. In section \ref{s_alg}, we introduce the BLiN algorithm and give a visual illustration of the algorithm procedure. In section \ref{s_reg}, we prove that BLiN with ACE Sequence achieves the optimal regret rate using only $\mathcal{O}\left(\log\log T\right)$ rounds of communications. Section \ref{s_lb} provides information-theoretical lower bounds for Lipschitz bandits with communication constraints, which shows that BLiN is optimal in terms of both regret and rounds of communications. Experimental results are presented in Section \ref{exp}.

\section{Algorithm} \label{s_alg} 

With communication constraints, the agent's knowledge about the environment does not accumulate within each batch.
This characteristic of the problem suggests a `uniform' type algorithm -- we shall treat each step within the same batch equally. Following this intuition, in each batch, we uniformly play the remaining arms, and then eliminate arms of low reward after the observations are communicated. Next we describe the uniform play rule and the arm elimination rule.

\textbf{Uniform Play Rule:}
At the beginning of each batch $m$, a collection of subsets of the arm space $ \A_m  = \{ C_{m,1}, C_{m,2}, \cdots, C_{m, |\A_m|} \} $ is constructed. This collection of subset $\A_m$ consists of standard cubes, and all cubes in $\A_m$ have the same edge length $r_m$. We will detail the construction of $\A_m$ when we describe the arm elimination rule. We refer to cubes in $\mathcal{A}_m$ as active cubes of batch $m$.

During batch $m$, each cube in $\A_m$ is played
\begin{equation}\label{eq:def-nm}
    n_m \triangleq \frac{16 \log T}{r_m^2}
\end{equation}
times, where $T$ is the total time horizon. More specifically, within each $ C \in \A_m $, arms $x_{C,1}, x_{C,2}, \cdots, x_{C, n_m} \in C$ are played.\footnote{One can arbitrarily play $ x_{C,1}, x_{C,2}, \cdots, x_{C, n_m} $ as long as $ x_{C,i} \in C $ for all $i$.} The reward samples $ \left\{ y_{C,1}, y_{C,2}, \cdots, y_{C, n_m} \right\}_{C \in \A_m} $ corresponding to $ \left\{ x_{C,1}, x_{C,2}, \cdots, x_{C, n_m} \right\}_{C \in \A_m} $ will be collected at the end of the this batch. 

\textbf{Arm Elimination Rule:} 
At the end of batch $ m $, information from the arm pulls is collected, and we estimate the reward of each $C \in \A_m$ by $\wh{\mu}_{ m } (C) = \frac{ 1 }{n_m} \sum_{i=1}^{n_m} y_{C,i} $. Cubes of low estimated rewards are then eliminated, according to the following rule: a cube $C \in \A_m$ is eliminated if $\wh{\mu}_{m}^{\max} - \wh{\mu}_{m} (C) \ge 4 r_m $, where $ \wh{\mu}_{m}^{\max} := \max_{ C \in \A_m } \wh{\mu}_{m} (C) $.
After necessary removal of ``bad cubes'', each cube in $\A_m$ that survives the elimination is equally partitioned into $\left(\frac{r_m}{r_{m+1}}\right)^d$ subcubes of edge length $ r_{m+1} $, where $\{r_{m}\}_m$ is predetermined sequence to be specified soon. These cubes (of edge length $ r_{m+1} $) are collected to construct $\A_{m+1}$, and the learning process moves on to the next batch. Appropriate rounding may be required to ensure the ratio $\frac{r_m}{r_{m+1}}$ is an integer. See Remark \ref{remark:race} for more details. 

The learning process is summarized in Algorithm \ref{alg_Op}. 




\begin{algorithm}[ht]
	\caption{Batched Lipschitz Narrowing (BLiN)} 
	\label{alg_Op} 
	\begin{algorithmic}[1]  
		\STATE \textbf{Input.} Arm set $\A=[0,1]^d$; time horizon $T$.
		\STATE \textbf{Initialization} Number of batches $B$; Edge-length sequence $\{r_m\}_{m=1}^{B+1}$; The first grid point $t_0=0$; Equally partition $\mathcal{A}$ to $r_1^d$ subcubes and define $\mathcal{A}_{1}$ as the collection of these subcubes. 
		\STATE Compute $n_m=\frac{16\log T}{r_m^2}$ for $m=1,\cdots,B+1$.
		
		\FOR{$m=1,2,\cdots,B$}
			\STATE For each cube $C\in\mathcal{A}_m$, play arms $x_{C,1},\cdots x_{C,n_m}$ from $C$.
			\STATE Collect the rewards of all pulls up to $t_m$. Compute the average payoff $\wh{\mu}_m(C)=\frac{\sum_{i=1}^{n_m}y_{C,i}}{n_m}$ for each cube $C\in\mathcal{A}_m$.
			Find  $\wh{\mu}_m^{max}=\max_{C\in\mathcal{A}_m}\wh{\mu}(C)$.
			\STATE For each cube $C\in\mathcal{A}_m$, eliminate $C$ if $\wh{\mu}_m^{max}-\wh{\mu}_m(C)>4r_m$. Let $\mathcal{A}_m^+$ be set of cubes not eliminated.
			\STATE Compute $t_{m+1}=t_m+(r_{m}/r_{m+1})^d\cdot|\mathcal{A}_m^+|\cdot n_{m+1}$. If $t_{m+1}\geq T$ or $m=B$ then \textbf{break}.
			\STATE Equally partition each cube in $\mathcal{A}_m^+$ into $\left(r_m/r_{m+1}\right)^d$ subcubes and define $\mathcal{A}_{m+1}$ as the collection of these subcubes. /*\textit{See Remark \ref{remark:race} for more details on cases where $\left(r_m/r_{m+1}\right)^d$ is not an integer.}*/
		\ENDFOR
    \STATE \textbf{Cleanup:} Arbitrarily play the remaining arms until all $T$ steps are used.  
	\end{algorithmic} 
\end{algorithm} 
\begin{remark}[Time and space complexity] 
    \label{rk:complexity}
    The time complexity of our algorithm is $\mathcal{O}(T)$, which is better than the state of the art $\mathcal{O}(T\log T)$ in \cite{bubeck2011x}. This is because that the running time of a batch $j$ is of order $\mathcal{O}(l_j)$, where $l_j=t_j-t_{j-1}$ is number of samples in batch $j$. Since $\sum_j l_j=T$, the time complexity of BLiN is $\mathcal{O}(T)$. Besides, the space complexity of our algorithm is $\mathcal{O}\left(T^{\frac{d_z+1}{d_z+2}} (\log T)^{-\frac{d_z+1}{d_z+2}}\right)$, which also improves the best known space complexity. This is because we do not need to store information of cubes in previous batches. The space complexity analysis is deferred to Appendix \ref{app:space}.
\end{remark}

The following theorem gives regret and communication upper bound of BLiN with Doubling Edge-length Sequence $r_m=2^{-m+1}$ (see Appendix \ref{proof:dblin} for a proof). 
Note that this result implies Theorem \ref{thm:dblin}.

\begin{theorem}\label{main_t_d}
    With probability exceeding $1-\frac{2}{T^6}$, the $T$-step total regret $R(T)$ of BLiN with Doubling Edge-length Sequence (D-BLiN) satisfies
    \begin{equation*}
        R(T)\leq(512C_z+16)\cdot T^\frac{d_z+1}{d_z+2} (\log T)^{\frac{1}{d_z+2}},
    \end{equation*}
    where $d_z$ is the zooming dimension of the problem instance. In addition, D-BLiN only needs no more than $\frac{\log T-\log\log T}{d_z + 2} + 2$ rounds of communications to achieve this regret rate.
\end{theorem}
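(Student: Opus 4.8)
The plan is to establish two things: a high-probability ``clean event'' on which all empirical cube-averages are close to the true rewards, and then, conditioned on that event, a geometric argument bounding the per-batch regret by the zooming number and summing over batches. I would begin by defining, for each batch $m$ and each active cube $C \in \A_m$, the event that $|\wh{\mu}_m(C) - \mu(x_{C,\cdot})|$-type deviations are at most $r_m$; since each cube is played $n_m = 16\log T / r_m^2$ times and the noise is standard Gaussian, a Hoeffding/sub-Gaussian tail bound gives that a single such deviation exceeds $2\sqrt{\log T / n_m} = r_m/2$ with probability at most $2\exp(-2\log T) \cdot (\text{const})$; a union bound over all cubes ever activated (at most $O(T)$ of them, since each contributes at least one pull) gives the stated $1 - 2/T^6$ after adjusting constants. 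On this clean event, one shows by induction on $m$ that (i) no cube containing a near-optimal arm is ever eliminated, and in particular the optimal arm's cube survives every round, and (ii) every surviving cube $C \in \A_m$ satisfies $\Delta_x \le c\, r_m$ for all $x \in C$ with some absolute constant $c$ (I'd expect $c$ around $12$ or $16$): a cube survives only if $\wh{\mu}_m^{\max} - \wh{\mu}_m(C) \le 4 r_m$, and combining this with the clean-event deviation bound, Lipschitzness within a cube of diameter $r_m$, and the fact that $\wh{\mu}_m^{\max}$ is close to $\mu^*$ (because the optimal cube survives), yields the gap bound.

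The second half is the counting argument. Because every $x$ in a surviving cube of $\A_m$ has $\Delta_x \le c\, r_m$, that cube is contained in $S(16 r_{m-1})$ (or $S(16 r_m)$ with the right indexing), so by definition of the zooming number the number of such cubes is at most $N_{r_{m}} \le C_z r_m^{-d_z}$ — here is where the doubling edge-length sequence $r_m = 2^{-m+1}$ matters, so that $r_m$ is always a power of $2$ and the standard-cube packing definition of $N_r$ applies directly. The regret incurred during batch $m$ is at most $|\A_m| \cdot n_m \cdot (\text{max gap in batch } m) \lesssim N_{r_m} \cdot \frac{\log T}{r_m^2} \cdot r_m = N_{r_m} \frac{\log T}{r_m} \lesssim C_z \log T \cdot r_m^{-(d_z+1)}$. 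Summing this over $m = 1, \dots, m^*$ for a cutoff batch index $m^*$ gives a geometric sum dominated by its last term, $\lesssim C_z \log T \cdot r_{m^*}^{-(d_z+1)}$, while the arms played after batch $m^*$ (in the cleanup phase, or equivalently batches beyond $m^*$) each have gap $\lesssim r_{m^*}$, contributing $\lesssim T r_{m^*}$. Optimizing the cutoff by balancing $\log T \cdot r_{m^*}^{-(d_z+1)}$ against $T r_{m^*}$ gives $r_{m^*} \asymp (\log T / T)^{1/(d_z+2)}$ and total regret $\lesssim (\log T)^{1/(d_z+2)} T^{(d_z+1)/(d_z+2)}$; tracking constants carefully through the deviation bound, the gap constant $c$, and the geometric series should land at the claimed $528$. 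The communication bound is immediate: the number of batches needed before $r_{m^*}$ drops below the optimal scale is $m^* \le \frac{\log T - \log\log T}{d_z+2} + O(1)$ because $r_m$ halves each batch — but one must check that the total number of pulls consumed by batches $1,\dots,m^*$ does not exceed $T$ (it doesn't, by the same balancing), so the algorithm really does reach batch $m^*$.

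The main obstacle, and the step I would spend the most care on, is the inductive argument that surviving cubes have gaps controlled by $r_m$ while simultaneously guaranteeing the optimal cube is never eliminated: these two claims are mutually dependent (the gap bound on surviving cubes uses that $\wh{\mu}_m^{\max} \approx \mu^*$, which needs a good cube to have survived; the survival of a good cube uses the elimination threshold $4 r_m$ against the deviation bound), so the induction has to be set up so that both are carried together and the constants in the elimination rule ($4 r_m$), the clean-event width ($r_m/2$ per cube, from the choice $n_m = 16\log T/r_m^2$), and the cube diameter ($r_m$ in $\|\cdot\|_\infty$) fit together with the $16$ in the definition of $N_r$. A secondary technical point is the rounding when $r_m/r_{m+1}$ is not an integer, deferred to Remark \ref{remark:race}, and the bookkeeping of which batch index the for-loop actually exits on versus the idealized cutoff $m^*$; I would handle the latter by noting the cleanup phase only plays arms in cubes that survived to the last completed batch, whose gaps are already bounded.
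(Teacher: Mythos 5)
Your proposal follows essentially the same route as the paper's proof: a Hoeffding-plus-union-bound clean event, survival of the optimal cube, the bound $\Delta_x\le 16r_m$ on surviving cubes so that $|\A_m|\le N_{r_m}\le C_z r_m^{-d_z}$, per-batch regret $\lesssim N_{r_m}\log T/r_m$ summed as a geometric series, and a cutoff batch $B^*$ balancing this against $16r_{B^*}T$, which yields both the $528\,T^{\frac{d_z+1}{d_z+2}}(\log T)^{\frac{1}{d_z+2}}$ bound and the $\frac{\log T-\log\log T}{d_z+2}+2$ communication count. The only discrepancy is arithmetic in the clean event: to reach $1-2/T^6$ one takes per-cube deviation width $\sqrt{16\log T/n_m}=r_m$ (probability $2T^{-8}$ per cube, then a union bound over at most $T$ cubes), rather than the width $r_m/2$ with tail $2e^{-2\log T}$ you quote, but this is exactly the constant adjustment you anticipate.
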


Although D-BLiN efficiently solves batched Lipschitz bandits, its simple partition strategy leads to suboptimal communication complexity. Now we show that by approporiately combining some batches, BLiN achieves the optimal communication bound, without incurring additional regret. Specifically, we introduce the following edge-length sequence, which we call ACE Sequence. When using ACE Sequence, the regret of each batch is of order $\widetilde{\mathcal{O}}\left(T^{\frac{d_z+1}{d_z+2}}\right)$. Thus, the length of any batch cannot be increased without affecting the optimal regret rate. This implies that the ACE Sequence is optimal in terms of both regret and communication. See the proof of Theorem \ref{main_t_opt} for more details.




\begin{definition}
    \label{def:ace}
    For a problem with ambient dimension $d$, zooming dimension $d_z$ and time horizon $T$, we denote $c_1=\frac{d_z+1}{(d+2)(d_z+2)} \log \frac{T}{\log T}$ and $c_{i+1} = {\eta c_i }$ for any $i\ge 1$, where $\eta=\frac{d+1-d_z}{d+2}$. Then the Appropriate Combined Edge-length (ACE) Sequence $\{r_m\}$ is defined by $r_m=2^{-\sum_{i=1}^mc_i}$ for any $m\geq1$.
\end{definition}
    




Theorem \ref{main_t_opt} states that BLiN with ACE Sequence (A-BLiN) obtains an improved communication complexity, thus proves Theorem \ref{thm:ablin}.

\begin{theorem}\label{main_t_opt}
    With probability exceeding $1-\frac{2}{T^6}$, the $T$-step total regret $R(T)$ of Algorithm \ref{alg_Op} satisfies
    \begin{equation}\label{up_bound}
        R(T)\leq\left(\frac{128C_z}{\log\frac{d+2}{d+1-d_z}}\cdot\log\log T+16\right)\cdot T^\frac{d_z+1}{d_z+2}(\log T)^{\frac{1}{d_z+2}},
    \end{equation}
    where $d_z$ is the zooming dimension of the problem instance. In addition, Algorithm \ref{alg_Op} with ACE sequence only needs no more than $\frac{\log\log T}{\log\frac{d+2}{d+1-d_z}}+1$ rounds of communications to achieve this regret rate. 
\end{theorem}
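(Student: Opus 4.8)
The plan is to run the standard successive-elimination (``zooming'') analysis for Lipschitz bandits through Algorithm~\ref{alg_Op}, and then to exploit the precise algebra of the ACE sequence (Definition~\ref{def:ace}) to show that every batch contributes the \emph{same} amount of regret, so that $O(\log\log T)$ batches already reach the optimal resolution. First I would set up the clean event $\mathcal E$: for every batch $m$ and every cube $C$ active in batch $m$, $\big|\wh\mu_m(C)-\tfrac1{n_m}\sum_{i=1}^{n_m}\mu(x_{C,i})\big|\le r_m$. Since the within-cube pulls $x_{C,1},\dots,x_{C,n_m}$ are fixed before that batch's noise is revealed, $\wh\mu_m(C)$ minus its conditional mean is an average of $n_m$ i.i.d.\ $\mathcal N(0,1)$ variables, so the failure probability for a single cube is at most $2\exp(-n_m r_m^2/2)=2T^{-8}$ by the calibration $n_m=16\log T/r_m^2$. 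A union bound over all standard cubes of each edge length (at most $r_m^{-d}\le T$ of them) and over the $B=O(\log T)$ batches gives $\Pr(\mathcal E)\ge 1-2/T^6$. Everything below is argued on $\mathcal E$.

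Next come the structural facts. A standard cube of edge $r_m$ has $\ell_\infty$-diameter $r_m$, so Lipschitzness gives $\mu_{\max}(C)-\mu_{\min}(C)\le r_m$ and hence $|\wh\mu_m(C)-\mu_{\max}(C)|\le 2r_m$ on $\mathcal E$. Combining this with the elimination threshold $4r_m$ I would show: (i) a cube containing a maximizer $x^*$ is never eliminated, so $\wh\mu_m^{\max}\ge\mu^*-2r_m$ in every batch; and (ii) any cube surviving batch $m$ lies in $S(8r_m)\subseteq S(16r_m)$, hence so does every cube of $\A_{m+1}$. Therefore the cubes of $\A_{m+1}$ are standard cubes of edge $r_{m+1}$ that subdivide at most the $N_{r_m}$ parent cubes permitted by the $r_m$-zooming number, so $|\A_{m+1}|\le N_{r_m}\,(r_m/r_{m+1})^d\le C_z\,r_m^{-d_z}(r_m/r_{m+1})^d$ (for $m=0$ take $r_0:=1$ and $S(16)=\A$).

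Now the heart of the argument. Every arm pulled in batch $m$ lies in a cube of $\A_m\subseteq S(8r_{m-1})$, so batch $m$ incurs regret at most $n_m|\A_m|\cdot 8r_{m-1}$; substituting $n_m=16\log T/r_m^2$, the bound on $|\A_m|$, and $r_m=r_{m-1}2^{-c_m}$ turns this into $128C_z\log T\cdot 2^{(d_z+1)\sum_{i<m}c_i+(d+2)c_m}$. The key computation is that the ACE choice $c_{i+1}=\eta c_i$ with $\eta=\tfrac{d+1-d_z}{d+2}$ (equivalently $(d+2)\eta=d+1-d_z$, so $1-\eta=\tfrac{d_z+1}{d+2}$) makes the exponent collapse to a constant: $(d_z+1)\sum_{i=1}^{m-1}c_i+(d+2)c_m=(d+2)c_1$ for \emph{every} $m$, and $(d+2)c_1=\tfrac{d_z+1}{d_z+2}\log\tfrac{T}{\log T}$. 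Hence each batch contributes at most $128C_z(\log T)^{1/(d_z+2)}T^{(d_z+1)/(d_z+2)}$, independent of $m$ — this batch-independence is exactly what the ACE sequence buys, at the price of a multiplicative factor equal to the number of batches. Since $\{c_i\}$ is geometric with ratio $\eta<1$, $\sum_{i=1}^m c_i=\tfrac{1-\eta^m}{d_z+2}\log\tfrac{T}{\log T}$, so after about $\log\log T/\log\tfrac{d+2}{d+1-d_z}$ batches we have $\eta^m=O(1/\log T)$ and $r_m=\Theta\big((\log T/T)^{1/(d_z+2)}\big)$, the optimal resolution. Beyond that point the $\le T$ remaining (``cleanup'') pulls sit in cubes of gap $\le 8r_m\le 8e\,(\log T/T)^{1/(d_z+2)}$, contributing at most $8e\,(\log T)^{1/(d_z+2)}T^{(d_z+1)/(d_z+2)}$. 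Adding the $\le \log\log T/\log\tfrac{d+2}{d+1-d_z}$ per-batch bounds to the cleanup bound gives the regret bound \eqref{up_bound}, and the number of communications equals the number of batches (plus the final one), namely $\le \log\log T/\log\tfrac{d+2}{d+1-d_z}+1$.

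The main obstacle I anticipate is twofold. The first is getting the ACE algebra to land exactly — verifying the exponent identity above and checking that the geometric partial sums reach $\tfrac1{d_z+2}\log\tfrac{T}{\log T}$ to within a constant factor in only $O(\log\log T)$ steps, which is where the particular values of $c_1$ and $\eta$ are forced. The second, and genuinely fiddly, part is the bookkeeping at the boundary between ``running $B$ batches'' and ``$T$ pulls exhausted'': one must argue that the for-loop is truncated only after the resolution has already reached $\Theta((\log T/T)^{1/(d_z+2)})$ (because by then a single full batch already costs $\gtrsim T$ pulls), so that the truncated/cleanup phase can be charged the clean $8e$-term and the number of completed batches is genuinely $\le \log\log T/\log\tfrac{d+2}{d+1-d_z}$, which is what produces the precise constants in \eqref{up_bound}. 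A minor point to handle with care is that the union bound in the clean event must range over all standard cubes at each scale rather than over the data-dependent active set.
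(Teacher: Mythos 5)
Your proposal follows essentially the same route as the paper: the same clean event with $n_m r_m^2 = 16\log T$ calibration, the same structural lemmas (optimal cube survives; surviving cubes lie in $S(8r_{m-1})$, giving $|\A_m|\le C_z r_{m-1}^{-d_z}(r_{m-1}/r_m)^d$), the same per-batch bound whose exponent collapses to $(d+2)c_1$ under the ACE recursion, and the same split at $B^*\approx \log\log T/\log\frac{d+2}{d+1-d_z}$ with the tail charged $8r_{B^*}T\le 8e\,T^{\frac{d_z+1}{d_z+2}}(\log T)^{\frac{1}{d_z+2}}$; the truncation bookkeeping you worry about is handled in the paper simply by noting the per-batch bound holds vacuously for incomplete or nonexistent batches and that every pull after batch $B^*$ has gap at most $8r_{B^*}$, so no argument about when the loop is truncated is needed.

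The one concrete flaw is your union bound: you propose to union over \emph{all} standard cubes at scale $r_m$ and claim there are at most $r_m^{-d}\le T$ of them, but since $r_m\ge (\log T/T)^{1/(d_z+2)}$ this count is $(T/\log T)^{d/(d_z+2)}$, which exceeds $T$ (and breaks the $1-2/T^6$ guarantee) whenever $d$ is large relative to $d_z+2$. The paper instead unions only over the active cubes, whose number is at most $T$ because each active cube in a completed batch is played at least once, and handles the data-dependence of $\A_m$ by conditioning on the previous batches (the batch-$m$ estimates use fresh noise). With that standard fix your argument goes through and matches the paper's constants.
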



The partition and elimination process of a real A-BLiN run is in Figure \ref{mlos}. 
In the $i$-th subgraph, the white cubes are those remaining after the $(i-1)$-th batch. In this experiment, we set $\mathcal{A}=[0,1]^2$, and the optimal arm is $x^*=(0.8,0.7)$. Note that $x^*$ is not eliminated during the game. More details of this experiment are in Section \ref{exp}.
\begin{figure*}[ht]
    \centering
    \subfloat{ 
    	\includegraphics[width=4.7cm]{./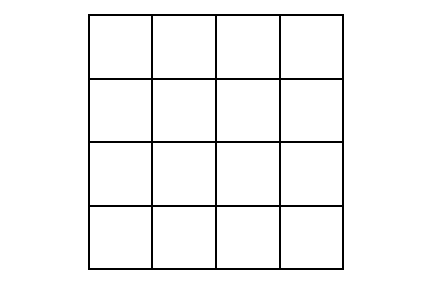}
    }
    \hspace{-15mm}
	\subfloat{
		\includegraphics[width=4.7cm]{./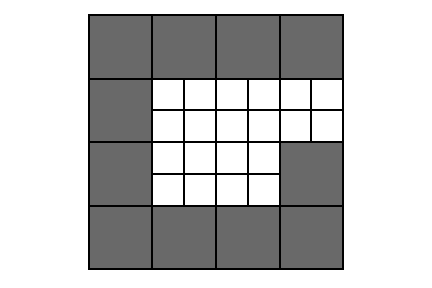}
	}
    \hspace{-15mm}
	\subfloat{
		\includegraphics[width=4.7cm]{./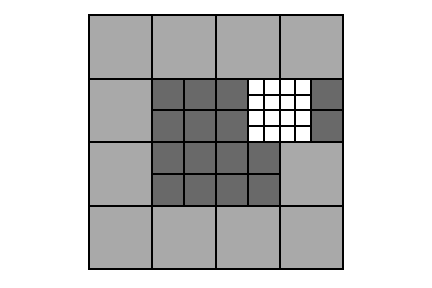}
	}
    \hspace{-15mm}
	\subfloat{
		\includegraphics[width=4.7cm]{./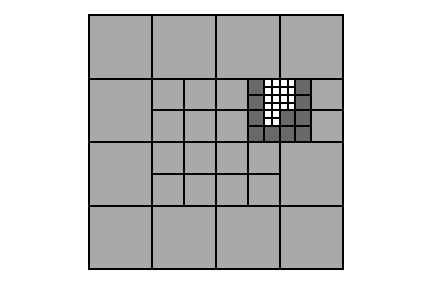}
	}
	\caption{Partition and elimination process of A-BLiN. The $i$-th subfigure shows the pattern before the $i$-th batch, which is from a real A-BLiN run on the reward function defined in Section \ref{exp}. 
	Dark gray cubes are those eliminated in the most recent batch, while the light gray ones are those eliminated in earlier batches. 
	For the total time horizon $T= 80000$, A-BLiN needs $4$ rounds of communications. For this experiment, $r_1=\frac{1}{4},\;r_2=\frac{1}{8},\;r_3=\frac{1}{16},\;r_4=\frac{1}{32},\;$ which is the ACE sequence (rounded as in Remark \ref{remark:race}) for $ d = 2 $ and $d_z = 0$.} 
	\label{mlos} 
\end{figure*}

The definition of the ACE Sequence relies on the zooming dimension $d_z$. If $d_z$ is not known ahead of time, we recommend two ways to proceed. 1) The player can apply BLiN with Doubling Edge-length Sequence $r_m=2^{-m+1}$ (D-BLiN). Theorem \ref{main_t_d} shows that D-BLiN achieves optimal regret rate $\widetilde{\mathcal{O}}\left(T^{\frac{d_z+1}{d_z+2}}\right)$ by using $\mathcal{O}(\log T)$ batches. Although the rate $\mathcal{O}(\log T)$ is not optimal, it is good enough for total time horizon $T$ that is not too large, as shown in the experimental results in Appendix \ref{app:exp-dblin}. 2) If an upper bound $d_u$ of the zooming dimension is known, that is, $d_z\leq d_u\leq d$, then the player can apply A-BLiN by using $d_u$ to define the ACE Sequence. Theorem \ref{main_t_opt} yields that A-BLiN with $d_u$ achieves regret rate $\widetilde{\mathcal{O}}\left(T^{\frac{d_u+1}{d_u+2}}\right)$ by using $\mathcal{O}(\log\log T)$ batches.

\section{Regret Analysis of A-BLiN} \label{s_reg}


In this section, we provide regret analysis for A-BLiN. The highlight of the finding is that $\mathcal{O}(\log\log T)$ batches are sufficient to achieve optimal regret rate of $ \widetilde{\mathcal{O}} \left( T^{\frac{d_z + 1}{d_z + 2}} \right) $, as summarized in Theorem \ref{main_t_opt}. 


To prove Theorem \ref{main_t_opt}, we first show that the estimator $ \wh{\mu} $ is concentrated to the true expected reward $ \mu $ (Lemma \ref{lem:concen}), and the optimal arm survives all eliminations with high probability (Lemma \ref{nel}).
    In the following analysis, we let $B_{\stop}$ be the total number of batches of the BLiN run.

\begin{lemma} 
    \label{lem:concen} 
    Define 
    \begin{align*} 
        \mathcal{E} :=  \;  
        \Bigg\{|\mu(x)-\wh{\mu}_m(C)| \leq r_m + \sqrt{\frac{ 16 \log T}{n_m}}, \; \forall 1 \le m \le B_{\stop}-1, \; \forall C \in \A_m, \;\forall x\in C\Bigg\} . 
    \end{align*} 
    It holds that $ \Pr \left( \mathcal{E} \right) \ge 1 - 2 T^{-6} $. 
\end{lemma}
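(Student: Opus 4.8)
The plan is to prove Lemma \ref{lem:concen} by a two-step argument: first bound the deviation of the \emph{empirical mean of the noise} within each cube using a Gaussian tail bound, then convert this into a bound on $|\mu(x) - \wh\mu_m(C)|$ by paying the extra $r_m$ term for the spatial variation of $\mu$ across a cube of edge length $r_m$. Recall that for a cube $C \in \A_m$ we have $\wh\mu_m(C) = \frac{1}{n_m}\sum_{i=1}^{n_m} y_{C,i}$ with $y_{C,i} = \mu(x_{C,i}) + \epsilon_{C,i}$, so $\wh\mu_m(C) = \frac{1}{n_m}\sum_{i=1}^{n_m}\mu(x_{C,i}) + \frac{1}{n_m}\sum_{i=1}^{n_m}\epsilon_{C,i}$.

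First I would fix a batch index $m$, a cube $C \in \A_m$, and control $\xi_{m,C} := \frac{1}{n_m}\sum_{i=1}^{n_m}\epsilon_{C,i}$. Since the $\epsilon_{C,i}$ are i.i.d.\ $\mathcal{N}(0,1)$, the average $\xi_{m,C}$ is $\mathcal{N}(0, 1/n_m)$, so the standard Gaussian tail bound gives $\Pr\!\left(|\xi_{m,C}| \ge \sqrt{16\log T / n_m}\right) \le 2\exp(-8\log T) = 2 T^{-8}$. Next, for any $x \in C$ and any $i$, Lipschitzness and $\mathrm{diam}_{\|\cdot\|_\infty}(C) = r_m$ give $|\mu(x) - \mu(x_{C,i})| \le r_m$, hence $\left|\mu(x) - \frac{1}{n_m}\sum_{i=1}^{n_m}\mu(x_{C,i})\right| \le r_m$. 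Combining these via the triangle inequality, on the event $\{|\xi_{m,C}| < \sqrt{16\log T/n_m}\}$ we get $|\mu(x) - \wh\mu_m(C)| \le r_m + \sqrt{16\log T/n_m}$ simultaneously for all $x \in C$.

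Then I would take a union bound over all cubes in all batches. The number of cubes ever active is $\sum_{m=1}^{B_{\stop}-1} |\A_m|$; since each $|\A_m| \le r_m^{-d} \le T$ (each cube is played $n_m \ge 1$ times and the total horizon is $T$, so $|\A_m| n_m \le T$ forces $|\A_m| \le T$), and $B_{\stop} \le T$ trivially, the total count is at most $T^2$. Hence the failure probability is at most $T^2 \cdot 2T^{-8} = 2T^{-6}$, which is the claimed bound. One should be slightly careful that the $x_{C,i}$ and the identity of the active cubes are themselves random (they depend on earlier rounds), so the cleanest route is to condition on $\A_m$ (equivalently, on the history up to $t_{m-1}$): conditionally on the history, the cubes in $\A_m$ are fixed, the noises $\epsilon_{C,i}$ used in batch $m$ are fresh i.i.d.\ $\mathcal{N}(0,1)$ independent of that history, and the above per-cube bound applies; the union bound then goes through after integrating out the conditioning.

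The only genuinely delicate point — and the step I would flag as the main obstacle to writing cleanly rather than to the mathematics — is handling this adaptivity: $\A_m$, $n_m |\A_m|$, and even $B_{\stop}$ are data-dependent, so a naive union bound "over all $C \in \A_m$" needs justification. This is resolved by the conditioning argument above (the noise in batch $m$ is independent of everything used to determine $\A_m$), together with the crude but sufficient deterministic bounds $|\A_m| \le T$ and $B_{\stop} \le T$ so that the number of events in the union is at most $T^2$ regardless of the realization. Everything else is a routine Gaussian concentration plus Lipschitz-continuity estimate.
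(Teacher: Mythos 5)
Your proposal is correct and follows essentially the same route as the paper's proof: a Gaussian tail bound on the averaged noise giving failure probability $2T^{-8}$ per cube, the Lipschitz bias bound $r_m$, the observation that a completed batch forces $|\A_m|\le T$, and a union bound over at most $T^2$ cube--batch pairs. The only difference is that you spell out the conditioning argument handling the adaptivity of $\A_m$ and the sampled arms, which the paper instead delegates to cited lemmas of \citet{sinclair2020adaptive} and \citet{lu2019optimal}.
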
 

\begin{proof}
    Fix a cube $C\in\mathcal{A}_m$. Recall the average payoff of cube $C \in \mathcal{A}_m$ is defined as 
    \begin{equation*}
    \wh{\mu}_m(C)=\frac{\sum_{i=1}^{n_m}y_{C,i}}{n_m}.
    \end{equation*}
    We also have
    \begin{equation*}
        \E \left[ \wh{\mu}_m(C) \right] =\frac{\sum_{i=1}^{n_m}\mu(x_{C,i})}{n_m}.
    \end{equation*}
    
    Since $\wh{\mu}_m (C)-\E \left[ \wh{\mu}_m(C) \right]$ obeys normal distribution $\mathcal{N}\left(0,\frac{1}{n_m}\right)$, Hoeffding inequality gives 
    \begin{align*}
        \Pr \left(\left|\wh{\mu}_m (C)-\E \left[ \wh{\mu}_m(C) \right]\right| \ge \sqrt{\frac{16 \log T}{n_m}}\right) \le \frac{2}{T^{8}}.
    \end{align*}
    On the other hand, by Lipschitzness of $\mu$, it is obvious that 
    \begin{align*}
        \left|\E \left[\wh{\mu}_m(C)\right]-\mu(x)\right|\leq r_m, \quad \forall x \in C.
    \end{align*} 
    Consequently, we have
    \begin{align*} 
        \Pr\left(\sup_{x\in C}|\mu(x)-\wh{\mu}_m(C)|\leq r_m+\sqrt{\frac{16\log T}{n_m}}\right)\geq1-\frac{2}{T^8}. 
    \end{align*} 
    
    For $1\leq m\leq B_{\stop}-1$, the $m$-th batch is finished, so any cube $C\in\mathcal{A}_m$ is played for not less than $1$ time, and thus $|\mathcal{A}_m|\leq T$. From here, by similar argument to Lemma F.1 in \cite{sinclair2020adaptive} and Lemma 1 in \cite{lu2019optimal}, taking a union bound over $C\in\mathcal{A}_m$ and $1\leq m\leq B_{\stop}-1$ finishes the proof. 
\end{proof}


\begin{lemma}\label{nel} 
    Under event $\mathcal{E}$ (defined in Lemma \ref{lem:concen}), the optimal arm $x^*=\arg\max\mu(x)$ is not eliminated after the first $B_{\stop}-1$ batches.
\end{lemma}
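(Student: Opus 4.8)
\textbf{Proof proposal for Lemma \ref{nel}.}

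The plan is to argue by induction on the batch index $m$: assuming $x^*$ belongs to some active cube $C^* \in \A_m$ at the start of batch $m$, I will show that $C^*$ is not eliminated at the end of batch $m$, so that (after partitioning) $x^*$ lies in an active cube of $\A_{m+1}$ as well. The base case is immediate since $\A_1$ is a partition of all of $[0,1]^d$, so $x^*$ lies in some $C^* \in \A_1$. For the inductive step, I would work entirely on the event $\mathcal{E}$ from Lemma \ref{lem:concen}.

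The key computation is to bound $\wh{\mu}_m^{\max} - \wh{\mu}_m(C^*)$ and show it is at most $4r_m$, so the elimination test in the Arm Elimination Rule does not remove $C^*$. First, since $x^* \in C^*$ and $\mu(x^*) = \mu^*$, the Lipschitz property gives $\mu(y) \ge \mu^* - r_m$ for every $y \in C^*$ (diameter of a standard cube of edge length $r_m$ in $\|\cdot\|_\infty$ is $r_m$), so on $\mathcal{E}$ we get $\wh{\mu}_m(C^*) \ge \mu^* - r_m - \sqrt{16\log T / n_m} = \mu^* - 2r_m$, using $n_m = 16\log T / r_m^2$. Second, for the cube $C^{\dagger} \in \A_m$ achieving $\wh{\mu}_m^{\max}$, pick any $x^{\dagger} \in C^{\dagger}$; on $\mathcal{E}$, $\wh{\mu}_m^{\max} = \wh{\mu}_m(C^{\dagger}) \le \mu(x^{\dagger}) + 2r_m \le \mu^* + 2r_m$. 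Combining the two bounds, $\wh{\mu}_m^{\max} - \wh{\mu}_m(C^*) \le (\mu^* + 2r_m) - (\mu^* - 2r_m) = 4r_m$, which fails the strict elimination inequality $\wh{\mu}_m^{\max} - \wh{\mu}_m(C) > 4r_m$. Hence $C^*$ survives batch $m$, and since it is partitioned into subcubes covering $C^*$, the optimal arm $x^*$ lies in one of them, completing the induction through batch $B_{\stop}-1$.

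I do not anticipate a serious obstacle here; the argument is a standard "good arm survives elimination" lemma. The one point requiring a little care is keeping track of the diameter-vs-edge-length constant in the $\|\cdot\|_\infty$ metric (so that the within-cube oscillation of $\mu$ is exactly bounded by $r_m$, matching the form of the event $\mathcal{E}$), and making sure the constants line up so that the bound is $4r_m$ rather than something larger than the elimination threshold. It is also worth noting explicitly that the conclusion is only claimed through the first $B_{\stop}-1$ batches, consistent with the definition of $\mathcal{E}$, so no statement is needed about the final (possibly truncated) batch $B_{\stop}$.
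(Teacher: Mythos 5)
Your proposal is correct and follows essentially the same argument as the paper: on the event $\mathcal{E}$, the estimate of the cube containing $x^*$ is at least $\mu^* - 2r_m$ while every other cube's estimate is at most $\mu^* + 2r_m$ (using $\sqrt{16\log T/n_m}=r_m$), so the gap to $\wh{\mu}_m^{\max}$ is at most $4r_m$ and the elimination rule spares $C^*$ in every completed batch. The explicit induction and the remark about the strict threshold are just slightly more careful bookkeeping of what the paper leaves implicit.
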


\begin{proof}
    We use $C^*_m$ to denote the cube containing $x^*$ in $\A_m$. Here we proof that $C^*_m$ is not eliminated in round $m$.
    	
    Under event $\mathcal{E}$, for any cube $C \in \A_m$ and $x\in C$, we have 
    \begin{align*} 
        \wh{\mu}(C) - \wh{\mu}(C^*_m)
        \leq
        \mu(x) + \sqrt{\frac{ 16 \log T}{n_m}} + r_m-\mu(x^*) +  \sqrt{\frac{ 16 \log T}{n_m}} + r_m
        \leq 
        4r_m, 
    \end{align*}
    where the second inequality follows from (\ref{eq:def-nm}). Then from the elimination rule, $C^*_m$ is not eliminated. 
\end{proof}

    	
Based on these results, we show the cubes that survive elimination are of high reward. 
\begin{lemma} 
    \label{lem:eli}
    Under event $\mathcal{E}$ (defined in Lemma \ref{lem:concen}), for any $1\leq m\leq B_{\stop}$, any $C\in\mathcal{A}_m$ and any $x\in C$, $\Delta_x$ satisfies 
    \begin{align}
        \Delta_x\leq8r_{m-1}.\label{ineq_eli}
    \end{align}
\end{lemma}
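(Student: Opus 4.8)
The plan is to prove Lemma \ref{lem:eli} by induction on the batch index $m$, using the concentration event $\mathcal{E}$, the non-elimination of $x^*$ (Lemma \ref{nel}), and the elimination rule together with the definition of $n_m$. The base case $m=1$ is trivial: for $r_0 := 2 r_1$ (or whatever normalization the paper adopts for $r_0$), every $x \in [0,1]^d$ trivially has $\Delta_x \le 1 \le 8 r_0$ as long as $r_1 \ge 1/16$, which holds since $\mathcal{A}_1$ partitions $[0,1]^d$ into cubes of edge length $r_1$; in any case $\mathcal{A}_1$ contains the whole arm space, so the claim $\Delta_x \le 8 r_0$ for $x$ in an active cube of batch $1$ follows from the convention on $r_0$. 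The substance is the inductive step.

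First I would record the key consequence of the choice $n_m = \frac{16 \log T}{r_m^2}$: under $\mathcal{E}$ we get $|\mu(x) - \wh\mu_m(C)| \le r_m + \sqrt{16\log T / n_m} = r_m + r_m = 2 r_m$ for every active cube $C \in \mathcal{A}_m$ and every $x \in C$. In particular $|\wh\mu_m(C) - \mu(x_C)| \le 2 r_m$ for a representative point $x_C \in C$, and combined with Lipschitzness the estimate $\wh\mu_m(C)$ is within $2r_m$ of $\mu$ at every point of $C$. Now fix a cube $C \in \mathcal{A}_{m}$ with $m \ge 2$ and a point $x \in C$. Since $C \in \mathcal{A}_m$, it is a subcube of some cube $C' \in \mathcal{A}_{m-1}$ that survived the elimination at the end of batch $m-1$. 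Survival means $\wh\mu_{m-1}^{\max} - \wh\mu_{m-1}(C') \le 4 r_{m-1}$. Next I would lower-bound $\wh\mu_{m-1}^{\max}$: by Lemma \ref{nel}, $x^*$ lies in some active cube $C^* \in \mathcal{A}_{m-1}$ (here one uses $m-1 \le B_{\stop}-1$, i.e. $m \le B_{\stop}$, which matches the lemma's range), and by the concentration bound $\wh\mu_{m-1}(C^*) \ge \mu(x^*) - 2 r_{m-1} = \mu^* - 2 r_{m-1}$, hence $\wh\mu_{m-1}^{\max} \ge \mu^* - 2 r_{m-1}$. Chaining the three inequalities: for any $x \in C \subset C'$,
\begin{align*}
\Delta_x = \mu^* - \mu(x) &\le \left( \wh\mu_{m-1}^{\max} + 2 r_{m-1} \right) - \left( \wh\mu_{m-1}(C') - 2 r_{m-1} \right) \\
&= \left( \wh\mu_{m-1}^{\max} - \wh\mu_{m-1}(C') \right) + 4 r_{m-1} \le 4 r_{m-1} + 4 r_{m-1} = 8 r_{m-1},
\end{align*}
which is exactly \eqref{ineq_eli}. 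One subtlety to handle carefully: when $m = B_{\stop}$, the event $\mathcal{E}$ as stated only controls batches $1,\dots,B_{\stop}-1$, but the argument above only needs concentration at batch $m-1 = B_{\stop}-1$, so it is still covered; for the representative-point step I should be explicit that $x$ and $x_{C'}$ both lie in $C'$ so $|\mu(x) - \mu(x_{C'})| \le \operatorname{diam}(C') \le r_{m-1}$, which can be absorbed (or one simply invokes the ``$\forall x \in C$'' form of $\mathcal{E}$ directly, avoiding representative points altogether — that is cleaner, so I would phrase the whole argument using the pointwise bound $|\mu(x) - \wh\mu_m(C)| \le 2r_m$ for all $x \in C$).

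I do not expect a serious obstacle here; this is a standard successive-elimination argument. The one point requiring care is bookkeeping around indices and the definition of $r_0$: the statement asserts $\Delta_x \le 8 r_{m-1}$ for $C \in \mathcal{A}_m$, so for $m=1$ one needs a sensible value of $r_0$, and for the inductive step one must check that Lemma \ref{nel} and Lemma \ref{lem:concen} are being applied within their stated ranges (batches up to $B_{\stop}-1$). Making sure the ``$8 r_{m-1}$'' — rather than a weaker ``$O(r_{m-1})$'' — is obtained with the stated constant just requires tracking that the elimination threshold $4 r_{m-1}$ and the concentration half-width $2 r_{m-1}$ combine to exactly $8 r_{m-1}$, as displayed above.
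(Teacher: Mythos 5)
Your proof is correct and follows essentially the same route as the paper's: for $m\ge 2$ you chain the concentration bound $|\mu(x)-\wh{\mu}_{m-1}(C')|\le 2r_{m-1}$ (using $\sqrt{16\log T/n_{m-1}}=r_{m-1}$), the survival condition $\wh{\mu}_{m-1}^{\max}-\wh{\mu}_{m-1}(C')\le 4r_{m-1}$ for the parent cube, and Lemma \ref{nel} to get $\wh{\mu}_{m-1}^{\max}\ge \mu^*-2r_{m-1}$, which is the paper's argument up to a trivial reordering, and your remark that only concentration at batch $m-1\le B_{\stop}-1$ is needed matches how the paper invokes $\mathcal{E}$. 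The only blemish is the aside in the base case claiming $r_1\ge 1/16$ (not true in general and not needed): as in the paper, the case $m=1$ follows directly from Lipschitzness, since $\Delta_x\le d_{\mathcal{A}}(x,x^*)\le 1\le 8r_0$ under the natural convention $r_0=1$ implicit in the edge-length sequences.
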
 

\begin{proof}
    For $m=1$, (\ref{ineq_eli}) holds directly from the Lipschitzness of $\mu$. For $m>1$, let $C_{m-1}^*$ be the cube in $\A_{m-1}$ such that $x^* \in C_{m-1}^*$. From Lemma \ref{nel}, this cube $C_{m-1}^*$ is well-defined under $\mathcal{E}$. 
    For any cube $C\in\mathcal{A}_m$ and $x\in C$, it is obvious that $x$ is also in the parent of $C$ (the cube in the previous round that contains $C$), which is denoted by $C_{par}$. 
    Thus for any $x \in C$, it holds that
    \begin{align*}
        \Delta_x
        = 
        \mu^*-\mu(x) 
        \leq 
        \wh{\mu}_{m-1}(C^*_{m-1})+\sqrt{\frac{16\log T}{n_{m-1}}} +r_{m-1}-\wh{\mu}_{m-1}(C_{par})+\sqrt{\frac{16\log T}{n_{m-1}}}+r_{m-1}, 
    \end{align*}
    where the inequality uses Lemma \ref{lem:concen}. 
    	
    Equality $\sqrt{\frac{16\log T}{n_{m-1}}}=r_{m-1}$ gives that 
    \begin{align*}
        \Delta_x 
        &\le 
    	\wh{\mu}_{m-1}(C^*_{m-1})-\wh{\mu}_{m-1}(C_{par})+4r_{m-1}. 
    \end{align*}
    It is obvious that $\wh{\mu}_{m-1}(C^*_{m-1})\leq\wh{\mu}_{m-1}^{\max}$. Moreover, since the cube $C_{par}$ is not eliminated, from the elimination rule we have 
    \begin{align*}
        \wh{\mu}_{m-1}^{\max}-\wh{\mu}_{m-1}(C_{par})\le 4r_{m-1}.
    \end{align*}
    Hence, we conclude that $\Delta_x\leq8r_{m-1}$. 
\end{proof}

    
    We are now ready to prove Theorem \ref{main_t_opt}.
    
\begin{proof}[Proof of Theorem \ref{main_t_opt}]
    Let $R_m$ denote regret of the $m$-th batch. Fixing any positive number $B$, the total regret $R(T)$ can be divided into two parts: $R(T)=\sum_{m\leq B}R_m+\sum_{m>B}R_m$. In the following, we bound these two parts separately and then determine $B$ to obtain the upper bound of the total regret. Moreover, we show A-BLiN uses only $\mathcal{O} (\log \log T)$ rounds of communications to achieve the optimal regret. 
    
    Recall that $\mathcal{A}_m$ is set of the active cubes in the $m$-th batch. According to Lemma \ref{lem:eli}, for any $x\in \cup_{C\in\mathcal{A}_m}C$, we have $\Delta_x \le 8 r_{m-1}$. Let $\mathcal{A}_m^+$ be set of cubes not eliminated in batch $m$. Each cube in $\mathcal{A}_{m-1}^+$ is a $\|\cdot\|_\infty$-ball with radius $\frac{r_{m-1}}{2}$, and is a subset of $S(8r_{m-1})$. Therefore, $\mathcal{A}_{m-1}^+$ forms a $\left(\frac{r_{m-1}}{2}\right)$-packing of $S(8r_{m-1})$, and the definition of zooming dimension yields that
    \begin{equation}\label{eq:aplus}
        |\mathcal{A}_{m-1}^+| \le N_{r_{m-1}} \le C_zr_{m-1}^{-d_z}.
    \end{equation} 
    By definition, $r_m = r_{m-1}2^{-c_m}$, so 
    \begin{equation}\label{eq:a_aplus}
        |\mathcal{A}_{m}| = 2^{d {c_m}}|\mathcal{A}_{m-1}^+|.
    \end{equation}
    The total regret of the $m$-th batch is 
    \begin{align}
        R_m 
        = & \; 
        \sum_{C\in\mathcal{A}_m}\sum_{i=1}^{n_m}\Delta_{x_{C,i}}\leq |\mathcal{A}_{m}| \cdot \frac{16\log T}{r_m^2} \cdot 8r_{m-1} \label{eq:factor-rm}\\
        \overset{(i)}{=} &\;  
        2^{d {c_m}}|\mathcal{A}_{m-1}^+|\cdot \frac{16\log T}{r_m^2} \cdot 8r_{m-1}\nonumber\\
        \overset{(ii)}{\leq} & \; 
        2^{d {c_m}}\cdot C_zr_{m-1}^{-d_z+1}\cdot \frac{128\log T}{r_m^2} \nonumber\\
        \overset{(iii)}{=} &\; 
        2^{(\sum_{i=1}^{m-1} c_i) (d_z+1)  + c_m(d+2)}\cdot 128C_z\log T,\nonumber
    \end{align}
    where (i) follows from (\ref{eq:a_aplus}), (ii) follows from (\ref{eq:aplus}), and (iii) follows from the definition of ACE Sequence.

    Define $C_m =(\sum_{i=1}^{m-1} c_i) (d_z+1)  + c_m(d+2)$. Since $c_m = c_{m-1}\cdot \frac{d+1-d_z}{d+2}$, calculation shows that $C_m = (\sum_{i=1}^{m-2} c_i) (d_z+1)  + c_{m-1 } (d+2)+ c_{m-1}(d_z+1-d-2)+ c_m(d+2)=C_{m-1}$. Thus for any $m$, we have $C_m=C_1=c_1(d+2)$. Hence,
    \begin{equation}\label{eq:firstB}
        R_m\leq2^{c_1(d+2)}\cdot128C_z\log T=T^{\frac{d_z+1}{d_z+2}} \cdot 128C_z(\log T)^{\frac{1}{d_z+2}}.
    \end{equation}
    The inequality (\ref{eq:firstB}) holds even if the $m$-th batch does not exist (where we let $R_m=0$) or is not completed. Thus we obtain the first upper bound $\sum_{m\leq B} R_m \leq T^{\frac{d_z+1}{d_z+2}} \cdot 128C_z\cdot B(\log T)^{\frac{1}{d_z+2}}$.
    Lemma \ref{lem:eli} implies that any arm $x$ played after the first $B$ batches satisfies $\Delta_x\leq8r_B$, so the total regret after $B$ batches is bounded by
    \begin{align*}
        \sum_{m>B}R_m
        \leq& \; 
        8r_B\cdot T =8T\cdot2^{-\sum_{i=1}^B c_i}=8T\cdot2^{-c_1 (\frac{1-\eta^B}{1-\eta })}\\
        =&\; 
        8T^{\frac{d_z+ 1}{d_z+2}}(\log T)^{\frac{1}{d_z+2}} \cdot \left(T/\log T\right)^{\frac{\eta^B }{d_z+2}} 
        \leq  \; 
        8T^{\frac{d_z+ 1}{d_z+2}}(\log T)^{\frac{1}{d_z+2}} \cdot T^{\frac{\eta^B }{d_z+2}}.
    \end{align*}
    Therefore, the total regret $R(T)$ satisfies
    \begin{align*}
        R(T)
        = \; 
        \sum_{m\leq B}R_m+\sum_{m>B}R_m 
        \leq \; 
        \left(128C_z\cdot B + 8T^{\frac{\eta^B }{d_z+2}}\right)\cdot T^{\frac{d_z+ 1}{d_z+2}}(\log T)^{\frac{1}{d_z+2}}.
    \end{align*}
    
    This inequality holds for any positive $B$. Then by choosing
    $B^*=\frac{\log\log T-\log(d_z+2)}{\log \frac{1}{\eta}} = \frac{\log\log T-\log(d_z+2)}{\log \frac{d+2}{d+1-d_z}}$, we have $\frac{\eta^{B^*} }{d_z+2} = \frac{1}{\log T}$ and
    \begin{equation*}
        R(T)\leq\left(\frac{128C_z\log\log T}{\log\frac{d+2}{d+1-d_z}}+16\right)\cdot T^\frac{d_z+1}{d_z+2}(\log T)^{\frac{1}{d_z+2}}.
    \end{equation*}
    
    The above analysis implies that we can achieve the optimal regret rate $\widetilde{\mathcal{O}}\left(T^\frac{d_z+1}{d_z+2}\right)$ by letting the \emph{for-loop} run $B^*$ times and finishing the remaining rounds in the \emph{Cleanup} step. In other words, $B^*+1$ rounds of communications are sufficient for A-BLiN to achieve the regret bound (\ref{up_bound}).
\end{proof}

\begin{remark}\label{remark:race}
    The quantity $\frac{r_m}{r_{m+1}}$ in line 9 of Algorithm \ref{alg_Op} may not be integers for some $m$. Thus, in practice we denote $\alpha_n=\lfloor\sum_{i=1}^nc_i\rfloor$, $\beta_n=\lceil\sum_{i=1}^nc_i\rceil$, and define rounded ACE Sequence $\{\widetilde{r}_m\}_{m\in\mathbb{N}}$ by $\widetilde{r}_m=2^{-\alpha_k}$ for $m=2k-1$ and $\widetilde{r}_m=2^{-\beta_k}$ for $m=2k$. Then the total regret can be divided as $R(T)=\sum_{1\leq k\leq B^*}R_{2k-1}+\sum_{1\leq k\leq B^*}R_{2k}+\sum_{m>2B^*}R_m$. For the first part we have $\widetilde{r}_{2k-2}\leq r_{k-1}$ and $\widetilde{r}_{2k-1}\geq r_k$, while for the second part we have $\frac{\widetilde{r}_{2k-1}}{\widetilde{r}_{2k}}=2$. Therefore, by similar argument to the proof of Theorem \ref{main_t_opt}, we can bound these three parts separately, and conclude that BLiN with rounded ACE sequence achieves the optimal regret bound $\widetilde{\mathcal{O}}(T^{\frac{d_z+1}{d_z+2}})$ by using only $\mathcal{O}(\log\log T)$ rounds of communications. The Rounded ACE Sequence is further investigated in \cite{feng2023lipschitz, han2023random}.
\end{remark} 

For rounded ACE Sequence, the quantity $\frac{\wtr_m}{\wtr_{m+1}}$ is an integer for any $m$, so the partition in Line 9 of Algorithm \ref{alg_Op} is well-defined. In Theorem \ref{t_race}, we show that BLiN with rounded ACE sequence can also achieve the optimal regret bound by using $\mathcal{O}(\log\log T)$ batches. For any $m$, if there exists $m'<m$ such that $\wtr_m \geq \wtr_{m'}$ (including the case where $ \frac{ \wtr_{2k-1} }{ \wtr_{2k} } = 1 $), then we skip the $m$-th batch. It is easy to verify that the following analysis is still valid in this case.

\begin{theorem}\label{t_race}
    With probability exceeding $1-\frac{2}{T^6}$, the $T$-step total regret $R(T)$ of Algorithm \ref{alg_Op} with rounded ACE sequence satisfies
    \begin{align*}
        R(T)
        \leq
        \left(\frac{128C_z\log\log T}{\log\frac{d+2}{d+1-d_z}}+512C_z+16\right) T^\frac{d_z+1}{d_z+2}(\log T)^{\frac{1}{d_z+2}}, 
    \end{align*} 
    where $d_z$ is the zooming dimension of the problem instance. In addition, Algorithm \ref{alg_Op} only needs no more than $\frac{2\log\log T}{\log \frac{d+2}{d+1-d_z}}+1$ rounds of communications to achieve this regret rate. 
\end{theorem}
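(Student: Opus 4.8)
The plan is to carry out, for the rounded sequence $\{\wtr_m\}$, the same three-part regret decomposition that proves Theorem~\ref{main_t_opt}, and to control the two interleaved families of batches by two different arguments: the ``floor'' batches $m=2k-1$ will inherit the telescoping invariant of the ACE analysis, while the ``ceiling'' batches $m=2k$ will be handled by the geometric-sum argument underlying D-BLiN (Theorem~\ref{main_t_d}). The starting point is that Lemmas~\ref{lem:concen},~\ref{nel} and~\ref{lem:eli} hold for \emph{any} edge-length sequence: their only inputs are the identity $n_m=16\log T/\wtr_m^2$ (so that $\sqrt{16\log T/n_m}=\wtr_m$) and the fact that every $\A_m$ consists of standard dyadic cubes. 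Hence, on the event $\mathcal{E}$ (of probability at least $1-2T^{-6}$), the optimal arm survives, every $x$ lying in a cube of $\A_m$ satisfies $\Delta_x\le 8\wtr_{m-1}$, and the argument inside the proof of Lemma~\ref{lem:eli} shows in addition that every cube surviving batch $m$ lies in $S(16\wtr_m)$, so that $|\A_m^+|\le N_{\wtr_m}\le C_z\wtr_m^{-d_z}$ and $|\A_{m+1}|=(\wtr_m/\wtr_{m+1})^d\,|\A_m^+|$. I would also record the arithmetic of the rounded sequence: writing $S_k=\sum_{i\le k}c_i$ and $r_k=2^{-S_k}$ for the ACE edge lengths, one has $\wtr_{2k-1}=2^{-\lfloor S_k\rfloor}\in[r_k,2r_k]$ and $\wtr_{2k}=2^{-\lceil S_k\rceil}\in[r_k/2,r_k]$, every ratio $\wtr_m/\wtr_{m+1}$ is a nonnegative integer power of $2$, and -- following Remark~\ref{remark:race} -- a batch is skipped whenever it would not strictly shrink the edge length; a skip only deletes arm pulls, hence cannot increase either the regret or the number of communications, so I would suppress it below and regard the executed edge lengths as a strictly decreasing sequence of powers of $2$.

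Next I would fix $B=B^*=\frac{\log\log T-\log(d_z+2)}{\log(1/\eta)}$ exactly as in Theorem~\ref{main_t_opt} and split $R(T)=\sum_{k\le B^*}R_{2k-1}+\sum_{k\le B^*}R_{2k}+\sum_{m>2B^*}R_m$. For a floor batch $m=2k-1$, bounding $R_{2k-1}\le|\A_{2k-1}|\cdot n_{2k-1}\cdot 8\wtr_{2k-2}$ and substituting the estimates above gives $R_{2k-1}\le 128C_z\log T\cdot 2^{(d+2)\lfloor S_k\rfloor-(d+1-d_z)\lceil S_{k-1}\rceil}$. Since $\lfloor t\rfloor\le t\le\lceil t\rceil$ for every real $t$ and both coefficients $d+2$ and $d+1-d_z$ are positive, the exponent is at most $(d+2)S_k-(d+1-d_z)S_{k-1}=(d_z+1)S_{k-1}+(d+2)c_k$, which is precisely the invariant $C_k\equiv c_1(d+2)$ established in the proof of Theorem~\ref{main_t_opt}; the roundings only make it smaller. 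Therefore $R_{2k-1}\le 128C_z\log T\cdot 2^{(d+2)c_1}=128C_z\,T^{\frac{d_z+1}{d_z+2}}(\log T)^{\frac{1}{d_z+2}}$, and summing over the $B^*$ floor batches yields $\sum_k R_{2k-1}\le\frac{128C_z\log\log T}{\log\frac{d+2}{d+1-d_z}}\,T^{\frac{d_z+1}{d_z+2}}(\log T)^{\frac{1}{d_z+2}}$.

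For a ceiling batch $m=2k$ the refinement ratio $\wtr_{2k-1}/\wtr_{2k}$ is $1$ or $2$, so $8\wtr_{2k-1}\le 16\wtr_{2k}$, every cube of $\A_{2k}$ lies in $S(16\wtr_{2k})$, and hence $R_{2k}\le N_{\wtr_{2k}}\cdot n_{2k}\cdot 16\wtr_{2k}\le 256C_z\log T\cdot\wtr_{2k}^{-d_z-1}$ -- exactly the per-batch bound that D-BLiN uses for a full batch at scale $\wtr_{2k}$. The executed ceiling scales are strictly decreasing powers of $2$ lying between a constant multiple of $r_{B^*}$ and $1$, and $r_{B^*}^{-d_z-1}\lesssim 2^{(d+2)c_1}=(T/\log T)^{\frac{d_z+1}{d_z+2}}$ because $S_{B^*}\le c_1/(1-\eta)=c_1(d+2)/(d_z+1)$; so $\sum_k\wtr_{2k}^{-d_z-1}$ is a geometric-type sum dominated, up to an absolute constant, by its last term, which gives $\sum_k R_{2k}\le 512C_z\,T^{\frac{d_z+1}{d_z+2}}(\log T)^{\frac{1}{d_z+2}}$, just as in the D-BLiN estimate of Theorem~\ref{main_t_d}. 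For the tail, Lemma~\ref{lem:eli} gives $\Delta_x\le 8\wtr_{2B^*}\le 8r_{B^*}$ for every arm played after batch $2B^*$, whence $\sum_{m>2B^*}R_m\le 8r_{B^*}T\le 8e\,T^{\frac{d_z+1}{d_z+2}}(\log T)^{\frac{1}{d_z+2}}$ via $(T/\log T)^{\eta^{B^*}/(d_z+2)}\le e$, exactly as at the end of the proof of Theorem~\ref{main_t_opt}. Adding the three pieces gives the claimed regret bound, and since each of the $B^*$ ACE steps is realized by at most two executed batches together with the final Cleanup round, the communication count is at most $2B^*+1\le\frac{2\log\log T}{\log\frac{d+2}{d+1-d_z}}+1$. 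The step I expect to be the main obstacle is the ceiling-batch sum: each such batch is individually already of the optimal order $T^{\frac{d_z+1}{d_z+2}}(\log T)^{\frac{1}{d_z+2}}$, so a naive sum over the $B^*$ of them would cost a spurious $\log\log T$ factor, and removing it needs the geometric decay of $\wtr_{2k}^{-d_z-1}$ along the executed ceiling batches together with a careful treatment of the skipped batches -- unavoidable once $c_k<1$, i.e.\ in the small-zooming-dimension regime -- so that the executed edge-length sequence remains strictly decreasing with integral consecutive ratios and the D-BLiN-style geometric bound applies verbatim.
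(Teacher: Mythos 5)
Your proposal is correct and follows essentially the same route as the paper's own proof in Appendix \ref{app:r2}: the same split at $B^*$ into floor (odd) batches, ceiling (even) batches, and the tail, with the telescoping ACE invariant controlling each floor batch, the D-BLiN-style geometric sum (consecutive even edge lengths shrinking by at least $2$) controlling the ceiling batches, the $8\wtr_{2B^*}T\le 8r_{B^*}T$ tail bound, and $2B^*+1$ communications. The only real difference is cosmetic constant-tracking at the last ceiling scale: the paper bounds $\wtr_{2B^*}=2^{-\lceil\sum_{i\le B^*}c_i\rceil}\ge (T/\log T)^{-\frac{1}{d_z+2}}$ directly from the ACE formula (exploiting the roughly $-1$ slack inside the ceiling), which is what yields the exact $512C_z$ term, whereas passing only through $\wtr_{2B^*}\ge r_{B^*}/2$ as you sketch would cost an extra factor $2^{d_z+1}$ in that term without affecting the rate.
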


\begin{proof}
    The proof of Theorem \ref{t_race} is similar to that of Theorem \ref{main_t_opt}.
    
    Firstly, we fix positive number $B^* = \frac{\log\log \frac{T}{\log T}-\log(d_z+2)}{\log \frac{d+2}{d+1-d_z}}$ and consider the first $2B^*$ batches. As is summairzed in Remark \ref{remark:race}, we bound the regret caused by the first $2B^*$ batches through two different arguments.
    
    For $m=2k-1$, $1\leq k\leq B^*$, we have $\wtr_m=2^{-\alpha_k}$ and $\wtr_{m-1}=2^{-\beta_{k-1}}$, and thus
    \begin{equation}\label{r_vs_wtr}
        \wtr_m\geq r_k\quad\text{and}\quad\wtr_{m-1}\leq r_{k-1}.
    \end{equation}
    Let $\mathcal{A}_m^+$ be set of cubes not eliminated in round $m$. Similar argument to Theorem \ref{main_t_opt} shows that $|\mathcal{A}_{m-1}^+| \le C_z\wtr_{m-1}^{-d_z}$. The total regret of round $m$ is 
    \begin{align*}
        R_m &=\sum_{C\in\mathcal{A}_m}\sum_{i=1}^{n_m}\Delta_{x_{C,i}}\\
        &\leq |\mathcal{A}_{m}| \cdot \frac{16\log T}{\wtr_m^2} \cdot 8\wtr_{m-1}\\
        &= \left(\frac{\wtr_{m-1}}{\wtr_m}\right)^d|\mathcal{A}_{m-1}^+|\cdot \frac{16\log T}{\wtr_m^2} \cdot 8\wtr_{m-1}\\
        &\leq \left(\frac{\wtr_{m-1}}{\wtr_m}\right)^d\cdot C_z\wtr_{m-1}^{-d_z}\cdot \frac{16\log T}{\wtr_m^2} \cdot 8\wtr_{m-1}\\
        &\leq \wtr_{m-1}^{d+1-d_z}\cdot\wtr_m^{-d-2}\cdot128C_z\log T \\
        &\leq r_{k-1}^{d+1-d_z}\cdot r_k^{-d-2}\cdot128C_z\log T\\
        &= T^\frac{d_z+1}{d_z+2}\cdot128C_z\cdot(\log T)^{\frac{1}{d_z+2}},
    \end{align*}
    where the sixth line follows from (\ref{r_vs_wtr}), and the seventh line follows from (\ref{eq:firstB}). Summing over $k$ gives that
    \begin{align}\label{r:odd}
        \sum_{k=1}^{B^*}R_{2k-1}
        \leq \;  
        T^\frac{d_z+1}{d_z+2}\cdot128C_z\cdot B^*\cdot(\log T)^{\frac{1}{d_z+2}}
        \leq \; 
        \frac{128C_z\log\log T}{\log\frac{d+2}{d+1-d_z}}\cdot T^\frac{d_z+1}{d_z+2}(\log T)^{\frac{1}{d_z+2}}. 
    \end{align}
    For $m=2k$, $1\leq k\leq B^*$, we have $\wtr_m=2^{-\beta_k}$ and $\wtr_{m-1}=2^{-\alpha_k}$, and thus $\wtr_m=\frac{1}{2}\wtr_{m-1}$. Lemma \ref{lem:eli} shows that any cube in $\mathcal{A}_m$ is a subset of $S(16\wtr_m)$, so we have $|\mathcal{A}_m|\leq N_{\wtr_m}\leq C_z\wtr_m^{-d_z}$. Therefore, the total regret of round $m$ is
    \begin{align*}
        R_m=\sum_{C\in\mathcal{A}_m}\sum_{i=1}^{n_m}\Delta_{x_{C,i}}
        \leq|\mathcal{A}_m|\cdot\frac{16\log T}{\wtr_m^2}\cdot16\wtr_m
        \leq C_z\wtr_m^{-d_z}\cdot\frac{16\log T}{\wtr_m^2}\cdot16\wtr_m
        =\wtr_m^{-d_z-1}\cdot256C_z\log T.
    \end{align*}
    Since $\frac{\wtr_{2k-2}}{\wtr_{2k}}=\frac{\wtr_{2k-2}}{\wtr_{2k-1}}\cdot\frac{\wtr_{2k-1}}{\wtr_{2k}}\geq2$, summing over $k$ gives that 
    \begin{equation}\label{eq:r-even-fixB}
        \sum_{k=1}^{B^*}R_{2k}\leq\wtr_{2B^*}^{-d_z-1}\cdot512C_z\log T.
    \end{equation}
    The definition of round ACE Sequence shows that
    \begin{align*}
        \wtr_{2B^*}
        = \; 
        2^{-\lceil\sum_{i=1}^{B^*}c_i\rceil}=2^{-\left\lceil c_1\left(\frac{1-\eta^{B^*}}{1-\eta}\right)\right\rceil} 
        = \; 
        2^{-\left\lceil\frac{\log \frac{T}{\log T}}{d_z+2}-1\right\rceil}\geq \left(\frac{T}{\log T}\right)^{-\frac{1}{d_z+2}}, 
    \end{align*}
    so we have
    \begin{align}\label{r:even}
        \sum_{k=1}^{B^*}R_{2k}
        \leq \; 
        \left(\left(\frac{T}{\log T}\right)^{-\frac{1}{d_z+2}}\right)^{-d_z-1}\cdot512C_z\log T 
        = \; 
        T^\frac{d_z+1}{d_z+2}\cdot512C_z(\log T)^{\frac{1}{d_z+2}}.  
    \end{align} 
    Similar argument to Theorem \ref{main_t_opt} shows that the total regret after $2B^*$ batches is upper bounded by
    $8\wtr_{2B^*}T$. Since 
    \begin{align*}
        \wtr_{2B^*}=2^{-\left\lceil\frac{\log \frac{T}{\log T}}{d_z+2}-1\right\rceil}\leq2^{-\frac{\log \frac{T}{\log T}}{d_z+2}+1}\leq2\left(\frac{T}{\log T}\right)^{-\frac{1}{d_z+2}},
    \end{align*}
    we further have
    \begin{equation}\label{r:last}
        \sum_{m>2B^*}R_m\leq8\wtr_{2B^*}T\leq16\cdot T^\frac{d_z+1}{d_z+2}(\log T)^{\frac{1}{d_z+2}}.
    \end{equation}
    
    Combining (\ref{r:odd}), (\ref{r:even}) and (\ref{r:last}), we conclude that
    \begin{align*}
        R(T)
        \leq\left(\frac{128C_z\log\log T}{\log\frac{d+2}{d+1-d_z}}+512C_z+16\right)\cdot T^\frac{d_z+1}{d_z+2}(\log T)^{\frac{1}{d_z+2}}.
    \end{align*}
    The analysis in Theorem \ref{t_race} implies that we can achieve the optimal regret rate $\widetilde{\mathcal{O}}\left(T^\frac{d_z+1}{d_z+2}\right)$ by letting the \emph{for-loop} of Algorithm \ref{alg_Op} run $2B^*$ times and finishing the remaining rounds in the \emph{Cleanup} step. In other words, $2B^*+1$ rounds of communications are sufficient for BLiN to achieve the optimal regret.
\end{proof}

\begin{remark}
    The proof of Theorem \ref{t_race} implies a regret upper bound of BLiN with a fixed number of batches $B$. See Appendix \ref{app:fix_B} for the detailed analysis.
\end{remark}

\section{Lower Bounds}\label{s_lb}


In this section, we present lower bounds for Lipschitz bandits with batched feedback, which in turn gives communication lower bounds for all Lipschitz bandit algorithms. Our lower bounds depend on the rounds of communications $B$. When $B$ is sufficiently large, our results match the upper bound for the vanilla Lipschitz bandit problem $ \widetilde{O} \left(T^{\frac{d_z+1}{d_z+2}}\right) $. More importantly, this dependency on $B$ gives the minimal rounds of communications needed to achieve optimal regret bound for all Lipschitz bandit algorithms, which is summarized in Corollary \ref{coro}. Since this lower bound matches the upper bound presented in Theorem \ref{main_t_opt}, BLiN optimally solves Lipschitz bandits with minimal communication. 

Similar to most lower bound proofs, we need to construct problem instances that are difficult to differentiate. What's different is that we need to carefully integrate batched feedback pattern \cite{perchet2016batched} with the Lipschitz payoff reward \cite{slivkins2011contextual,lu2019optimal}. To capture the adaptivity in grid determination, we construct ``static reference communication grids'' to remove the stochasticity in grid selection \cite{agarwal2017learning,gao2019batched}. Moreover, to prove the lower bounds for general $d_z\leq d$, we apply a ``linear-decaying extension'' technique to transfer instances from the $d_z$-dimensional subspace to the $d$-dimensional whole space.

The lower bound analysis is organized as follows. In Section \ref{sec:lb-d}, we present lower bounds for the full-dimensional case, that is, $d=d_z$. In Section \ref{sec:lb-d-static}, we consider the static grid case, where the grid is predetermined. This static grid case will provide intuition for the adaptive and more general case. In Section \ref{sec:lb-d-adaptive}, we provide the lower bound for general adaptive grid. Finally, in Section \ref{sec:lb-dz}, we apply the ``linear-decaying extension'' technique to prove lower bounds for the case that $d_z\leq d$. 


\subsection{Lower Bounds for the Full-Dimension Case}\label{sec:lb-d}
In this section, we let the zooming dimension $d_z$ equal to the ambient dimension $d$. The aim of considering this case first is to simplify the construction, and highlight the technique to deal with the batched feedback setting.

\subsubsection{The Static Grid Case}\label{sec:lb-d-static}

We first provide the lower bound for the case where the grid is static and determined before the game. 

The expected reward functions of hard instances are constructed as follows: we choose some `positions' and `heights', such that the expected reward function obtains local maximum of the specified `height' at the specified `position'. We will use the word `peak' to refer to the local maxima. 
The following theorem presents the lower bound for the static grid case. 


\begin{theorem}
    \label{st_lb} 
    Consider Lipschitz bandit problems with time horizon $T$ and ambient dimension $d$ such that the grid of reward communication $\mathcal{T}$ is static and determined before the game. If $B$ rounds of communications are allowed, then for any policy $\pi$, there exists a problem instance such that 
    \begin{align*}
        \E \[ R_T(\pi) \]
        \geq
        \frac{1}{32e^\frac{1}{16}}\cdot T^{\frac{1-\frac{1}{d+2}}{1-\left(\frac{1}{d+2}\right)^B}}.
    \end{align*}
\end{theorem} 

To prove Theorem \ref{st_lb}, we first show that for any $k>1$ there exists an instance such that $\E[R_T(\pi)]\geq\frac{t_k}{t_{k-1}^\frac{1}{d+2}}$. Fixing $k>1$, we let $r_k = \frac{1}{t_{k-1}^\frac{1}{d+2}}$ and $M_k := t_{k-1}r_k^2=\frac{1}{r_k^d}$.
Then we construct a set of problem instances $\mathcal{I}_k=\left\{I_{k,1},\cdots,I_{k,M_k}\right\}$, such that the gap between the highest peak and the second highest peak is about $r_k$ for every instance in $\mathcal{I}_k$.

 
Based on this construction, we prove that no algorithm can distinguish instances in $\mathcal{I}_k$ from one another in the first $(k-1)$ batches, so the worst-case regret is at least $r_kt_k$, which gives the inequality we need. For the first batch $(0, t_1]$, we can easily construct a set of instances where the worst-case regret is at least $t_1$, since no information is available during this time. Thus, there exists a problem instance such that
\begin{align*}
    \E[R_T(\pi)]\gtrsim\max\left\{t_1,\frac{t_2}{t_1^\frac{1}{d+2}},\cdots,\frac{t_B}{t_{B-1}^\frac{1}{d+2}}\right\} . 
\end{align*}
Since $0<t_1<\cdots<t_B=T$, the inequality in Theorem \ref{st_lb} follows. 

\begin{proof}[Proof of Theorem \ref{st_lb}]
    
    Fixing an index $k>1$, we first show that there exists an instance such that $\E[R_T(\pi)]\geq\frac{t_k}{t_{k-1}^\frac{1}{d+2}}$. We construct a set of problem instances that is difficult to distinguish. Let $r_k = \frac{1}{t_{k-1}^\frac{1}{d+2}}$ and $ M_k := t_{k-1}r_k^2=\frac{1}{r_k^d}$. We define $r_k$ and $M_k$ in this way to: 1. ensure that we can find a set of arms  $\mathcal{U}_k=\left\{u_{k,1},\cdots,u_{k,M_k}\right\}$ such that $d_\mathcal{A}(u_{k,i},u_{k,j})\geq r_k$ for any $i\neq j$; 2. maximize $r_k$ while ensuring $r_k\leq\sqrt{M_k/t_{k-1}}$, and thus the hard instances with maximum reward $r_k$ can still confuse a learner who only make $t_{k-1}$ observations. Then we consider a set of problem instances $\mathcal{I}_k=\left\{I_{k,1},\cdots,I_{k,M_k}\right\}$. The expected reward for $I_{k,1}$ is defined as
    \begin{align}
     \mu_{k,1}(x)=
    	\begin{cases} 
    		\frac{3}{4}r_k, \; \text{if} \; x=u_{k,1},\\
    		\frac{5}{8}r_k, \; \text{if} \; x=u_{k,j},\;j\neq1,\\
    		\max\left\{\frac{r_k}{2},\max_{u\in\mathcal{U}_k}\left\{\mu_{k,1}(u)-d_{\mathcal{A}}(x,u)\right\}\right\},\\
                \quad \text{if}\;x\in\mathcal{A}\setminus\mathcal{U}_k.
    	\end{cases} \label{eq:constuct1}
    \end{align}
    For $2\leq i\leq M_k$, the expected reward for $I_{k,i}$ is defined as
    \begin{align}
        \mu_{k,i} (x)=  
        \begin{cases} 
        	\frac{3}{4}r_k,\; \text{if} \; x=u_{k,1},\\
        	\frac{7}{8}r_k, \; \text{if} \; x=u_{k,i},\\
        	\frac{5}{8}r_k,\; \text{if} \; x=u_{k,j},\;j\neq1\;\text{and}\;j\neq i,\\
        	\max\left\{\frac{r_k}{2},\max_{u\in\mathcal{U}_k}\left\{\mu_{k,i} (u)-d_{\mathcal{A}}(x,u)\right\}\right\},\\
                \quad \text{if}\;x\in\mathcal{A}\setminus\mathcal{U}_k.
        \end{cases} \label{eq:constuct2}
    \end{align}

    Let $S_{k,i}=\B(u_{k,i},\frac{3}{8}r_k)$ (the ball with center $u_{k,i}$ and radius $\frac{3}{8}r_k$). It is easy to verify the following properties of construction (\ref{eq:constuct1}) and (\ref{eq:constuct2}):
    \begin{enumerate}
        \item For any $2\leq i\leq M_k$, $\mu_{k,i}(x)=\mu_{k,1}(x)$ for any $x\in\mathcal{A}\setminus S_{k,i}$;
        \item For any $2\leq i\leq M_k$, $\mu_{k,1}(x)\leq\mu_{k,i}(x)\leq\mu_{k,1}(x)+\frac{r_k}{4}$, for any $x\in S_{k,i}$;
        \item For any $1\leq i\leq M_k$, under $I_{k,i}$, pulling an arm that is not in $S_{k,i}$ incurs a regret at least $\frac{r_k}{8}$. 
    \end{enumerate}
    
    For all arm pulls in all problem instances, a Gaussian noise sampled from $\mathcal{N}(0,1)$ is added to the observed reward. This noise corruption is independent from all other randomness. 
    
    The lower bound of expected regret relies on the following lemma.
    \begin{lemma}\label{lem:kld}
    	For any policy $\pi$, there exists a problem instance $I\in\mathcal{I}_k$ such that
    	\begin{equation*}
    	    \E \[ R_T(\pi) \] \geq  \frac{r_k}{32}\cdot\sum_{j=1}^B \( t_j-t_{j-1} \) \exp\left\{-\frac{t_{j-1}r_k^2}{32(M_k-1)}\right\}.
    	\end{equation*}
    	 
    \end{lemma}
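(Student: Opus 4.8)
The plan is a change-of-measure argument tailored to the batched structure. The key leverage is that during batch $j$ the policy only sees observations communicated up to time $t_{j-1}$, so for any time $t$ in batch $j$ the played arm $x_t$ is measurable with respect to $\mathcal F_{t_{j-1}}$ (the data up to $t_{j-1}$ together with the policy's internal randomness); hence the play distribution at step $t$ cannot distinguish $I_{k,1}$ from $I_{k,i}$ any better than the KL divergence of the first $t_{j-1}$ observations permits. First I would set up the geometry and the regret bookkeeping. Write $\mathbb P_i,\mathbb E_i$ for the law and expectation of the interaction under instance $I_{k,i}$. Since the peaks $u_{k,1},\dots,u_{k,M_k}$ are $r_k$-separated, the balls $R_i:=B(u_{k,i},\tfrac38 r_k)$ are pairwise disjoint, and from \eqref{eq:constuct1}--\eqref{eq:constuct2} I would record three facts by routine case checks on the piecewise-linear rewards: (a) under $I_{k,1}$ every arm in $R_i$ with $i\ge 2$ has optimality gap at least $\tfrac18 r_k$ (there $\mu_{k,1}\le\tfrac58 r_k$ while $\mu^*=\tfrac34 r_k$); (b) under $I_{k,i}$ with $i\ge2$ every arm outside $R_i$ has optimality gap at least $\tfrac18 r_k$ (there $\mu_{k,i}\le\tfrac34 r_k$ while $\mu^*=\tfrac78 r_k$); (c) $\mu_{k,1}$ and $\mu_{k,i}$ coincide outside $R_i$ and differ by at most $\tfrac14 r_k$ on $R_i$, so, the noise being $\mathcal N(0,1)$, each pull inside $R_i$ contributes at most $\tfrac12(\tfrac14 r_k)^2=\tfrac{r_k^2}{32}$ to the KL divergence between the two observation processes.

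Next I would perform a per-step change of measure. Fix $i\ge2$ and a time $t$ in batch $j$. The Bretagnolle--Huber inequality applied on $\mathcal F_{t_{j-1}}$ gives
\[
\mathbb P_1(x_t\in R_i)+\mathbb P_i(x_t\notin R_i)\ \ge\ \tfrac12\exp\!\big(-\mathrm{KL}_{t_{j-1}}(\mathbb P_1\Vert\mathbb P_i)\big),
\]
and by the chain rule for KL together with fact (c), $\mathrm{KL}_{t_{j-1}}(\mathbb P_1\Vert\mathbb P_i)\le\tfrac{r_k^2}{32}\,\mathbb E_1[N_i(t_{j-1})]$, where $N_i(s)$ counts pulls in $R_i$ up to time $s$. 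Summing the regret lower bounds from (a)--(b) over all steps and grouping by batch yields
\[
\mathbb E_1[R_T]+\mathbb E_i[R_T]\ \ge\ \tfrac18 r_k\sum_{t=1}^T\big(\mathbb P_1(x_t\in R_i)+\mathbb P_i(x_t\notin R_i)\big)\ \ge\ \tfrac{r_k}{16}\sum_{j=1}^B(t_j-t_{j-1})\exp\!\big(-\tfrac{r_k^2}{32}\mathbb E_1[N_i(t_{j-1})]\big).
\]

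Then I would average this over $i=2,\dots,M_k$, use convexity of $x\mapsto e^{-x}$ (Jensen), and the disjointness bound $\sum_{i=1}^{M_k}\mathbb E_1[N_i(t_{j-1})]\le t_{j-1}$, to get
\[
\mathbb E_1[R_T]+\tfrac{1}{M_k-1}\sum_{i=2}^{M_k}\mathbb E_i[R_T]\ \ge\ \tfrac{r_k}{16}\sum_{j=1}^B(t_j-t_{j-1})\exp\!\big(-\tfrac{r_k^2\,t_{j-1}}{32(M_k-1)}\big).
\]
The left-hand side is at most $2\max_{I\in\mathcal I_k}\mathbb E_I[R_T(\pi)]$ (a weighted average with total weight $2$), which gives an instance $I\in\mathcal I_k$ with $\mathbb E[R_T(\pi)]\ge\tfrac{r_k}{32}\sum_{j}(t_j-t_{j-1})\exp\{-t_{j-1}r_k^2/(32(M_k-1))\}$, exactly the claim; note the two occurrences of $\tfrac1{32}$ trace respectively to the Gaussian KL bound and to $\tfrac18\cdot\tfrac12\cdot\tfrac12$.

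The hard part will be Step 2: one must carefully exploit the batched constraint so that the relevant $\sigma$-algebra is $\mathcal F_{t_{j-1}}$ rather than $\mathcal F_{t-1}$, and then turn the chain-rule KL bound into a deterministic quantity governed by the \emph{expected} number of informative pulls $\mathbb E_1[N_i(t_{j-1})]$. The continuum of arms is not an obstacle: by fact (c) only pulls inside the disjoint regions $R_i$ carry information, and each contributes a uniformly bounded amount, so the measure-change estimates reduce to the finitely-many-alternatives case. The geometric facts (a)--(b) and the uniform $\tfrac14 r_k$ bound in (c) are the only places the specific construction \eqref{eq:constuct1}--\eqref{eq:constuct2} enters, and they are elementary.
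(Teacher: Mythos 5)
Your proposal is correct and follows essentially the same route as the paper: the same family of instances with the same three geometric properties, the same per-pull Gaussian KL bound $\tfrac{1}{2}(\tfrac{r_k}{4})^2=\tfrac{r_k^2}{32}$ via the chain rule on the observations up to $t_{j-1}$, and the same Jensen-plus-budget step $\sum_{i}\E_1[N_i(t_{j-1})]\le t_{j-1}$, yielding the identical constant $\tfrac{r_k}{32}$. The only cosmetic difference is that you apply the two-point Bretagnolle--Huber inequality pairwise (bounding $\E_1[R_T]+\E_i[R_T]$ and then averaging over $i$), whereas the paper invokes the multi-hypothesis test lemma of \citet{gao2019batched} with a star graph centered at $I_{k,1}$; these are equivalent here.
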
 

    \begin{proof}
        
        Let $ x_t $ denote the choices of policy $\pi$ at time $t$, and $y_t$ denote the reward. Additionally, for $t_{j-1}<t\leq t_j$, we define $\Pr_{k.i}^t$ as the distribution of sequence $\left(x_1,y_1,\cdots,x_{t_{j-1}},y_{t_{j-1}}\right)$ under instance $I_{k,i}$ and policy $\pi$. It holds that 
        \begin{align}\label{eq:spl} 
            \sup_{I\in\mathcal{I}_k} \E R_T(\pi)  
            \geq
            \frac{1}{M_k}\sum_{i=1}^{M_k} \E_{\Pr_{k,i}} \left[ R_T(\pi) \right]
            \geq
            \frac{1}{M_k}\sum_{i=1}^{M_k} \sum_{t=1}^T\E_{\Pr_{k,i}^t} \left[ R^t(\pi) \right] 
            \geq
            \frac{r_k}{8}\sum_{t=1}^T\frac{1}{M_k}\sum_{i=1}^{M_k} \Pr_{k,i}^t(x_t\notin S_{k,i}),
        \end{align} 
        where $R^t(\pi)$ denotes the regret incurred by policy $\pi$ at time $t$. 
            
        From our construction, it is easy to see that $S_{k,i}\cap S_{k,j}=\varnothing$ for any $i\neq j$, so we can construct a test $\Psi$ such that $x_t\in S_{k,i}$ implies $\Psi= i$.  
        Then from Lemma \ref{lem:test-tree}, 
        \begin{align*}
             \; \frac{1}{M_k}\sum_{i=1}^{M_k} \Pr_{k,i}^t(x_t\notin S_{k,i})
            \geq \;  
            \frac{1}{M_k}\sum_{i=1}^{M_k} \Pr_{k,i}^t(\Psi\neq i) 
            \geq \;  
            \frac{1}{2M_k}\sum_{i=2}^{M_k} \exp 
            \left\{-D_{KL}\left(\Pr_{k,1}^t\|\Pr_{k,i}^t\right)\right\}.
        \end{align*} 

        To avoid notational clutter, for any $s,s'$ ($s \ge s'$), define 
        \begin{align*}
            (\x, \y )_{:s}^{:s'} =& \; (x_1, y_1,\cdots,x_{s'}, y_{s'}, \cdots ,x_s) . 
        \end{align*}
        
        Now we calculate $D_{KL}\left(\Pr_{k,1}^t\|\Pr_{k,i}^t\right)$. From the chain rule of KL-Divergence, we have
        \begin{align}
            & \; D_{KL}\left(\Pr_{k,1}^t\|\Pr_{k,i}^t\right) \nonumber \\
            =& \;
            D_{KL}\left(\Pr_{k,1}^t \( (\x ,\y)_{:t_{j-1}}^{:t_{j-1}} \) \| \Pr_{k,i}^t \( (\x ,\y)_{:t_{j-1}}^{:t_{j-1}} \) \right)\nonumber\\
            =& \;
            D_{KL}\left(\Pr_{k,1}^t \(  (\x ,\y)_{:t_{j-1}}^{:t_{j-1} - 1} \)  \| \Pr_{k,i}^t \( (\x ,\y)_{:t_{j-1}}^{:t_{j-1} - 1} \) \right) 
            +\E_{\Pr_{k,1}}\left(D_{KL}\left(\Pr_{k,1}^t(y_{t_{j-1}}|x_{t_{j-1}}) \| \Pr_{k,i}^t(y_{t_{j-1}}|x_{t_{j-1}})\right)\right) \label{chain_eq1}\\
           =& \;
            D_{KL}\left(\Pr_{k,1}^t \(  (\x ,\y)_{:t_{j-1}-1}^{:t_{j-1} - 1} \)  \| \Pr_{k,i}^t \( (\x ,\y)_{:t_{j-1}-1}^{:t_{j-1} - 1} \) \right)
            +\E_{\Pr_{k,1}}\left(D_{KL}\left(\Pr_{k,1}^t(y_{t_{j-1}}|x_{t_{j-1}}) \| \Pr_{k,i}^t(y_{t_{j-1}}|x_{t_{j-1}})\right)\right) \label{chain_eq1.5}\\
            \leq& \;
            D_{KL}\left(\Pr_{k,1}^t\( (\x ,\y)_{:t_{j-1}-1}^{:t_{j-1} - 1} \) \| \Pr_{k,i}^t\( (\x ,\y)_{:t_{j-1}-1}^{:t_{j-1} - 1} \)\right)
            +\E_{\Pr_{k,1}}\left(D_{KL}\left(N(\mu_{k,1}(x_{t_{j-1}}),1) \| N(\mu_{k,i}(x_{t_{j-1}}),1)\right)\right) \label{chain_eq2}\\
            =& \;
            D_{KL}\left(\Pr_{k,1}^t \( (\x ,\y)_{:t_{j-1}-1}^{:t_{j-1} - 1} \) \| \Pr_{k,i}^t \( (\x ,\y)_{:t_{j-1}-1}^{:t_{j-1} - 1} \)\right) 
            +\E_{\Pr_{k,1}}\left(\frac{1}{2}\left(\mu_{k,1}(x_{t_{j-1}})-\mu_{k,i}(x_{t_{j-1}})\right)^2\right) \nonumber\\
            \leq& \;
            D_{KL}\left(\Pr_{k,1}^t \( (\x ,\y)_{:t_{j-1}-1}^{:t_{j-1} - 1} \) \| \Pr_{k,i}^t \( (\x ,\y)_{:t_{j-1}-1}^{:t_{j-1} - 1} \)\right)
            +\E_{\Pr_{k,1}}\left(\bm{1}_{\{x_{t_{j-1}}\in S_{k,i}\}}\cdot\frac{1}{2}\left(\frac{r_k}{4}\right)^2\right)\label{chain_use_prop}\\
            =& \;
            D_{KL}\left(\Pr_{k,1}^t \( (\x ,\y)_{:t_{j-1}-1}^{:t_{j-1} - 1} \) \| \Pr_{k,i}^t \( (\x ,\y)_{:t_{j-1}-1}^{:t_{j-1} - 1} \)\right)
            +\frac{r_k^2}{32}\cdot\Pr_{k,1}\left(x_{t_{j-1}}\in S_{k,i}\right),\label{chain_singal}
        \end{align}
        where (\ref{chain_eq1}) uses chain rule for KL-divergence and the conditional independence of the reward, (\ref{chain_eq1.5}) removes dependence on $x_{t_{j - 1}}$ in the first term by another use of chain rule and the fact that the distribution of $x_{t_{j-1}}$ is fully determined by the policy and the distribution of $(\x ,\y)_{:t_{j-1}-1}^{:t_{j-1} - 1}$, (\ref{chain_eq2}) uses that the rewards are corrupted by a standard normal noise, and (\ref{chain_use_prop}) uses the first two properties of the construction.
        
        Since (\ref{chain_singal}) holds for all $t \le t_{j-1}$, we conclude that
        \begin{align}\label{chain}
            D_{KL}\left(\Pr_{k,1}^t\|\Pr_{k,i}^t\right)\leq\frac{r_k^2}{32}\sum_{s\leq t_{j-1}}\Pr_{k,1}\left(x_s\in S_{k,i}\right)=\frac{r_k^2}{32} \E_{\Pr_{k,1}}\tau_i,
        \end{align}
        where $\tau_i$ denotes the number of pulls of arms in $S_{k,i}$ before the batch containing $t$. Then for all $t \in ( t_{j-1}, t_j]$, we have  
        \begin{align}
            \frac{1}{M_k}\sum_{i=1}^{M_k} \Pr_{k,i}^t(x_t\notin S_{k,i})
            \geq& \; 
            \frac{1}{2M_k}\sum_{i=2}^{M_k} \exp\left\{-\frac{r_k^2}{32} \E_{\Pr_{k,1}}\tau_i\right\} \nonumber \\ 
            \geq& \;  
            \frac{M_k-1}{2M_k}\exp\left\{-\frac{r_k^2}{32(M_k-1)}\sum_{i=2}^{M_k}\E_{\Pr_{k,1}} \tau_i \right\} \label{eq:use-jensen}  \\
            \geq& \; 
            \frac{1}{4}\exp\left\{-\frac{r_k^2t_{j-1}}{32(M_k-1)}\right\}, \label{eq:hyp}
        \end{align} 
        where (\ref{eq:use-jensen}) uses the Jensen' inequality, and (\ref{eq:hyp}) uses the fact that $\sum_{i=2}^{M_k}\tau_i\leq t_{j-1}$. Finally, we substitute (\ref{eq:hyp}) to (\ref{eq:spl}) to finish the proof. 
    \end{proof} 
    
	Since $M_k=t_{k-1}r_k^2$, the expected regret of policy $\pi$ satisfies
	\begin{align*} 
		\E \[ R_T(\pi) \] 
		&\geq 
		\frac{r_k}{32} \cdot\sum_{j=1}^B \( t_j-t_{j-1} \) \exp\left\{-\frac{t_{j-1}r_k^2}{32(M_k-1)}\right\}\\ 
		&\geq
		\frac{r_k}{32} \cdot\sum_{j=1}^B \( t_j-t_{j-1} \) \exp\left\{-\frac{t_{j-1}r_k^2}{16M_k}\right\}\\
		&\geq 
		\frac{r_k}{32} \cdot\sum_{j=1}^B \( t_j-t_{j-1}\) \exp\left\{-\frac{t_{j-1}}{16t_{k-1}}\right\}
	\end{align*}
	on instance $I$ defined in Lemma \ref{lem:kld}.
	
	By omitting terms with $j > k$ in the above summation, we have 
	\begin{align*}
	    \E[R_T(\pi)]&\geq\frac{r_k}{32} \cdot \sum_{j=1}^B \( t_j-t_{j-1}\) \exp\left\{-\frac{t_{j-1}}{16t_{k-1}}\right\}  \\
	    &\ge 
	    \frac{r_k}{32} \cdot \sum_{j=1}^{k} \( t_j-t_{j-1}\) \exp\left\{-\frac{1}{16}\right\}\\&= \frac{1}{32e^\frac{1}{16}}r_kt_k= \frac{1}{32e^\frac{1}{16}}\cdot\frac{t_k}{t_{k-1}^\frac{1}{d+2}}.
	\end{align*}
   	The above analysis can be applied for any $k>1$. For the first batch $(0, t_1]$, we can easily construct a set of instances where the worst-case regret is at least $t_1$, since no information is available during this time.  Thus, there exists a problem instance such that
   	\begin{align*}
   	    \E[R_T(\pi)]\geq\frac{1}{32e^\frac{1}{16}}\max\left\{t_1,\frac{t_2}{t_1^\frac{1}{d+2}},\cdots,\frac{t_B}{t_{B-1}^\frac{1}{d+2}}\right\}.
   	\end{align*}
    
    Since $0<t_1<\cdots<t_B=T$, we further have
    \begin{align*}
        \max\left\{t_1,\frac{t_2}{t_1^\varepsilon},\cdots,\frac{t_B}{t_{B-1}^\varepsilon}\right\}
        \geq\left(t_1^{\varepsilon^{B-1}}\cdot\left(\frac{t_2}{t_1^\varepsilon}\right)^{\varepsilon^{B-2}}\cdots\left(\frac{t_{B-1}}{t_{B-2}^\varepsilon}\right)^{\varepsilon}\cdot\frac{t_B}{t_{B-1}^\varepsilon}\right)^{\frac{1}{\sum_{i=1}^{B-1}\varepsilon^i}}
        =T^{\frac{1-\varepsilon}{1-\varepsilon^B}},
    \end{align*}
    where $\varepsilon=\frac{1}{d+2}$. The above two inequalities imply that
   	\begin{align*}
	    \E \[ R_T(\pi) \]
	    \geq
	    \frac{1}{32e^\frac{1}{16}}\cdot T^{\frac{1-\frac{1}{d+2}}{1-\left(\frac{1}{d+2}\right)^B}},
	\end{align*}
    which finishes the proof. 
\end{proof}

\subsubsection{Removing the Static Grid Assumption}\label{sec:lb-d-adaptive}
So far we have derived the lower bound for the static grid case. Yet there is a gap between the static and the adaptive case. We will close this gap in the following Theorem.

    

\begin{theorem}\label{thm:ada_lb}
    Consider Lipschitz bandit problems with time horizon $T$ and ambient dimension $d$ such that the grid of reward communication $\mathcal{T}$ is adaptively determined by the player. If $B$ rounds of communications are allowed, then for any policy $\pi$, there exists a problem instance such that 
    \begin{align*}
        \E\left[ R_T(\pi) \right]\geq\frac{1}{256B^2}T^{\frac{1-\frac{1}{d+2}}{1-\left(\frac{1}{d+2}\right)^B}}.
    \end{align*}
\end{theorem}

To prove Theorem \ref{thm:ada_lb}, we consider a reference static grid $\mathcal{T}_r=\{T_0,T_1,\cdots,T_B\}$, where $T_j=T^\frac{1-\varepsilon^j}{1-\varepsilon^B}$ for $\varepsilon=\frac{1}{d+2}$. 
We set the reference grid in this way because it is the solution to the following optimization problem
\begin{equation*}
    \max_{1\leq T_1\leq\cdots\leq T_B=T}\min\left\{T_1,\frac{T_2}{T_1^\varepsilon},\cdots,\frac{T_B}{T_{B-1}^\varepsilon}\right\}.
\end{equation*}
Then we construct a series of `worlds', denoted by $\mathcal{I}_1,\cdots,\mathcal{I}_B$. Each world is a set of problem instances, and each problem instance in world $\mathcal{I}_j$ is defined by peak location set $\mathcal{U}_j$ and basic height $r_j$, where the sets $\mathcal{U}_j$ and quantities $r_j$ for $1\leq j\leq B$ are presented in the proof below.
    
Based on these constructions, we first prove that for any adaptive grid and policy, there exists an index $j$ such that the event $A_j=\{t_{j-1}<T_{j-1},\;t_j\geq T_j\}$ happens with sufficiently high probability in world $\mathcal{I}_j$. Then similar to Theorem \ref{st_lb}, we prove that in world $\mathcal{I}_j$ there exists a set of problem instances that is difficult to differentiate in the first $j-1$ batches. In addition, event $A_j$ implies that $t_j\geq T_j$, so the worst-case regret is at least $r_jT_j$, which gives the lower bound we need.

\begin{proof}[Proof of Theorem \ref{thm:ada_lb}]
    
    
    Firstly, we define $r_j=\frac{1}{T_{j-1}^\varepsilon B}$, where $\varepsilon=\frac{1}{d+2}$, and define $M_j=\frac{1}{r_j^d}$.
    From the definition, we have
    \begin{align} 
        T_{j-1}r_j^2=\frac{1}{r_j^d B^{d+2}}\leq\frac{1}{r_j^dB^2}=\frac{M_j}{B^2}. \label{eq:rm-relation}
    \end{align}
    For $1\leq j\leq B$, we can find sets of arms $\mathcal{U}_j=\{u_{j,1},\cdots,u_{j,M_j}\}$ such that (a) $d_{\mathcal{A}}(u_{j,m},u_{j,n})\geq r_j$ for any $m\neq n$, and (b) $u_{1,M_1}=\cdots=u_{B,M_B}$. 
    
     Then we present the construction of worlds $\mathcal{I}_1,\cdots,\mathcal{I}_B$. For $1\leq j\leq B-1$, we let $\mathcal{I}_j=\{I_{j,k}\}_{k=1}^{M_j-1}$, and the expected reward of $I_{j,k}$ is defined as
    \begin{align}
    	\mu_{j,k}(x)=\left\{
    	\begin{aligned}
    		&\frac{r_1}{2}+\frac{r_j}{16}+\frac{r_B}{16},&x=u_{j,k},\\
    		&\frac{r_1}{2}+\frac{r_B}{16},&x=u_{j,M_j},
    	\end{aligned}\right. \label{eq:instance-ada1}
    \end{align}
    and $\mu_{j,k}(x)=\max\left\{\frac{r_1}{2},\max_{u\in\mathcal{U}_j}\left\{\mu_{j,k}(u)-d_\mathcal{A}(x,u)\right\}\right\}$, otherwise.
    For $j=B$, we let $\mathcal{I}_B=\{I_B\}$. The expected reward of $I_B$ is defined as
    \begin{align}
    	\mu_B (u_{B,M_B})=\frac{r_1}{2}+\frac{r_B}{16} \label{eq:instance-ada2}
    \end{align}
    and $\mu_B(x)=\max\left\{\frac{r_1}{2},\mu_B(u_{B,M_B})-d_{\mathcal{A}}(x,u_{B,M_B})\right\}$, otherwise. Roughly speaking, our constructions satisfy two properties: for each $j\neq B$ and $1\leq k\leq M_j-1$, 
    \begin{enumerate}
        \item $\mu_{j,k}$ is close to $\mu_B$;
        \item under $I_{j,k}$, pulling an arm that is far from $u_{j,k}$ incurs a regret at least $\frac{r_j}{16}$.
    \end{enumerate}
    The formal version of these two properties are presented in the proof of Lemma \ref{exist_pj} and Lemma \ref{adagrid_lemma}. 
    
    As mentioned above, based on these constructions, we first show that for any adaptive grid $ \mathcal{T} = \{t_0,\cdots,t_B\} $, there exists an index $j$ such that $(t_{j-1},t_j]$ is sufficiently large in world $\mathcal{I}_j$. More formally, for each $j\in[B]$, and event $A_j=\{t_{j-1}<T_{j-1},\;t_j\geq T_j\}$, we define the quantities $p_j := \frac{1}{M_j-1}\sum_{k=1}^{M_j-1}\Pr_{j,k}(A_j)$ for $j\leq B-1$ and $p_B := \Pr_B(A_B)$,
    where $\Pr_{j,k}(A_j)$ denotes the probability of the event $A_j$ under instance $I_{j,k}$ and policy $\pi$. For these quantities, we have the following lemma.

\begin{lemma}
    \label{exist_pj} 
    For any adaptive grid $\mathcal{T}$ and policy $\pi$, it holds that $\sum_{j=1}^B p_j\geq\frac{7}{8}.$  
\end{lemma}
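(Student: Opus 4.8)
The plan is to argue by contradiction: suppose $\sum_{j=1}^{B} p_j < \frac{7}{8}$, and derive a contradiction from the fact that the grid $\mathcal{T} = \{t_0, \dots, t_B\}$ must satisfy $t_0 = 0 < t_1 < \cdots < t_B = T$, together with $T_0 = 0 < T_1 < \cdots < T_B = T$. The key observation is a pigeonhole-type statement about how an increasing grid of $B$ points must ``cross'' the increasing reference grid $\mathcal{T}_r = \{T_0, \dots, T_B\}$: since both grids start at $0$ and end at $T$ and each has exactly $B$ nontrivial points, there must be some index $j$ for which $t_{j-1} < T_{j-1}$ and $t_j \ge T_j$ — i.e.\ the player's $j$-th batch boundary has not yet reached $T_{j-1}$ while its $j$-th boundary has passed $T_j$. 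Concretely, one shows that the event $\bigcup_{j=1}^B A_j$ holds with probability $1$ along \emph{every} sample path (this is a deterministic combinatorial fact about any realization of the adaptive grid), so $\sum_j \Pr(A_j) \ge 1$ under any fixed instance and policy.

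The subtlety — and the main obstacle — is that $p_j$ is not $\Pr(A_j)$ under a single instance: it is an average of $\Pr_{j,k}(A_j)$ over the world $\mathcal{I}_j$, and different $j$'s refer to different worlds. So I would proceed as follows. First I would fix an arbitrary ``anchor'' instance, most naturally $I_B$ (which has only one peak, at the common point $u_{B,M_B}$), and observe that along any sample path under $I_B$ and $\pi$, the deterministic crossing argument gives $\sum_{j=1}^B \mathbf{1}_{A_j} \ge 1$, hence $\sum_{j=1}^B \Pr_B(A_j) \ge 1$. Next I would show that for each $j \le B-1$, the instances $I_{j,k}$, $1 \le k \le M_j - 1$, are statistically close to $I_B$ \emph{on the event $A_j$}, or more precisely close enough up to time $t_{j-1}$ (which on $A_j$ is $< T_{j-1}$), because the instances $\mathcal{I}_1, \dots, \mathcal{I}_B$ are built with a nested/telescoping structure: the reward functions differ only inside small balls $B(u_{j,k}, \frac{3}{8}r_j)$, the peak heights decrease geometrically, and the number of samples available before $T_{j-1}$ is too small to distinguish a ``bump'' of height $\approx r_j$ placed among $M_j - 1$ candidate locations. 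A KL-divergence / Pinsker bound of exactly the type used in Lemma \ref{lem:kld} and Lemma \ref{lem:test-tree} shows $|\Pr_{j,k}(A_j) - \Pr_B(A_j)|$ is small on average over $k$ — say at most $\frac{1}{8B}$ after summing the per-step contributions up to $T_{j-1}$ and using $T_{j-1} r_j^2 = M_j / B^2$ from \eqref{eq:rm-relation}, which makes the exponent $O(1/B^2)$ and the averaged total variation $O(1/B)$.

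Combining these, $\sum_{j=1}^{B} p_j \ge \sum_{j=1}^{B} \Pr_B(A_j) - \sum_{j=1}^{B-1} \frac{1}{8B} - (\text{the } j=B \text{ term is exact}) \ge 1 - \frac{1}{8} = \frac{7}{8}$, which is the claim. The hard part will be making the ``close to $I_B$ on $A_j$'' step rigorous: one must be careful that $A_j$ involves the stopping time $t_{j-1}$, which is itself adaptive and defined from the observed data, so the coupling between $\Pr_{j,k}$ and $\Pr_B$ must be set up on the filtration up to the (random) time $t_{j-1}$, and one needs that on $A_j$ we have $t_{j-1} < T_{j-1}$ so that only $< T_{j-1}$ samples have informed the decision; the geometric decay of the heights $r_j$ and the packing sizes $M_j$ is exactly what is engineered to absorb the resulting error into a constant. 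I would also double-check the boundary indices ($j=1$, where no data is available before $t_0 = 0$, so $\Pr_{1,k}(A_1)$ is genuinely instance-independent, and $j=B$, which is the anchor and contributes exactly) to confirm the telescoping sum closes with the stated constant $\frac{7}{8}$.
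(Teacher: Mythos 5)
Your proposal follows essentially the same route as the paper's proof: anchor on the single-peak instance $I_B$, use the deterministic crossing fact $\Pr_B\left(\cup_{j=1}^B A_j\right)=1$ (which closes precisely because $T_0=1>0=t_0$, so $A_1$ reduces to $\{t_1\ge T_1\}$), and transfer to each world $\mathcal{I}_j$ via a chain-rule KL / Bretagnolle--Huber bound averaged over $k$, using $T_{j-1}r_j^2=M_j/B^2$ to get $|p_j-\Pr_B(A_j)|\le \frac{1}{8B}$, with the $j=B$ term exact since $p_B=\Pr_B(A_B)$. This is the paper's argument in all essentials, so no further comparison is needed.
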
 

\begin{proof}
    For $1\leq j\leq B-1$ and $1\leq k\leq M_j-1$, we define $S_{j,k}=\B(u_{j,k},\frac{3}{8}r_j)$, which is the ball centered as $ u_{j,k} $ with radius $ \frac{3}{8}r_j $. It is easy to verify the following properties of our construction (\ref{eq:instance-ada1}) and (\ref{eq:instance-ada2}):
    \begin{enumerate}
    	\item $\mu_{j,k}(x)=\mu_B (x)$ for any $x\notin S_{j,k}$;
    	\item $\mu_B (x) \leq \mu_{j,k}(x)\leq\mu_B (x)+\frac{r_j}{8}$, for any $x\in S_{j,k}$.
    \end{enumerate}
    Let $ x_t $ denote the choices of policy $\pi$ at time $t$, and $y_t$ denote the reward. For $t_{j-1}<t\leq t_j$, we define $\Pr_{j,k}^t$ (resp. $\Pr_{B}^{t}$) as the distribution of sequence $\left(x_1,y_1,\cdots,x_{t_{j-1}},y_{t_{j-1}}\right)$ under instance $I_{j,k}$ (resp. $I_B$) and policy $\pi$. Since event $A_j$ can be completely described by the observations up to time $T_{j-1}$ ($A_j$ is an event in the $\sigma$-algebra where $\Pr_{j,k}^{T_{j-1}}$ and $\Pr_B^{T_{j-1}}$ are defined on), we can use the definition of total variation to get 
    \begin{align*}
    	|\Pr_B (A_j)-\Pr_{j,k}(A_j)| 
            = \;  
            | \Pr_B^{T_{j-1}} (A_j) - \Pr_{j,k}^{T_{j-1}} (A_j) | 
    	\leq\;  
            TV \(\Pr_B^{T_{j-1}},\Pr_{j,k}^{T_{j-1}} \) . 
    \end{align*}

    For the total variation, we apply Lemma \ref{lem:pinsker-type} to get 
    \begin{align*} 
    	 \; \frac{1}{M_j-1} \sum_{k=1}^{M_j-1} TV \( \Pr_B^{T_{j-1}},\Pr_{j,k}^{T_{j-1}}\) 
    	\leq \; 
    	\frac{1}{M_j-1} \sum_{k=1}^{M_j-1} \sqrt{1-\exp\(-D_{KL}\(\Pr_B^{T_{j-1}}\|\Pr_{j,k}^{T_{j-1}}\)\)}. 
    \end{align*} 
    An argument similar to (\ref{chain}) yields that
    \begin{align*}
    	D_{KL}\(\Pr_B^{T_{j-1}}\|\Pr_{j,k}^{T_{j-1}} \) \le \frac{r_j^2}{128} \E_{\Pr_B}\tau_k, 
    \end{align*}
    where $\tau_k$ denotes the number of pulls which is in $S_{j,k}$ before the batch containing $T_{j-1}$. Combining the above two inequalities gives
    \begin{align} 
    	\frac{1}{M_j-1} \sum_{k=1}^{M_j-1} TV \( \Pr_B^{T_{j-1}},\Pr_{j,k}^{T_{j-1}}\)
    	\leq& \;  
    	\frac{1}{M_j-1} \sum_{k=1}^{M_j-1}\sqrt{1- \exp\(-\frac{r_j^2}{128} \E_{\Pr_B}\tau_k\)} \nonumber \\ 
    	\leq& \;  
    	\sqrt{1-\exp\(-\frac{r_j^2}{128(M_j-1)}\E_{\Pr_B}\[\sum_{k=1}^{M_j-1}\tau_k\]\)} \label{l7_jenson} \\ 
    	\leq& \;  
    	\sqrt{  1-\exp\left(-\frac{r_j^2T_{j-1}}{128(M_j-1)}\right)} \label{l7_total} \\ 
    	\leq& \;  
    	\sqrt{ 1-\exp\left(-\frac{1}{64B^2}\right)} \label{l7_useeq} \\ 
    	\leq& \;  \frac{1}{8B}, \nonumber 
    \end{align}
    where (\ref{l7_jenson}) uses Jensen's inequality, (\ref{l7_total}) uses the fact that $\sum_{k=1}^{M_j-1} \tau_k\leq T_{j-1}$, and (\ref{l7_useeq}) uses (\ref{eq:rm-relation}).

    Plugging the above results implies that
    \begin{align*} 
        |\Pr_B(A_j) - p_j| \le \frac{1}{M_j -1 } \sum_{k=1}^{M_j-1} |\Pr_B (A_j)-\Pr_{j,k}(A_j)| \le  \frac{1}{8B}. 
    \end{align*}
    Since $\sum_{j=1}^B \Pr_B \(  A_j \) \ge \Pr_B \( \cup_{j=1}^B A_j \) = 1 $, it holds that 
    \begin{equation*} 
        \sum_{j=1}^B p_j \geq \Pr_B (A_B) +\sum_{j=1}^{B-1} \(\Pr_B(A_j)-\frac{1}{8B} \) \geq \frac{7}{8}.\qedhere
    \end{equation*} 
\end{proof}

Lemma \ref{exist_pj} implies that there exists some $j$ such that $p_j>\frac{7}{8B}$. Then similar to Theorem \ref{st_lb}, we show that the worst-case regret of the policy in world $\mathcal{I}_j$ gives the lower bound we need. 

\begin{lemma}\label{adagrid_lemma}
    For adaptive grid $\mathcal{T}$ and policy $\pi$, if index $j$ satisfies $p_j\geq\frac{7}{8B}$, then 
    there exists a problem instance $I$ such that 
    \begin{align*}
        \E\left[ R_T(\pi) \right]\geq\frac{1}{256B^2}T^{\frac{1-\frac{1}{d+2}}{1-\left(\frac{1}{d+2}\right)^B}}.
    \end{align*}
\end{lemma}

\begin{proof}
   Here we proceed with the case where $j \le B-1$. The case for $j = B$ can be proved analogously. 
   
   For any $1\leq k \leq M_j-1$, we construct a set of problem instances $\mathcal{I}_{j,k} = \left(I_{j,k,l} \right)_{1\leq l\leq M_j}$.
    For $l\neq k$, the expected reward of $I_{j,k,l}$ is defined as 
    \begin{align*}
    	\mu_{j,k,l}(x) 
            =
    	    \begin{cases}
    		\mu_{j,k}(x)+\frac{3r_j}{16}, \text{ if }x=u_{j,l},\\
    		\mu_{j,k}(x), \text{ if } x \in \mathcal{U}_j \text{ and }  x \neq u_{j,l} , \\ 
    		\max\left\{\frac{r_1}{2},\max_{u\in\mathcal{U}_j}\left\{\mu_{j,k,l}(u)-d_{\mathcal{A}}(x,u)\right\}\right\}, \text{ otherwise.}
    	\end{cases}
    \end{align*} 
    where $\mu_{j,k}$ is defined in (\ref{eq:instance-ada1}).
    For $l=k$, we let $\mu_{j,k,k}=\mu_{j,k}$. 
    
    We define $C_{j,k}= \B \(u_{j,k},\frac{r_j}{4} \)$, and our construction $\mathcal{I}_{j,k}$ has the following properties:
    \begin{enumerate}
        \item For any $l\neq k$, $\mu_{j,k,l}(x)=\mu_{j,k,k}(x)$ for any $x\notin C_{j,l}$;
        \item For any $l\neq k$, $\mu_{j,k,k}(x)\leq\mu_{j,k,l}(x)\leq\mu_{j,k,k}(x)+\frac{3r_j}{16}$ for any $x\in C_{j,l}$;
        \item For any $1\leq l\leq M_j$, under $I_{j,k,l}$, pulling an arm that is not in $C_{j,l}$ incurs a regret at least $\frac{r_j}{16}$.
    \end{enumerate}
    Let $x_t$ denote the choices of policy $\pi$ at time $t$, and $y_t$ denote the reward. For $t_{j-1}<t\leq t_j$, we define $\Pr_{j,k,l}^t$ as the distribution of sequence $\left(x_1,y_1,\cdots,x_{t_{j-1}},y_{t_{j-1}}\right)$ under instance $I_{j,k,l}$ and policy $\pi$. From similar argument in (\ref{eq:spl}), it holds that
    \begin{align}
        \sup_{I\in\mathcal{I}_{j,k}}\E \[ R_T(\pi) \] 
        \geq
        \frac{r_j}{16} \sum_{t=1}^T \frac{1}{M_j} \sum_{l=1}^{M_j} \Pr_{j,k,l}^t (x_t\notin C_{j,l}). \label{eq:lower-bound-ada1} 
    \end{align}
    From our construction, it is easy to see that $C_{j,k_1}\cap C_{j,k_2}=\varnothing$ for any $k_1\neq k_2$, so we can construct a test $\Psi$ such that $x_t\in C_{j,k}$ implies $\Psi= k$. By Lemma \ref{lem:test-tree} with a star graph on $[K]$ with center $k$, we have 
    \begin{align}
        \frac{1}{M_j} \sum_{l=1}^{M_j} \Pr_{j,k,l}^t(x_t\notin C_{j,l}) 
        \ge 
        \frac{1}{ M_j } \sum_{ l \neq k } \int \min \left\{d \Pr_{j,k,k}^t,d \Pr_{j,k,l}^t\right\} . \label{eq:lower-bound-ada2}
    \end{align} 
    
    Combining (\ref{eq:lower-bound-ada1}) and (\ref{eq:lower-bound-ada2}) gives
    \begin{align}
        \sup_{I\in\mathcal{I}_{j,k}} \E \[ R_T(\pi) \]
        \geq& \;  
        \frac{r_j}{16} \sum_{t=1}^T \frac{1}{M_j}\sum_{l\neq k}\int \min \left\{d \Pr_{j,k,k}^t,d \Pr_{j,k,l}^t\right\}\nonumber  \\ 
        \geq& \; 
        \frac{r_j}{16}\sum_{t=1}^{T_j}\frac{1}{M_j}\sum_{l\neq k}\int\min\left\{d \Pr_{j,k,k}^t,d \Pr_{j,k,l}^t\right\}\nonumber \\
        \geq& \;  
        \frac{r_jT_j}{16}\cdot\frac{1}{M_j}\sum_{l\neq k}\int\min\left\{d \Pr_{j,k,k}^{T_j},  d\Pr_{j,k,l}^{T_j}\right\}\label{eq:lower_cons_1_1} \\ 
        \geq& \;  
        \frac{r_jT_j}{16}\cdot\frac{1}{M_j}\sum_{l\neq k}\int_{A_j}\min\left\{ d \Pr_{j,k,k}^{T_j}, d \Pr_{j,k,l}^{T_j}\right\}\label{eq:lower_cons_1_2} \\ 
        \geq& \;  
        \frac{r_jT_j}{16}\cdot\frac{1}{M_j}\sum_{l\neq k}\int_{A_j}\min\left\{ d \Pr_{j,k,k}^{T_{j-1}}, d \Pr_{j,k,l}^{T_{j-1}}\right\}, \label{eq:lower_cons_1}
    \end{align}  
    where (\ref{eq:lower_cons_1_1}) follows from data processing inequality of total variation and the equation $\int\min\left\{dP,dQ\right\}=1-TV(P,Q)$, (\ref{eq:lower_cons_1_2}) restricts the integration to event $A_j$, and (\ref{eq:lower_cons_1}) holds because the observations at time $T_j$ are the same as those at time $T_{j-1}$ under event $A_j$.
    
    For the term $\int_{A_j}\min\left\{d\Pr_{j,k,k}^{T_{j-1}},d\Pr_{j,k,l}^{T_{j-1}}\right\}$, it holds that 
    \begin{align}
        \int_{A_j}\min\left\{d\Pr_{j,k,k}^{T_{j-1}},d\Pr_{j,k,l}^{T_{j-1}}\right\}
        =&\; \int_{A_j}\frac{d\Pr_{j,k,k}^{T_{j-1}}+d\Pr_{j,k,l}^{T_{j-1}}-\left|d\Pr_{j,k,k}^{T_{j-1}}-d\Pr_{j,k,l}^{T_{j-1}}\right|}{2}\nonumber\\ 
        =&\; 
        \frac{\Pr_{j,k,k}^{T_{j-1}}(A_j)+\Pr_{j,k,l}^{T_{j-1}}(A_j)}{2}-\frac{1}{2}\int_{A_j}\left|d\Pr_{j,k,k}^{T_{j-1}}-d\Pr_{j,k,l}^{T_{j-1}}\right|\nonumber\\
        \geq&\;  
        \(\Pr_{j,k,k}^{T_{j-1}}(A_j)-\frac{1}{2}TV \(\Pr_{j,k,k}^{T_{j-1}},\Pr_{j,k,l}^{T_{j-1}}\)\) -TV \(\Pr_{j,k,k}^{T_{j-1}},\Pr_{j,k,l}^{T_{j-1}}\)\label{eq:lower_cons_2_1}\\ 
        =&\; 
        \Pr_{j,k}(A_j)-\frac{3}{2}TV \( \Pr_{j,k,k}^{T_{j-1}},\Pr_{j,k,l}^{T_{j-1}} \), \label{eq:lower_cons_2} 
    \end{align} 
    where (\ref{eq:lower_cons_2_1}) uses the inequality $|\Pr(A)-\mathbb{Q}(A)|\leq TV(\Pr,\mathbb{Q})$, and (\ref{eq:lower_cons_2}) holds because $I_{j,k}=I_{j,k,k}$ and $A_j$ can be determined by the observations up to $T_{j-1}$.
    
    We use an argument similar to (\ref{chain}) to get 
    \begin{align*}
    	D_{KL}\(\Pr_{j,k,k}^{T_{j-1}}\|\Pr_{j,k,l}^{T_{j-1}}\)
    	\le
    	\frac{1}{2}\cdot\left(\frac{3r_j}{16}\right)^2\E_{\Pr_{j,k}}\tau_l
    	\le \frac{r_j^2}{32} \E_{\Pr_{j,k}}\tau_l,
    \end{align*}
    where $\tau_l$ denotes the number of pulls which is in $C_{j,l}$ before the batch of time $T_{j-1}$. Then from Lemma \ref{lem:pinsker-type}, we have
    \begin{align}
        \frac{1}{M_j}\sum_{l\neq k} TV \( \Pr_{j,k,k}^{T_{j-1}},\Pr_{j,k,l}^{T_{j-1}}\)
        \leq& \; 
        \frac{1}{M_j}\sum_{l\neq k}\sqrt{1-\exp\(-D_{KL}\(\Pr_{j,k,k}^{T_{j-1}}\|\Pr_{j,k,l}^{T_{j-1}}\)\)} \nonumber\\ 
        \leq& \; 
        \frac{1}{M_j}\sum_{l\neq  k}\sqrt{1-\exp\(-\frac{r_j^2}{32}\E_{\Pr_{j,k}}\tau_l\)} \nonumber\\ 
        \leq& \; 
        \frac{M_j-1}{M_j}\sqrt{1-\exp\(-\frac{r_j^2}{32(M_j-1)}\sum_{l\neq k}\E_{\Pr_{j,k}}\tau_l\)} \nonumber\\ 
        \leq& \; 
        \frac{M_j-1}{M_j}\sqrt{1-\exp\(-\frac{r_j^2 T_{j-1}}{32(M_j-1)}\)} \nonumber\\
        \leq& \; 
        \frac{M_j-1}{M_j}\sqrt{1-\exp\(-\frac{M_j}{32(M_j-1) B^2}\)} \label{eq:lower_cons_3_1}\\
        \leq& \; 
        \frac{M_j-1}{M_j}\sqrt{\frac{M_j}{32(M_j-1) B^2}}\nonumber\\
        \leq& \; 
        \frac{1}{4B},\label{eq:lower_cons_3} 
    \end{align}
    where (\ref{eq:lower_cons_3_1}) uses (\ref{eq:rm-relation}).
         
    Combining (\ref{eq:lower_cons_1}), (\ref{eq:lower_cons_2}) and  (\ref{eq:lower_cons_3}) yields that
    \begin{align*} 
        \sup_{I\in\mathcal{I}_{j,k}}\E \[R_T(\pi) \] 
        \geq 
        \frac{1}{16}  r_jT_j\(\frac{\Pr_{j,k}(A_j)}{2}-\frac{3}{8B}\) \geq 
        \frac{1}{16B} T^{\frac{1-\varepsilon}{1 -  \varepsilon^B}}\(\frac{\Pr_{j,k}(A_j)}{2} -  \frac{3}{8B}\),
    \end{align*}  
    where $\varepsilon=\frac{1}{d+2}$. This inequality holds for any $k \leq M_j-1$. Averaging over $k$ yields
    \begin{align*}
            \sup_{I\in\cup_{k\leq M_j-1}\mathcal{I}_{j,k}}\E \[ R_T(\pi) \]
            \geq& \; 
            \frac{1}{16B} T^{\frac{1-\varepsilon}{1-\varepsilon^B}} \( \frac{1}{2(M_j-1)}\sum_{k=1}^{M_j-1} \Pr_{j,k}(A_j)-\frac{3}{8B} \)\\ 
            \geq& \; 
            \frac{1}{16B} T^{\frac{1-\varepsilon}{1-\varepsilon^B}} \( \frac{7}{16B}-\frac{3}{8B} \)\\ 
            \geq& \;  
            \frac{1}{256B^2}T^{\frac{1-\varepsilon}{1-\varepsilon^B}}, 
    \end{align*}
    where the second inequality holds from $p_j\geq\frac{7}{8B}$. Hence, the proof of Lemma \ref{adagrid_lemma} is completed.
\end{proof}

Finally, combining the above two lemmas, we arrive at the lower bound in Theorem \ref{thm:ada_lb}.
\end{proof}

\subsection{Communication Lower bound for Lipschitz Bandits with Batched Feedback}\label{sec:lb-dz}

Based on the constructions for the full-dimension case, we are now ready to present the theoretical lower bound of batched Lipschitz bandits with $d_z\leq d$. For easy understanding, we start with the result for the static grid case. In this section we provide lower bounds for integer-valued $d_z$. In general, the zooming dimension is not necessarily an integer.

\begin{theorem} \label{thm:lower-static-dz}
    Consider Lipschitz bandit problems with time horizon $T$, ambient dimension $d$ and zooming dimension $d_z\leq d$ such that the grid of reward communication $\mathcal{T}$ is static and determined before the game. If $B$ rounds of communications are allowed, then for any policy $\pi$, there exists a problem instance with zooming dimension $d_z$ such that
    \begin{equation*}
        \E[R_T(\pi)]\geq \frac{1}{64e^\frac{1}{16}}\cdot T^{\frac{1-\frac{1}{d_z+2}}{1-\left(\frac{1}{d_z+2}\right)^B}}.
    \end{equation*}
\end{theorem}

The proof technique of Theorem \ref{thm:lower-static-dz} is similar to that of Theorem \ref{st_lb}, with the main difference being the construction of hard instances. To build instances satisfying the statement in Theorem \ref{thm:lower-static-dz}, we first construct reward functions in a $d_z$-dimensional subspace according to the argument in Theorem \ref{st_lb}, and then use a ``linear-decaying extension'' technique to transfer them to the $d$-dimensional whole space. Below, we present the detailed construction of the hard instances.

To begin with, we introduce some settings and notations. For a given $d$, we consider the whole space $\mathcal{A}=[0,1]^d$ and $d_{\mathcal{A}}$ being the induced metric of $\|\cdot\|_\infty$. $\mathcal{A}$ can be represented by a Cartsian product $[0,1]^d=[0,1]^{d_z}\times[0,1]^{d-d_z}$. Therefore, each element in $\A$ can be represented as $(\alpha,\beta)$, the concatenation of $\alpha\in[0,1]^{d_z}$ and $\beta\in[0,1]^{d-d_z}$. Additionally, we use $\mathcal{A}_{d_z}$ to denote the $d_z$-dimensional subspace $\{(\alpha,0_{d-d_z})|\alpha\in[0,1]^{d_z}\}$, where $0_{d-d_z}$ is the zero vector in $[0,1]^{d-d_z}$.

For any fixed $k$, let $r_k=\frac{1}{t_{k-1}^\frac{1}{d_z+2}}$ and $M_k=t_{k-1}r_k^2=\frac{1}{r_k^{d_z}}$. We can find a set of arms $\mathcal{U}_k=\{u_{k,1},\cdots,u_{k,M_k}\}\subset\mathcal{A}_{d_z}$ such that $d_{\mathcal{A}}(u_{k,i},u_{k,j})\geq r_k$ for any $i\neq j$. As stated above, we first construct a set of expected reward functions $\{\nu_{k,1},\cdots,\nu_{k,M_k}\}$ on $\A_{d_z}$, which are the same as (\ref{eq:constuct1}) and (\ref{eq:constuct2}) in the proof of Theorem \ref{st_lb}. We define $\nu_{k,1}$ as
    \begin{align}\label{eq:constuct1-dz}
    	\nu_{k,1} (z)
            = 
    	\begin{cases} 
    		\frac{3}{4}r_k, \; \text{if} \; z=u_{k,1},\\
    		\frac{5}{8}r_k, \; \text{if} \; z=u_{k,j},\;j\neq1,\\
    		\max\left\{\frac{r_k}{2},\max_{u\in\mathcal{U}_k}\left\{\nu_{k,1}(u)-d_{\mathcal{A}}(z,u)\right\}\right\},\\
            \quad \text{if}\;z\in\mathcal{A}_{d_z}\setminus\mathcal{U}_k.
    	\end{cases} 
    \end{align}
    For $2\leq i\leq M_k$, $\nu_{k,i}$ is defined as
    \begin{align}\label{eq:constuct2-dz}
        \nu_{k,i} (z)
        = 
        \begin{cases} 
        	\frac{3}{4}r_k,\; \text{if} \; z=u_{k,1},\\
        	\frac{7}{8}r_k, \; \text{if} \; z=u_{k,i},\\
        	\frac{5}{8}r_k,\; \text{if} \; z=u_{k,j},\;j\neq1\;\text{and}\;j\neq i,\\
        	\max\left\{\frac{r_k}{2},\max_{u\in\mathcal{U}_k}\left\{\nu_{k,i} (u)-d_{\mathcal{A}}(z,u)\right\}\right\},\\
            \quad \text{if}\;z\in\mathcal{A}_{d_z}\setminus\mathcal{U}_k.
        \end{cases} 
    \end{align}
    Based on $\{\nu_{k,i}\}_{i=1}^{M_k}$, we define a set of problem instances $\I_k=\{I_{k,1},\cdots,I_{k,M_k}\}$. For each $1\leq i\leq M_k$, the expected reward for $I_{k,i}$ is defined as
    \begin{align}\label{eq:constuct3-dz}
        \mu_{k,i}((\alpha,\beta))
        =\frac{1}{2}\cdot\min\left\{\nu_{k,i}((\alpha,0_{d-d_z})), \frac{7}{8}r_k-\|\beta\|_\infty\right\},
    \end{align}
    where $(\alpha,\beta)$ is the concatenation of $\alpha\in[0,1]^{d_z}$ and $\beta\in[0,1]^{d-d_z}$. Note that $\nu((\alpha,0_{d-d_z}))$ is well-defined since $(\alpha,0_{d-d_z})\in\mathcal{A}_{d_z}$.

    For all arm pulls in all problem instances, an Gaussian noise sampled from $\mathcal{N}(0,1)$ is added to the observed reward. This noise corruption is independent from all other randomness. 

    Now we show that for each $1\leq k\leq B$ and $1\leq i\leq M_k$, $\mu_{k,i}$ is $1$-Lipschitz and the zooming dimension equals to $d_z$. Firstly, for any $(\alpha_1,\beta_1)$ and $(\alpha_2,\beta_2)$, we have
    \begin{align*}
        \mu_{k,i}((\alpha_1,\beta_1))-\mu_{k,i}((\alpha_2,\beta_2))
        \leq&\mu_{k,i}((\alpha_1,\beta_1))-\mu_{k,i}((\alpha_1,\beta_2))
        +\mu_{k,i}((\alpha_1,\beta_2))-\mu_{k,i}((\alpha_2,\beta_2))\\
        \leq&\frac{1}{2}\|\beta_1-\beta_2\|_\infty
        +\frac{1}{2}\big(\nu_{k,i}((\alpha_1,0_{d-d_z}))-\nu_{k,i}((\alpha_2,0_{d-d_z}))\big)\\
        \leq&\frac{1}{2}(\|\beta_1-\beta_2\|_\infty+\|\alpha_1-\alpha_2\|_\infty)\\
        \leq&\|(\alpha_1-\alpha_2,\beta_1-\beta_2)\|_\infty.
    \end{align*}
    Therefore, $\mu_{k,i}$ is $1$-Lipschitz. Secondly, for any $r\geq r_k$, (\ref{eq:constuct3-dz}) yields that $S(16r)=[0,1]^{d_z}\times[0, 32r]^{d-d_z}$. As a consequence, we have $N_r=32^{d-d_z}r^{-d_z}$, and the zooming dimension equals to $d_z$.

    After presenting the new constructions, we show that similar argument to the full-dimension case gives the lower bound we need. The remaining proof of Theorem \ref{thm:lower-static-dz} is deferred to Appendix \ref{app:lower-dz-static}.

    Finally, we combine all techniques in above analysis to obtain the lower bound for general $d_z$ and adaptive grid, and thus prove Theorem \ref{thm:lower-adaptive-dz-intro}.

    \begin{theorem}
        \label{thm:lower-adaptive-dz}
        Consider Lipschitz bandit problems with time horizon $T$, ambient dimension $d$ and zooming dimension $d_z\leq d$ such that the grid of reward communication $\mathcal{T}$ is adaptively determined by the player. If $B$ rounds of communications are allowed, then for any policy $\pi$, there exists a problem instance with zooming dimension $d_z$ such that 
    	\begin{align*} 
    	    \E \left[ R_T(\pi) \right] 
    	    \geq \frac{1}{512B^2}T^{\frac{1-\frac{1}{d_z+2}}{1-\left(\frac{1}{d_z+2}\right)^B}}.
    	\end{align*}
    \end{theorem}

    The proof of Theorem \ref{thm:lower-adaptive-dz} is deferred to Appendix \ref{app:lower-dz-adaptive}.


\section{Experiments}\label{exp}
    In this section, we present numerical studies of A-BLiN. In the experiments, we use the arm space $\mathcal{A}=[0,1]^2$ and the expected reward function $\mu(x)=1 - \frac{1}{2}\|x-x_1\|_2 -\frac{3}{10}\|x-x_2\|_2$, where $x_1=(0.8,\;0.7)$ and $x_2=(0.1,\;0.1)$. The landscape of $\mu$ and the resulting partition is shown in Figure \ref{parti}. As can be seen, the partition is finer in the area closer to the optimal arm $x^*=(0.8,\;0.7)$.

    \begin{figure*}[htb]
        \centering
        \subfloat[Partition]{
            \includegraphics[height=5cm]{./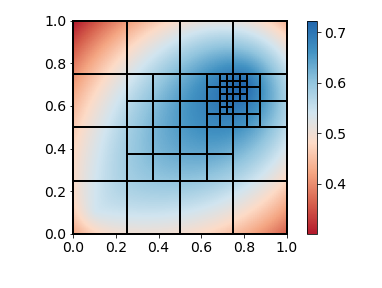}
            \label{parti}
        }
        \subfloat[Regret]{
            \includegraphics[height=5cm]{./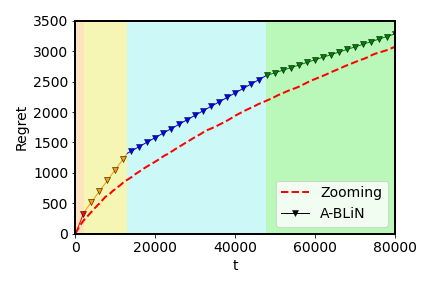}
            \label{regret}
        }
        \caption{Resulting partition and regret of A-BLiN. In Figure \ref{parti}, we show the resulting partition of A-BLiN. The background color denotes the true value of expected reward $\mu$, and blue means high values. The figure shows that the partition is finer for larger values of $\mu$. In Figure \ref{regret}, we show accumulated regret of A-BLiN and zooming algorithm \cite{kleinberg2008multi}. In the figure, different background colors represent different batches of A-BLiN. For the total time horizon $T=80000$, A-BLiN needs $4$ rounds of communications.} 
        \label{f:exp}
    \end{figure*} 
    
    
    We let the time horizon $T=80000$, and report the accumulated regret in Figure \ref{regret}. The regret curve is sublinear, which agrees with the regret bound (\ref{up_bound}). Besides, different background colors in Figure \ref{regret} represent different batches. For the total time horizon $T=80000$, A-BLiN only needs $4$ rounds of communications. We also present regret curve of zooming algorithm \cite{kleinberg2008multi} for comparison. Different from zooming algorithm, regret curve of A-BLiN is approximately piecewise linear, which is because the strategy of BLiN does not change within each batch. Results of more repeated experiments, as well as experimental results of D-BLiN, are in Appendix \ref{app:blin}. Our code is available at \url{https://github.com/FengYasong-fifol/Batched-Lipschitz-Narrowing}.

\section{Conclusion}
In this paper, we study Lipschitz bandits with communication constraints, and propose the BLiN algorithm as a solution. We prove that BLiN only need $ \mathcal{O} \left( \log\log T \right) $ rounds of communications to achieve the optimal regret rate of best previous Lipschitz bandit algorithms \cite{kleinberg2008multi,bubeck2008tree} that need $T$ batches. This improvement in number of the batches significantly saves data communication costs. We also provide complexity analysis for this problem. We show that $\Omega(\log\log T)$ rounds of communications are necessary for any algorithm to optimally solve Lipschitz bandit problems. Hence, BLiN algorithm is optimal.

\appendix

\section{Proof of Corollary \ref{coro}}\label{app:coro}

\noindent\textbf{Corollary \ref{coro}.} For Lipschitz bandit problems with ambient dimension $d$, zooming dimension $d_z\leq d$ and time horizon $T$, any algorithm needs $\Omega(\log\log T)$ rounds of communications to achieve the optimal regret rate $\mathcal{O}\left(T^{\frac{d_z+1}{d_z+2}}\right)$.

\begin{proof}
    From Theorem \ref{thm:lower-adaptive-dz-intro}, the expected regret is lower bounded by 
	\begin{align*} 
	    \E \left[ R_T(\pi) \right] 
	    \geq \frac{1}{512B^2}T^{\frac{1-\frac{1}{d_z+2}}{1-\left(\frac{1}{d_z+2}\right)^B}}.
	\end{align*}
    Here we seek for the minimum $B$ such that
    \begin{align}\label{lower_b_1}
        \frac{\frac{1}{512B^2} T^{\frac{1-\frac{1}{d_z+2}}{1-\left(\frac{1}{d_z+2}\right)^B}}} {T^{\frac{d_z+1}{d_z+2}}}\leq C
    \end{align}
    for some constant $C$.
    
    Calculation shows that
    \begin{align}\label{lower_b_2}
        \frac{\frac{1}{512B^2} T^{\frac{1-\frac{1}{d_z+2}}{1-\left(\frac{1}{d_z+2}\right)^B}}} {T^{\frac{d_z+1}{d_z+2}}}=\frac{1}{512B^2}\left(T^\frac{d_z+1}{d_z+2}\right)^\frac{1}{(d_z+2)^B-1}.
    \end{align}
    
    Substituting (\ref{lower_b_2}) to (\ref{lower_b_1}) and taking log on both sides yield that
    \begin{align*}
        \frac{d_z+1}{d_z+2}\cdot\frac{\log T}{(d_z+2)^B-1}\leq\log (512B^2C)
    \end{align*}
    and
    \begin{align*}
        (d_z+2)^B\geq\frac{d_z+1}{(d_z+2)\log (512B^2C)}\cdot\log T+1.
    \end{align*}
    Taking log on both sides again yields that
    \begin{equation}\label{lower-coro}
        B\geq\frac{\log\left[\left(\frac{d_z+1}{(d_z+2)\log (512B^2C)}\right)\log T+1\right]}{\log (d_z+2)}.
    \end{equation}
    We use $B_{\min}$ to denote the minimum $B$ such that inequality (\ref{lower-coro}) holds. Calculation shows that (\ref{lower-coro}) holds for
    \begin{equation*}
        B=B_*\triangleq \frac{\log\left[\left(\frac{d_z+1}{(d_z+2)\log (512C)}\right)\log T+1\right]}{\log (d_z+2)},
    \end{equation*}
    so we have $B_{\min}\leq B_*$. Then since the RHS of (\ref{lower-coro}) decreases with $B$, we have
    \begin{align*}
        B_{\min}\geq\frac{\log\left[\left(\frac{d_z+1}{(d_z+2)\log (512B_{\min}^2C)}\right)\log T+1\right]}{\log (d_z+2)}
        \geq\frac{\log\left[\left(\frac{d_z+1}{(d_z+2)\log (512B_*^2C)}\right)\log T+1\right]}{\log (d_z+2)}.
    \end{align*}
    Therefore, $\Omega(\log\log T)$ rounds of communications are necessary for any algorithm to optimally solve Lipschitz bandit problems.
\end{proof}

\section{Space Complexity Analysis of A-BLiN}\label{app:space}

Let the A-BLiN run contains $B+1$ batches: $B$ batches in the for-loop and a clean-up batch. We use $\gamma_m$ to denote number of cubes in batch $m$, that is, $\gamma_m=|\mathcal{A}_m|$, for $1\leq m\leq B$. Then it is easy to see that the space complexity is linear in $\max_{1\leq m\leq B}\gamma_m$. In the following, we bound $\gamma_m$ for each $m$ to obtain the space complexity of A-BLiN.

Equation (\ref{eq:factor-rm}) and (\ref{eq:firstB}) yields that 
\begin{align*}
    \gamma_m\cdot\frac{16\log T}{r_m^2}\cdot8r_{m-1}\leq T^{\frac{d_z+1}{d_z+2}}\cdot128C_z(\log T)^{\frac{1}{d_z+2}},
\end{align*}
and thus
\begin{equation*}
    \gamma_m\leq C_z T^{\frac{d_z+1}{d_z+2}} (\log T)^{-\frac{d_z+1}{d_z+2}}\cdot\frac{r_m^2}{r_{m-1}}.
\end{equation*}
Since $r_m\leq1$ and $r_m\leq r_{m-1}$, we further have
\begin{align*}
    \gamma_m\leq C_z T^{\frac{d_z+1}{d_z+2}} (\log T)^{-\frac{d_z+1}{d_z+2}}.
\end{align*}
The above inequality is satisfied for each $1\leq m\leq B$. Consequently, the space complexity of A-BLiN is upper bounded by $\max_{1\leq m\leq B}\gamma_m=\mathcal{O}\left(T^{\frac{d_z+1}{d_z+2}} (\log T)^{-\frac{d_z+1}{d_z+2}}\right)$.

\section{Proof of Theorem \ref{main_t_d}} 
\label{proof:dblin}
\noindent\textbf{Theorem \ref{main_t_d}.} With probability exceeding $1-\frac{2}{T^6}$, the $T$-step total regret $R(T)$ of BLiN with Doubling Edge-length Sequence (D-BLiN) satisfies
    \begin{equation}\label{up_bound_d} 
        R(T)\leq(512C_z+16)\cdot T^\frac{d_z+1}{d_z+2} (\log T)^{\frac{1}{d_z+2}},
    \end{equation} 
    where $d_z$ is the zooming dimension of the problem instance. In addition, D-BLiN only needs no more than $\frac{\log T-\log\log T}{d_z + 2} + 2$ rounds of communications to achieve this regret rate.
    
\begin{proof}
    Since $r_m=\frac{r_{m-1}}{2}$ for Doubling Edge-length Sequence, Lemma \ref{lem:eli} implies that every cube $C\in\mathcal{A}_m$ is a subset of $S(16r_m)$. Thus from the definition of zooming number, we have 
    \begin{align} 
        | \A_m | \le N_{r_m}. \label{eq:bond-Am}
    \end{align} 
    
    
    Fix any positive number $B$. 
    Also by Lemma \ref{lem:eli}, we know that any arm played after batch $B$ incurs a regret bounded by $16 r_{B}$, since the cubes played after batch $B$ have edge length no larger than $r_{B}$. Then the total regret occurs after the first $B$ batch is bounded by $16r_BT$.
    
    Thus the regret $R(T)$ can be bounded by 
    \begin{align}
    	R(T)
    	&\le 
    	\sum_{m=1}^{B}\sum_{C\in\mathcal{A}_m}\sum_{i=1}^{n_m}\Delta_{x_{C,i}} + 16 r_{B} T, 
    	\label{eq:reg-decomp}
    \end{align} 
    where the first term bounds the regret in the first $B$ batches of D-BLiN, and the second term bounds the regret after the first $B$ batches. If the algorithm stops at batch $\widetilde{B}<B$, we define $\mathcal{A}_m=\varnothing$ for any $\widetilde{B}<m\leq B$ and inequality (\ref{eq:reg-decomp}) still holds. 
    
    By Lemma \ref{lem:eli}, we have $\Delta_{C,i} \le 16 r_m$ for all $C \in \A_m$. We can thus bound (\ref{eq:reg-decomp}) by
    \begin{align}
        R(T)
        \le&
        \sum_{m=1}^{B} |\A_m| \cdot n_m \cdot 16 r_m + 16 r_{B} T \nonumber \\
        \le& 
        \sum_{m=1}^{B} N_{r_m} \cdot n_m \cdot 16 r_m + 16 r_{B} T, \label{eq:use-bound-Am} \\
        \le& 
        \sum_{m=1}^{B} N_{r_m} \cdot \frac{16 \log T }{r_m^2} \cdot 16 r_m + 16 r_{B} T, \label{eq:use-nm}\\
        =&
        \sum_{m=1}^{B} N_{r_m} \cdot \frac{256 \log T }{r_m} + 16 r_{B} T\nonumber,
    \end{align}
    where (\ref{eq:use-bound-Am}) uses (\ref{eq:bond-Am}), and (\ref{eq:use-nm}) uses equality $ n_m = \frac{16 \log T}{ r_m } $. Since $ r_m = 2^{-m+1} $ and $ N_{r_m} \le C_z r_m^{-d_z} \le C_z\cdot2^{(m-1)d_z} $, we have
    \begin{align*} 
        R(T) 
        \le256C_z  \sum_{m=1}^{B}  \frac{ 2^{(m-1) d_z } \log T  }{ 2^{-(m-1)} } + 16 \cdot 2^{-{B}+1} T. 
    \end{align*} 
    
    This inequality holds for any positive $B$. By choosing $B^* = 1 + \frac{ \log \frac{T}{\log T} }{d_z + 2}$, we have 
    \begin{align*}
        R(T) 
        \le& 
        512C_z \cdot 2^{ \left( B^*-1 \right) \left( d_z + 1 \right)} \log T + 16 \cdot T \cdot 2^{-B^* + 1}
        \le
        (512C_z+16)\cdot T^{\frac{d_z + 1}{d_z + 2}} (\log T)^{\frac{1}{d_z+2}}.
    \end{align*}
    The above analysis implies that we can achieve the optimal regret rate $\widetilde{\mathcal{O}}\left(T^\frac{d_z+1}{d_z+2}\right)$ by letting the \emph{for-loop} run $B^*$ times and finishing the remaining rounds in the \emph{Cleanup} step. In other words, $B^*+1$ rounds of communications are sufficient for D-BLiN to achieve the regret bound (\ref{up_bound_d}).
\end{proof}

\section{Regret Upper Bound of BLiN with a Fixed Number of Batches}\label{app:fix_B}

This section studies the cases where a hard upper bound $B$ on the number of batches is imposed. In such cases, we simply apply BLiN by executing the for-loop $B-1$ times, and then run the clean-up step. Since some batches may be skipped in the for-loop, the total number of batches is upper bounded by $B$. 
The regret in such cases is described by a quantity called \emph{effective number of batches} (written $\text{Eff}(B)$). The quantity $ \text{Eff} (B) $ counts number of batches where finer partitioning of the arm space occurs. Further in Lemma \ref{lem:eff}, 
we show that $ \text{Eff}(B) = \mathcal{O} (\log \log T) $. 

Firstly, we introduce some notations. In the proof of Theorem \ref{t_race}, we bound the regret of the odd batches ($m=2k-1$ and $m<B$), the even batches ($m=2k$ and $m<B$) and the clean-up batch (the $B$-th batch) separately. For convenience, we denote
\begin{equation*}
    R_{\mathrm{odd}}=\sum_{m=2k-1,\;m<B}R_m\quad\text{and}\quad R_{\mathrm{even}}=\sum_{m=2k,\;m<B}R_m,
\end{equation*}
where $R_m$ is the regret of the $m$-th batch. Additionally, we let $B_e=\left\lfloor\frac{B-1}{2}\right\rfloor$, and thus $2B_e$ is the last even batch before the $B$-th batch. As is mentioned in the paper (the paragraph before Theorem \ref{t_race}), when using BLiN with rounded ACE Sequence, if there exists $k<m$ such that $\wtr_m\geq\wtr_{k}$, then we skip the $m$-th batch. We use $\text{Eff}(B)$ to denote the number of effective batches, that is, the batches which are not skipped.

By omitting the equality $B^* = \frac{\log\log \frac{T}{\log T}-\log(d_z+2)}{\log \frac{d+2}{d+1-d_z}}$ and using the definition of rounded ACE Sequence, the arguments in the proof of Theorem \ref{t_race} can be directly applied to the case of fixed $B$. Specifically, inequality (\ref{r:odd}) yields that 
\begin{equation*}
    R_{\text{odd}}\lesssim \text{Eff}(B)\cdot T^{\frac{d_z+1}{d_z+2}}\cdot(\log T)^{\frac{1}{d_z+2}};
\end{equation*}
inequality (\ref{eq:r-even-fixB}) yields that
\begin{align*}
    R_{\text{even}}
    \lesssim\wtr_{2B_e}^{-d_z-1}\cdot\log T
    \lesssim\left(\frac{T}{\log T}\right)^{-\frac{d_z+1}{d_z+2}\left(\frac{d+1-d_z}{d+2}\right)^{\left\lfloor\frac{B-1}{2}\right\rfloor}}\cdot\left(\frac{T}{\log T}\right)^{\frac{d_z+1}{d_z+2}}\cdot\log T
    \leq T^{\frac{d_z+1}{d_z+2}}\cdot(\log T)^{\frac{1}{d_z+2}};
\end{align*}
and inequality (\ref{r:last}) yields that 
\begin{align*}
    R_B\lesssim\wtr_{2B_e}\cdot T
    \lesssim T^{\frac{1}{d_z+2}\left(\frac{d+1-d_z}{d+2}\right)^{\left\lfloor\frac{B-1}{2}\right\rfloor}}\cdot T^{\frac{d_z+1}{d_z+2}}\cdot(\log T)^{\frac{1}{d_z+2}}.
\end{align*}
Thus, the $T$-step total regret is bounded by
\begin{align}\label{eq:fixb}
        R(T)
    =R_{\text{odd}}+R_{\text{even}}+R_B
    \lesssim\left(\text{Eff}(B)+T^{\frac{1}{d_z+2}\left(\frac{d+1-d_z}{d+2}\right)^{\left\lfloor\frac{B-1}{2}\right\rfloor}}\right)\cdot T^{\frac{d_z+1}{d_z+2}}\cdot(\log T)^{\frac{1}{d_z+2}}.
\end{align}
Furthermore, because of the existence of the rounding step, the actual number of batches of BLiN with rounded ACE Sequence has the following upper bound.

\begin{lemma}\label{lem:eff}
    Let $T$ be the total time horizon. When applying BLiN with rounded ACE Sequence $\{\widetilde{r}_m\}$ and any number of batches $B$, the effective number of batches is upper bounded by
    \begin{equation*}
        \mathrm{Eff}(B)=\mathcal{O}(\log\log T).
    \end{equation*}
\end{lemma}

\begin{proof}
    The rounded ACE sequence $\{\widetilde{r}_m\}_{m\in\mathbb{N}}$ is defined as $\widetilde{r}_m=2^{-\alpha_k}$ for $m=2k-1$ and $\widetilde{r}_m=2^{-\beta_k}$ for $m=2k$, where $\alpha_k=\lfloor c_1\cdot\frac{1-\eta^k}{1-\eta}\rfloor$ and $\beta_k=\lceil c_1\cdot\frac{1-\eta^k}{1-\eta}\rceil$. As is explained in Remark \ref{remark:race}, if there exists integer $M$ and $k$ such that $M< c_1\cdot\frac{1-\eta^k}{1-\eta}< c_1\cdot\frac{1-\eta^{k+1}}{1-\eta}<M+1$, then the rounding step yields $\widetilde{r}_{2k-1}=\widetilde{r}_{2k+1}<\widetilde{r}_{2k}=\widetilde{r}_{2k+2}$, so the $(2k+1)$-th and the $(2k+2)$-th batches are skipped. We note that the sequence $\{c_1\cdot\frac{1-\eta^k}{1-\eta}\}$ is increasing and $\lim_{k\to\infty}c_1\cdot\frac{1-\eta^k}{1-\eta}=c_1\cdot\frac{1}{1-\eta}$. It is easy to verify that if inequality
    \begin{equation}\label{ineq:eff}
        c_1\cdot\frac{1}{1-\eta}-c_1\cdot\frac{1-\eta^{k_0}}{1-\eta}<1
    \end{equation}
    is satisfied for some $k_0$, then there will be at most $2$ effective batches after the $2k_0$-th batch. As a consequence, for any $B$, the number of effective batches is upper bounded by $2k_0+2$.

    By choosing $k_0=\frac{\log\log T}{\log\frac{d+2}{d+1-d_z}}$, we have
    \begin{equation*}
        k_0>\frac{\log\left(\frac{1}{d_z+2}\log \frac{T}{\log T}\right)}{\log\frac{d+2}{d+1-d_z}}=\frac{\log\frac{1-\eta}{c_1}}{\log\eta},
    \end{equation*}
    and (\ref{ineq:eff}) is satisfied. Therefore, we conclude that $\text{Eff}(B)\leq\frac{2\log\log T}{\log\frac{d+2}{d+1-d_z}}+2$.
\end{proof}

Combining (\ref{eq:fixb}), Lemma \ref{lem:eff} and the inequality $\left\lfloor\frac{B-1}{2}\right\rfloor\geq\frac{B}{2}-1$, we conclude that the $T$-step regret of BLiN with rounded ACE Sequence and $B$ batches is upper bounded by
\begin{equation*}
    R(T)\lesssim T^{\frac{1}{d_z+2}\left(\frac{d+1-d_z}{d+2}\right)^{\frac{B}{2}-1}}\cdot T^{\frac{d_z+1}{d_z+2}}\cdot(\log T)^{\frac{1}{d_z+2}}.
\end{equation*}
This upper bound is slightly larger than our lower bound in Theorem \ref{thm:lower-adaptive-dz-intro}, and they matches when $B=\Theta(\log\log T)$.

\section{Proof of Theorem \ref{thm:lower-static-dz}}\label{app:lower-dz-static}
\noindent\textbf{Theorem \ref{thm:lower-static-dz}.} Consider Lipschitz bandit problems with time horizon $T$, ambient dimension $d$ and zooming dimension $d_z\leq d$ such that the grid of reward communication $\mathcal{T}$ is static and determined before the game. If $B$ rounds of communications are allowed, then for any policy $\pi$, there exists a problem instance with zooming dimension $d_z$ such that
    \begin{equation*}
        \E[R_T(\pi)]\geq \frac{1}{64e^\frac{1}{16}}\cdot T^{\frac{1-\frac{1}{d_z+2}}{1-\left(\frac{1}{d_z+2}\right)^B}}.
    \end{equation*}

\begin{proof}
    Fixing $k$ and $1\leq i\leq M_k$, we show that $\mu_{k,i}$ is $1$-Lipschitz and the zooming dimension equals to $d_z$.
    
    Firstly, for any $(\alpha_1,\beta_1)$ and $(\alpha_2,\beta_2)$, we have
    \begin{align*}
        \mu_{k,i}((\alpha_1,\beta_1))-\mu_{k,i}((\alpha_2,\beta_2))
        \leq&\mu_{k,i}((\alpha_1,\beta_1))-\mu_{k,i}((\alpha_1,\beta_2))
        +\mu_{k,i}((\alpha_1,\beta_2))-\mu_{k,i}((\alpha_2,\beta_2))\\
        \leq&\frac{1}{2}\|\beta_1-\beta_2\|_\infty+\frac{1}{2}\big(\nu_{k,i}((\alpha_1,0_{d-d_z}))-\nu_{k,i}((\alpha_2,0_{d-d_z}))\big)\\
        \leq&\frac{1}{2}(\|\beta_1-\beta_2\|_\infty+\|\alpha_1-\alpha_2\|_\infty)\\
        \leq&\|(\alpha_1-\alpha_2,\beta_1-\beta_2)\|_\infty.
    \end{align*}
    Therefore, $\mu_{k,i}$ is $1$-Lipschitz.

    Secondly, for any $r\geq r_k$, (\ref{eq:constuct3-dz}) yields that $S(16r)=[0,1]^{d_z}\times[0, 32r]^{d-d_z}$. Therefore, we have $N_r=32^{d-d_z}r^{-d_z}$, and the zooming dimension equals to $d_z$.

    Then we show that an argument similar to Theorem \ref{st_lb} yields the lower bound we need. For any $k>1$, we prove that no algorithm can distinguish instances in $\mathcal{I}_k$ from one another in the first $(k-1)$ batches, so the worst-case regret is at least $r_kt_k$, which equals to $\frac{t_k}{t_{k-1}^{\frac{1}{d_z+2}}}$. For the first batch $(0, t_1]$, we can easily construct a set of instances where the worst-case regret is at least $t_1$, since no information is available during this time. Thus, there exists a problem instance such that
    \begin{align*}
        \E[R_T(\pi)]\gtrsim\max\left\{t_1,\frac{t_2}{t_1^\frac{1}{d_z+2}},\cdots,\frac{t_B}{t_{B-1}^\frac{1}{d_z+2}}\right\} . 
    \end{align*}
    Since $0<t_1<\cdots<t_B=T$, the inequality in Theorem \ref{thm:lower-static-dz} follows.
    
    Recall that each $u_{k,i}$ is in $\A_{d_z}$. For convenience, we write $u_{k,i}=(\hat{u}_{k,i},0_{d-d_z})$. Let $H_{k,i}=\B_{d_z}(\hat{u}_{k,i},\frac{3}{8}r_k)\times[0,1]^{d-d_z}$, where $\B_{d_z}(\hat{u}_{k,i},\frac{3}{8}r_k)$ denotes the $d_z$-dimensional ball with center $\hat{u}_{k,i}$ and radius $\frac{3}{8}r_k$. It is easy to verify the following properties of construction (\ref{eq:constuct1-dz}),(\ref{eq:constuct2-dz}) and (\ref{eq:constuct3-dz}):
    \begin{enumerate}
        \item For any $2\leq i\leq M_k$, $\mu_{k,i}(z)=\mu_{k,1}(z)$ for any $z\in\mathcal{A}\setminus H_{k,i}$;
        \item For any $2\leq i\leq M_k$, $\mu_{k,1}(z)\leq\mu_{k,i}(z)\leq\mu_{k,1}(z)+\frac{r_k}{4}$, for any $z\in H_{k,i}$;
        \item For any $1\leq i\leq M_k$, under $I_{k,i}$, pulling an arm that is not in $H_{k,i}$ incurs a regret at least $\frac{r_k}{16}$. 
    \end{enumerate}

    The lower bound of expected regret relies on the following lemma.
    \begin{lemma}\label{lem:kld-dz}
    	For any policy $\pi$, there exists a problem instance $I\in\mathcal{I}_k$ such that
    	\begin{equation*}
    	    \E \[ R_T(\pi) \] \geq  \frac{r_k}{64}\cdot\sum_{j=1}^B \( t_j-t_{j-1} \) \exp\left\{-\frac{t_{j-1}r_k^2}{32(M_k-1)}\right\}.
    	\end{equation*}
    	 
    \end{lemma} 

    \begin{proof}
        
        Let $ x_t $ denote the choices of policy $\pi$ at time $t$, and $y_t$ denote the reward. Additionally, for $t_{j-1}<t\leq t_j$, we define $\Pr_{k.i}^t$ as the distribution of sequence $\left(x_1,y_1,\cdots,x_{t_{j-1}},y_{t_{j-1}}\right)$ under instance $I_{k,i}$ and policy $\pi$. It holds that 
        \begin{align}\label{eq:spl-dz} 
            \sup_{I\in\mathcal{I}_k} \E R_T(\pi)  
            \geq
            \frac{1}{M_k}\sum_{i=1}^{M_k} \E_{\Pr_{k,i}} \left[ R_T(\pi) \right]
            \geq
            \frac{1}{M_k}\sum_{i=1}^{M_k} \sum_{t=1}^T\E_{\Pr_{k,i}^t} \left[ R^t(\pi) \right] 
            \geq
            \frac{r_k}{16}\sum_{t=1}^T\frac{1}{M_k}\sum_{i=1}^{M_k} \Pr_{k,i}^t(x_t\notin H_{k,i}),
        \end{align} 
        where $R^t(\pi)$ denotes the regret incurred by policy $\pi$ at time $t$. 
            
        From our construction, it is easy to see that $H_{k,i}\cap H_{k,j}=\varnothing$ for any $i\neq j$, so we can construct a test $\Psi$ such that $x_t\in H_{k,i}$ implies $\Psi= i$.  
        Then from Lemma \ref{lem:test-tree}, 
        \begin{align*}
            \frac{1}{M_k}\sum_{i=1}^{M_k} \Pr_{k,i}^t(x_t\notin H_{k,i})  
            \geq \;  
            \frac{1}{M_k}\sum_{i=1}^{M_k} \Pr_{k,i}^t(\Psi\neq i) 
            \geq \;  
            \frac{1}{2M_k}\sum_{i=2}^{M_k} \exp 
            \left\{-D_{KL}\left(\Pr_{k,1}^t\|\Pr_{k,i}^t\right)\right\}.
        \end{align*} 

        To avoid notational clutter, for any $s,s'$ ($s \ge s'$), define 
        \begin{align*}
            (\x, \y )_{:s}^{:s'} =& \; (x_1, y_1,\cdots,x_{s'}, y_{s'}, \cdots ,x_s) . 
        \end{align*}
        
        Now we calculate $D_{KL}\left(\Pr_{k,1}^t\|\Pr_{k,i}^t\right)$. From the chain rule of KL-Divergence, we have
        \begin{align}
            & \; D_{KL}\left(\Pr_{k,1}^t\|\Pr_{k,i}^t\right) \nonumber \\
            =& \;
            D_{KL}\left(\Pr_{k,1}^t \( (\x ,\y)_{:t_{j-1}}^{:t_{j-1}} \) \| \Pr_{k,i}^t \( (\x ,\y)_{:t_{j-1}}^{:t_{j-1}} \) \right)\nonumber\\
            =& \;
            D_{KL}\left(\Pr_{k,1}^t \(  (\x ,\y)_{:t_{j-1}}^{:t_{j-1} - 1} \)  \| \Pr_{k,i}^t \( (\x ,\y)_{:t_{j-1}}^{:t_{j-1} - 1} \) \right) 
            +\E_{\Pr_{k,1}}\left(D_{KL}\left(\Pr_{k,1}^t(y_{t_{j-1}}|x_{t_{j-1}}) \| \Pr_{k,i}^t(y_{t_{j-1}}|x_{t_{j-1}})\right)\right) \label{chain_eq1-dz}\\
            =& \;
            D_{KL}\left(\Pr_{k,1}^t \(  (\x ,\y)_{:t_{j-1}-1}^{:t_{j-1} - 1} \)  \| \Pr_{k,i}^t \( (\x ,\y)_{:t_{j-1}-1}^{:t_{j-1} - 1} \) \right) 
            +\E_{\Pr_{k,1}}\left(D_{KL}\left(\Pr_{k,1}^t(y_{t_{j-1}}|x_{t_{j-1}}) \| \Pr_{k,i}^t(y_{t_{j-1}}|x_{t_{j-1}})\right)\right) \label{chain_eq1.5-dz}\\
            \leq& \;
            D_{KL}\left(\Pr_{k,1}^t\( (\x ,\y)_{:t_{j-1}-1}^{:t_{j-1} - 1} \) \| \Pr_{k,i}^t\( (\x ,\y)_{:t_{j-1}-1}^{:t_{j-1} - 1} \)\right)
            +\E_{\Pr_{k,1}}\left(D_{KL}\left(N(\mu_{k,1}(x_{t_{j-1}}),1) \| N(\mu_{k,i}(x_{t_{j-1}}),1)\right)\right) \label{chain_eq2-dz}\\
            =& \;
            D_{KL}\left(\Pr_{k,1}^t \( (\x ,\y)_{:t_{j-1}-1}^{:t_{j-1} - 1} \) \| \Pr_{k,i}^t \( (\x ,\y)_{:t_{j-1}-1}^{:t_{j-1} - 1} \)\right) 
            +\E_{\Pr_{k,1}}\left(\frac{1}{2}\left(\mu_{k,1}(x_{t_{j-1}})-\mu_{k,i}(x_{t_{j-1}})\right)^2\right) \nonumber\\
            \leq& \;
            D_{KL}\left(\Pr_{k,1}^t \( (\x ,\y)_{:t_{j-1}-1}^{:t_{j-1} - 1} \) \| \Pr_{k,i}^t \( (\x ,\y)_{:t_{j-1}-1}^{:t_{j-1} - 1} \)\right)
            +\E_{\Pr_{k,1}}\left(\bm{1}_{\{x_{t_{j-1}}\in S_{k,i}\}}\cdot\frac{1}{2}\left(\frac{r_k}{4}\right)^2\right)\label{chain_use_prop-dz}\\
            =& \;
            D_{KL}\left(\Pr_{k,1}^t \( (\x ,\y)_{:t_{j-1}-1}^{:t_{j-1} - 1} \) \| \Pr_{k,i}^t \( (\x ,\y)_{:t_{j-1}-1}^{:t_{j-1} - 1} \)\right)
            +\frac{r_k^2}{32}\cdot\Pr_{k,1}\left(x_{t_{j-1}}\in S_{k,i}\right),\label{chain_singal-dz}
        \end{align}
        where (\ref{chain_eq1-dz}) uses chain rule for KL-divergence and the conditional independence of the reward, (\ref{chain_eq1.5-dz}) removes dependence on $x_{t_{j - 1}}$ in the first term by another use of chain rule and the fact that the distribution of $x_{t_{j-1}}$ is fully determined by the policy and the distribution of $(\x ,\y)_{:t_{j-1}-1}^{:t_{j-1} - 1}$, (\ref{chain_eq2-dz}) uses that the rewards are corrupted by a standard normal noise, and (\ref{chain_use_prop-dz}) uses the first two properties of the construction.
        
        Since (\ref{chain_singal-dz}) holds for all $t \le t_{j-1}$, we conclude that
        \begin{align}\label{chain-dz}
            D_{KL}\left(\Pr_{k,1}^t\|\Pr_{k,i}^t\right)\leq\frac{r_k^2}{32}\sum_{s\leq t_{j-1}}\Pr_{k,1}\left(x_s\in H_{k,i}\right)=\frac{r_k^2}{32} \E_{\Pr_{k,1}}\tau_i,
        \end{align}
        where $\tau_i$ denotes the number of pulls of arms in $H_{k,i}$ before the batch containing $t$. Then for all $t \in ( t_{j-1}, t_j]$, we have  
        \begin{align}
            \frac{1}{M_k}\sum_{i=1}^{M_k} \Pr_{k,i}^t(x_t\notin H_{k,i})
            \geq& \; 
            \frac{1}{2M_k}\sum_{i=2}^{M_k} \exp\left\{-\frac{r_k^2}{32} \E_{\Pr_{k,1}}\tau_i\right\} \nonumber \\ 
            \geq& \;  
            \frac{M_k-1}{2M_k}\exp\left\{-\frac{r_k^2}{32(M_k-1)}\sum_{i=2}^{M_k}\E_{\Pr_{k,1}} \tau_i \right\} \label{eq:use-jensen-dz}  \\
            \geq& \; 
            \frac{1}{4}\exp\left\{-\frac{r_k^2t_{j-1}}{32(M_k-1)}\right\}, \label{eq:hyp-dz}
        \end{align} 
        where (\ref{eq:use-jensen-dz}) uses the Jensen' inequality, and (\ref{eq:hyp-dz}) uses the fact that $\sum_{i=2}^{M_k}\tau_i\leq t_{j-1}$. Finally, we substitute (\ref{eq:hyp-dz}) to (\ref{eq:spl-dz}) to finish the proof of Lemma \ref{lem:kld-dz}. 
    \end{proof} 
    
	Since $M_k=t_{k-1}r_k^2$, the expected regret of policy $\pi$ satisfies
	\begin{align*} 
		\E \[ R_T(\pi) \] 
		&\geq 
		\frac{r_k}{64} \cdot\sum_{j=1}^B \( t_j-t_{j-1} \) \exp\left\{-\frac{t_{j-1}r_k^2}{32(M_k-1)}\right\}\\ 
		&\geq
		\frac{r_k}{64} \cdot\sum_{j=1}^B \( t_j-t_{j-1} \) \exp\left\{-\frac{t_{j-1}r_k^2}{16M_k}\right\}\\
		&\geq 
		\frac{r_k}{64} \cdot\sum_{j=1}^B \( t_j-t_{j-1}\) \exp\left\{-\frac{t_{j-1}}{16t_{k-1}}\right\}
	\end{align*}
	on an instance $I_{k,i}\in\I_k$.
	
	By omitting terms with $j > k$ in the above summation, we have 
	\begin{align*}
	    \E[R_T(\pi)]&\geq\frac{r_k}{64} \cdot \sum_{j=1}^B \( t_j-t_{j-1}\) \exp\left\{-\frac{t_{j-1}}{16t_{k-1}}\right\}  \\
	    &\ge 
	    \frac{r_k}{64} \cdot \sum_{j=1}^{k} \( t_j-t_{j-1}\) \exp\left\{-\frac{1}{16}\right\} \\
	    &= \frac{1}{64e^\frac{1}{16}}r_kt_k
		= \frac{1}{64e^\frac{1}{16}}\cdot\frac{t_k}{t_{k-1}^\frac{1}{d_z+2}}.
	\end{align*}
   	The above analysis can be applied for any $k>1$. For the first batch $(0, t_1]$, we can easily construct a set of instances where the worst-case regret is at least $t_1$, since no information is available during this time.  Thus, there exists a problem instance such that
   	\begin{align*}
   	    \E[R_T(\pi)]\geq\frac{1}{64e^\frac{1}{16}}\max\left\{t_1,\frac{t_2}{t_1^\frac{1}{d_z+2}},\cdots,\frac{t_B}{t_{B-1}^\frac{1}{d_z+2}}\right\}.
   	\end{align*}
    
    Since $0<t_1<\cdots<t_B=T$, we further have
    \begin{align*}
        \max\left\{t_1,\frac{t_2}{t_1^\varepsilon},\cdots,\frac{t_B}{t_{B-1}^\varepsilon}\right\}
        \geq\left(t_1^{\varepsilon^{B-1}}\cdot\left(\frac{t_2}{t_1^\varepsilon}\right)^{\varepsilon^{B-2}}\cdots\left(\frac{t_{B-1}}{t_{B-2}^\varepsilon}\right)^{\varepsilon}\cdot\frac{t_B}{t_{B-1}^\varepsilon}\right)^{\frac{1}{\sum_{i=1}^{B-1}\varepsilon^i}}
        =T^{\frac{1-\varepsilon}{1-\varepsilon^B}},
    \end{align*}
    where $\varepsilon=\frac{1}{d_z+2}$. Combining the above two inequalities, we conclude that
    
   	\begin{equation*}
	    \E \[ R_T(\pi) \]
	    \geq
	    \frac{1}{64e^\frac{1}{16}}\cdot T^{\frac{1-\frac{1}{d_z+2}}{1-\left(\frac{1}{d_z+2}\right)^B}}. \qedhere
	\end{equation*}
    
\end{proof} 

\section{Proof of Theorem \ref{thm:lower-adaptive-dz}}\label{app:lower-dz-adaptive}

\noindent\textbf{Theorem \ref{thm:lower-adaptive-dz}.} Consider Lipschitz bandit problems with time horizon $T$, ambient dimension $d$ and zooming dimension $d_z\leq d$ such that the grid of reward communication $\mathcal{T}$ is adaptively determined by the player. If $B$ rounds of communications are allowed, then for any policy $\pi$, there exists a problem instance with zooming dimension $d_z$ such that 
    	\begin{align*} 
    	    \E \left[ R_T(\pi) \right] 
    	    \geq \frac{1}{512B^2}T^{\frac{1-\frac{1}{d_z+2}}{1-\left(\frac{1}{d_z+2}\right)^B}}.
    	\end{align*}

\begin{proof}
    
    The main argument in the proof of Theorem \ref{thm:lower-adaptive-dz} is similar to that of Theorem \ref{thm:ada_lb}. To construct hard instances in the $d$-dimensional space, we use the `linear-decaying extension' technique, which is the same as the proof of Theorem \ref{thm:lower-static-dz}.

    To prove Theorem \ref{thm:lower-adaptive-dz}, we consider a reference static grid $\mathcal{T}_r=\{T_0,T_1,\cdots,T_B\}$, where $T_j=T^\frac{1-\varepsilon^j}{1-\varepsilon^B}$ for $\varepsilon=\frac{1}{d_z+2}$. Then we construct a series of `worlds', denoted by $\mathcal{I}_1,\cdots,\mathcal{I}_B$. Each world is a set of problem instances, and each problem instance in world $\mathcal{I}_j$ is defined by peak location set $\mathcal{U}_j$ and basic height $r_j$, where the sets $\mathcal{U}_j$ and quantities $r_j$ for $1\leq j\leq B$ are presented in the proof below. Based on these constructions, we first prove that for any adaptive grid and policy, there exists an index $j$ such that the event $A_j=\{t_{j-1}<T_{j-1},\;t_j\geq T_j\}$ happens with sufficiently high probability in world $\mathcal{I}_j$. Then similar to Theorem \ref{thm:lower-static-dz}, we prove that in world $\mathcal{I}_j$ there exists a set of problem instances that is difficult to differentiate in the first $j-1$ batches. In addition, event $A_j$ implies that $t_j\geq T_j$, so the worst-case regret is at least $r_jT_j$, which gives the lower bound we need.
    
    Firstly, we define $r_j=\frac{1}{T_{j-1}^\varepsilon B}$ and $M_j=\frac{1}{r_j^{d_z}}$, where $\varepsilon=\frac{1}{d_z+2}$.
    From the definition, we have
    \begin{align} 
        T_{j-1}r_j^2=\frac{1}{r_j^{d_z} B^{d_z+2}}\leq\frac{1}{r_j^{d_z}B^2}=\frac{M_j}{B^2}.  \label{eq:rm-relation-dz}
    \end{align}
    For $1\leq j\leq B$, we can find sets of arms $\mathcal{U}_j=\{u_{j,1},\cdots,u_{j,M_j}\}\in\mathcal{A}_{d_z}$ such that (a) $d_{\mathcal{A}}(u_{j,m},u_{j,n})\geq r_j$ for any $m\neq n$, and (b) $u_{1,M_1}=\cdots=u_{B,M_B}$. 
    
    Then we present the construction of worlds $\mathcal{I}_1,\cdots,\mathcal{I}_B$. For $1\leq j\leq B-1$, we let $\I_{j}=\{\I_{j,k}\}_{k=1}^{M_j-1}$. We first construct a set of expected reward functions $\{\nu_{j,1},\cdots,\nu_{j, M_j-1}\}$ on $\mathcal{A}_{d_z}$. For each $1\leq k\leq M_j-1$, we define $\nu_{j,k}$ as
    \begin{align}
    	\nu_{j,k}(z)=
    	\begin{cases}
    		\frac{r_1}{2}+\frac{r_j}{16}+\frac{r_B}{16},\;\text{if}\;z=u_{j,k},\\
    		\frac{r_1}{2}+\frac{r_B}{16},\;\text{if}\;z=u_{j,M_j},\\
                \max\left\{\frac{r_1}{2},\max_{u\in\mathcal{U}_j}\left\{\nu_{j,k}(u)-d_\mathcal{A}(z,u)\right\}\right\},\\
                \quad\text{if}\;z\in\A_{d_z}\setminus\mathcal{U}_j.
    	\end{cases} \label{eq:instance-ada1-dz-nu}
    \end{align}
    Based on $\{\nu_{j,k}\}_{k=1}^{M_j-1}$, the expected reward of $I_{j,k}$ is defined as
    \begin{align}\label{eq:instance-ada1-dz-mu}
    	\mu_{j,k}((\alpha,\beta))=\frac{1}{2}\left(\nu_{j,k}((\alpha,0_{d-d_z}))-\|\beta\|_\infty\right),
    \end{align}
    where $(\alpha,\beta)$ is the concatenation of $\alpha\in[0,1]^{d_z}$ and $\beta\in[0,1]^{d-d_z}$. For $j=B$, we let $\I_B=\{I_{B}\}$. We first define a function $\nu_B$ on $\A_{d_z}$ as
    \begin{align}
    	\nu_{B}(z)=
    	\begin{cases}
    		\frac{r_1}{2}+\frac{r_B}{16},\;\text{if}\;z=u_{B, M_B},\\
                \max\left\{\frac{r_1}{2},\max_{u\in\mathcal{U}_j}\left\{\mu_{j,k}(u)-d_\mathcal{A}(z,u)\right\}\right\},\\
                \quad\text{if}\;z\in\A_{d_z}/\{u_{B,M_B}\}.
    	\end{cases} \label{eq:instance-ada2-dz-nu}
    \end{align}
    Then the expected reward of $I_B$ is defined as
    \begin{align}\label{eq:instance-ada2-dz-mu}
    	\mu_{B}((\alpha,\beta))=\frac{1}{2}\left(\nu_{B}((\alpha,0_{d-d_z}))-\|\beta\|_\infty\right).
    \end{align}
    Roughly speaking, our constructions satisfy two properties: for each $j\neq B$ and $1\leq k\leq M_j-1$, 
    \begin{enumerate}
        \item $\mu_{j,k}$ is close to $\mu_B$;
        \item under $I_{j,k}$, pulling an arm that is far from $u_{j,k}$ incurs a regret at least $\frac{r_j}{16}$.
    \end{enumerate}
    The formal version of these two properties are presented in the proof of Lemma \ref{exist_pj-dz} and Lemma \ref{adagrid_lemma-dz}.
    
    Based on these constructions, we first show that for any adaptive grid $ \mathcal{T} = \{t_0,\cdots,t_B\} $, there exists an index $j$ such that $(t_{j-1},t_j]$ is sufficiently large in world $\mathcal{I}_j$. More formally, for each $j\in[B]$, and event $A_j=\{t_{j-1}<T_{j-1},\;t_j\geq T_j\}$, we define the quantities $p_j := \frac{1}{M_j-1}\sum_{k=1}^{M_j-1}\Pr_{j,k}(A_j)$ for $j\leq B-1$ and $p_B := \Pr_B(A_B)$,
    where $\Pr_{j,k}(A_j)$ denotes the probability of the event $A_j$ under instance $I_{j,k}$ and policy $\pi$. For these quantities, we have the following lemma.

\begin{lemma}
    \label{exist_pj-dz} 
    For any adaptive grid $\mathcal{T}$ and policy $\pi$, it holds that $\sum_{j=1}^B p_j\geq\frac{7}{8}.$  
\end{lemma} 

\begin{proof}
    For $1\leq j\leq B-1$ and $1\leq k\leq M_j-1$, we write $u_{j,k}=(\hat{u}_{j,k},0_{d-d_z})$. We define $H_{j,k}=\B_{d_z}(\hat{u}_{j,k},\frac{3}{8}r_j)\times[0,1]^{d-d_z}$, where $\B_{d_z}(\hat{u}_{j,k},\frac{3}{8}r_j)$ denotes the $d_z$-dimensional ball with center $\hat{u}_{j,k} $ and radius $ \frac{3}{8}r_j $. It is easy to verify the following properties of our construction (\ref{eq:instance-ada1-dz-nu}), (\ref{eq:instance-ada1-dz-mu}), (\ref{eq:instance-ada2-dz-nu}) and (\ref{eq:instance-ada2-dz-mu}):
    \begin{enumerate}
    	\item $\mu_{j,k}(z)=\mu_B (z)$ for any $z\notin H_{j,k}$;
    	\item $\mu_B (z) \leq \mu_{j,k}(z)\leq\mu_B (z)+\frac{r_j}{8}$, for any $z\in H_{j,k}$.
    \end{enumerate}
    Let $ x_t $ denote the choices of policy $\pi$ at time $t$, and $y_t$ denote the reward. For $t_{j-1}<t\leq t_j$, we define $\Pr_{j,k}^t$ (resp. $\Pr_{B}^{t}$) as the distribution of sequence $\left(x_1,y_1,\cdots,x_{t_{j-1}},y_{t_{j-1}}\right)$ under instance $I_{j,k}$ (resp. $I_B$) and policy $\pi$. Since event $A_j$ can be completely described by the observations up to time $T_{j-1}$ ($A_j$ is an event in the $\sigma$-algebra where $\Pr_{j,k}^{T_{j-1}}$ and $\Pr_B^{T_{j-1}}$ are defined on), we can use the definition of total variation to get 
    \begin{align*}
    	|\Pr_B (A_j)-\Pr_{j,k}(A_j)| 
            = \;  
            | \Pr_B^{T_{j-1}} (A_j) - \Pr_{j,k}^{T_{j-1}} (A_j) | 
    	\leq\;  
            TV \(\Pr_B^{T_{j-1}},\Pr_{j,k}^{T_{j-1}} \) . 
    \end{align*}

    For the total variation, we apply Lemma \ref{lem:pinsker-type} to get 
    \begin{align*} 
    	 \; \frac{1}{M_j-1} \sum_{k=1}^{M_j-1} TV \( \Pr_B^{T_{j-1}},\Pr_{j,k}^{T_{j-1}}\) 
    	\leq \; 
    	\frac{1}{M_j-1} \sum_{k=1}^{M_j-1} \sqrt{1-\exp\(-D_{KL}\(\Pr_B^{T_{j-1}}\|\Pr_{j,k}^{T_{j-1}}\)\)}. 
    \end{align*} 
    An argument similar to (\ref{chain-dz}) yields that
    \begin{align*}
    	D_{KL}\(\Pr_B^{T_{j-1}}\|\Pr_{j,k}^{T_{j-1}} \) \le \frac{r_j^2}{128} \E_{\Pr_B}\tau_k, 
    \end{align*}
    where $\tau_k$ denotes the number of pulls which is in $S_{j,k}$ before the batch containing $T_{j-1}$. Combining the above two inequalities gives
    \begin{align} 
    	\frac{1}{M_j-1} \sum_{k=1}^{M_j-1} TV \( \Pr_B^{T_{j-1}},\Pr_{j,k}^{T_{j-1}}\)
    	\leq& \;  
    	\frac{1}{M_j-1} \sum_{k=1}^{M_j-1}\sqrt{1- \exp\(-\frac{r_j^2}{128} \E_{\Pr_B}\tau_k\)} \nonumber \\ 
    	\leq& \;  
    	\sqrt{1-\exp\(-\frac{r_j^2}{128(M_j-1)}\E_{\Pr_B}\[\sum_{k=1}^{M_j-1}\tau_k\]\)} \label{l7_jenson-dz} \\ 
    	\leq& \;  
    	\sqrt{  1-\exp\left(-\frac{r_j^2T_{j-1}}{128(M_j-1)}\right)} \label{l7_total-dz} \\ 
    	\leq& \;  
    	\sqrt{ 1-\exp\left(-\frac{1}{64B^2}\right)} \label{l7_useeq-dz} \\ 
    	\leq& \;  \frac{1}{8B}, \nonumber 
    \end{align}
    where (\ref{l7_jenson-dz}) uses Jensen's inequality, (\ref{l7_total-dz}) uses the fact that $\sum_{k=1}^{M_j-1} \tau_k\leq T_{j-1}$, and (\ref{l7_useeq-dz}) uses (\ref{eq:rm-relation-dz}).

    Plugging the above results implies that
    \begin{align*} 
        |\Pr_B(A_j) - p_j| \le \frac{1}{M_j -1 } \sum_{k=1}^{M_j-1} |\Pr_B (A_j)-\Pr_{j,k}(A_j)| \le  \frac{1}{8B}. 
    \end{align*}
    Since $\sum_{j=1}^B \Pr_B \(  A_j \) \ge \Pr_B \( \cup_{j=1}^B A_j \) = 1 $, it holds that 
    \begin{equation*} 
        \sum_{j=1}^B p_j \geq \Pr_B (A_B) +\sum_{j=1}^{B-1} \(\Pr_B(A_j)-\frac{1}{8B} \) \geq \frac{7}{8}.\qedhere
    \end{equation*} 
\end{proof}

Lemma \ref{exist_pj-dz} implies that there exists some $j$ such that $p_j>\frac{7}{8B}$. Then we show that the worst-case regret in world $\mathcal{I}_j$ gives the lower bound we need. 

\begin{lemma}\label{adagrid_lemma-dz}
    For adaptive grid $\mathcal{T}$ and policy $\pi$, if index $j$ satisfies $p_j\geq\frac{7}{8B}$, then 
    there exists a problem instance $I$ with zooming dimension $d_z$ such that
    \begin{align*} 
        \E \left[ R_T(\pi) \right] 
        \geq \frac{1}{512B^2}T^{\frac{1-\frac{1}{d_z+2}}{1-\left(\frac{1}{d_z+2}\right)^B}}.
    \end{align*}
\end{lemma}

\begin{proof}
   Here we proceed with the case where $j \le B-1$. The case for $j = B$ can be proved analogously. 
   
   For any $1\leq k \leq M_j-1$, we construct a set of problem instances $\mathcal{I}_{j,k} = \left(I_{j,k,l} \right)_{1\leq l\leq M_j}$.
    For $l\neq k$, we first define a function $\nu_{j,k,l}$ on $\A_{d_z}$ as
    \begin{align*}
    	\nu_{j,k,l}(z)
            =
    	    \begin{cases}
    		\nu_{j,k}(z)+\frac{3r_j}{16},\;\text{if}\;z=u_{j,l},\\
    		\nu_{j,k}(z),\;\text{if}\;z\in \mathcal{U}_j\;\text{and}\;  z \neq u_{j,l} , \\ 
    		\max\left\{\frac{r_1}{2},\max_{u\in\mathcal{U}_j}\left\{\nu_{j,k,l}(u)-d_{\mathcal{A}}(z,u)\right\}\right\},\\
            \quad\text{if}\;z\in\A_{d_z}\setminus\mathcal{U}_j,
    	\end{cases}
    \end{align*} 
    where $\nu_{j,k}$ is defined in (\ref{eq:instance-ada1-dz-nu}). Then the expected reward of $I_{j,k,l}$ is defined as 
    \begin{align}\label{eq:instance-ada-jkl-dz-mu}
        \mu_{j,k,l}((\alpha,\beta))=\frac{1}{2}\left(\nu_{j,k,l}((\alpha,0_{d-d_z}))-\|\beta\|_\infty\right),
    \end{align}
    where $(\alpha,\beta)$ is the concatenation of $\alpha\in[0,1]^{d_z}$ and $\beta\in[0,1]^{d-d_z}$. For $l=k$, we let $\mu_{j,k,k}=\mu_{j,k}$. 

    We first show that each $\mu_{j,k,l}$ is $1$-Lipschitz and the zooming dimension equals to $d_z$.
    
    Firstly, for any $(\alpha_1,\beta_1)$ and $(\alpha_2,\beta_2)$, we have
    \begin{align*}
        \mu_{j,k,l}((\alpha_1,\beta_1))-\mu_{j,k,l}((\alpha_2,\beta_2))
        \leq&\mu_{j,k,l}((\alpha_1,\beta_1))-\mu_{j,k,l}((\alpha_1,\beta_2))
        +\mu_{j,k,l}((\alpha_1,\beta_2))-\mu_{j,k,l}((\alpha_2,\beta_2))\\
        \leq&\frac{1}{2}\|\beta_1-\beta_2\|_\infty
        +\frac{1}{2}\big(\nu_{j,k,l}((\alpha_1,0_{d-d_z}))-\nu_{j,k,l}((\alpha_2,0_{d-d_z}))\big)\\
        \leq&\frac{1}{2}(\|\beta_1-\beta_2\|_\infty+\|\alpha_1-\alpha_2\|_\infty)\\
        \leq&\|(\alpha_1-\alpha_2,\beta_1-\beta_2)\|_\infty.
    \end{align*}
    Therefore, $\mu_{j,k,l}$ is $1$-Lipschitz.

    Secondly, for any $r\geq r_j$, (\ref{eq:instance-ada1-dz-mu}) and (\ref{eq:instance-ada-jkl-dz-mu}) yield that $S(16r)\subset[0,1]^{d_z}\times[0, 32r]^{d-d_z}$ and $S(16r)\supset[0,1]^{d_z}\times[0, 30r]^{d-d_z}$. Therefore, we have $30^{d-d_z}r^{-d_z}\leq N_r\leq32^{d-d_z}r^{-d_z}$, and the zooming dimension equals to $d_z$.

    Then we show that an argument similar to Lemma \ref{adagrid_lemma} yields the lower bound we need. For each $1\leq k\leq M_j$, we write $u_{j,k}=(\hat{u}_{j,k},0_{d-d_z})$. We define $C_{j,k}= \B_{d_z} \(\hat{u}_{j,k},\frac{r_j}{4} \)\times[0,1]^{d-d_z}$, and our construction $\mathcal{I}_{j,k}$ has the following properties:
    \begin{enumerate}
        \item For any $l\neq k$, $\mu_{j,k,l}(z)=\mu_{j,k,k}(z)$ for any $z\notin C_{j,l}$;
        \item For any $l\neq k$, $\mu_{j,k,k}(z)\leq\mu_{j,k,l}(z)\leq\mu_{j,k,k}(z)+\frac{3r_j}{16}$ for any $z\in C_{j,l}$;
        \item For any $1\leq l\leq M_j$, under $I_{j,k,l}$, pulling an arm that is not in $C_{j,l}$ incurs a regret at least $\frac{r_j}{32}$.
    \end{enumerate}
    Let $x_t$ denote the choices of policy $\pi$ at time $t$, and $y_t$ denote the reward. For $t_{j-1}<t\leq t_j$, we define $\Pr_{j,k,l}^t$ as the distribution of sequence $\left(x_1,y_1,\cdots,x_{t_{j-1}},y_{t_{j-1}}\right)$ under instance $I_{j,k,l}$ and policy $\pi$. From similar argument in (\ref{eq:spl-dz}), it holds that
    \begin{align}
        \sup_{I\in\mathcal{I}_{j,k}}\E \[ R_T(\pi) \] 
        \geq
        \frac{r_j}{32} \sum_{t=1}^T \frac{1}{M_j} \sum_{l=1}^{M_j} \Pr_{j,k,l}^t (x_t\notin C_{j,l}). \label{eq:lower-bound-ada1-dz} 
    \end{align}
    From our construction, it is easy to see that $C_{j,k_1}\cap C_{j,k_2}=\varnothing$ for any $k_1\neq k_2$, so we can construct a test $\Psi$ such that $x_t\in C_{j,k}$ implies $\Psi= k$ . By Lemma \ref{lem:test-tree} with a star graph on $[K]$ with center $k$, we have 
    \begin{align}
        \frac{1}{M_j} \sum_{l=1}^{M_j} \Pr_{j,k,l}^t(x_t\notin C_{j,l}) 
        \ge 
        \frac{1}{ M_j } \sum_{ l \neq k } \int \min \left\{d \Pr_{j,k,k}^t,d \Pr_{j,k,l}^t\right\} . \label{eq:lower-bound-ada2-dz}
    \end{align} 
    
    Combining (\ref{eq:lower-bound-ada1-dz}) and (\ref{eq:lower-bound-ada2-dz}) gives
    \begin{align}
        \sup_{I\in\mathcal{I}_{j,k}} \E \[ R_T(\pi) \] 
        \geq& \;  
        \frac{r_j}{32} \sum_{t=1}^T \frac{1}{M_j}\sum_{l\neq k}\int \min \left\{d \Pr_{j,k,k}^t,d \Pr_{j,k,l}^t\right\}\nonumber  \\ 
        \geq& \; 
        \frac{r_j}{32}\sum_{t=1}^{T_j}\frac{1}{M_j}\sum_{l\neq k}\int\min\left\{d \Pr_{j,k,k}^t,d \Pr_{j,k,l}^t\right\}\nonumber \\
        \geq& \;  
        \frac{r_jT_j}{32}\cdot\frac{1}{M_j}\sum_{l\neq k}\int\min\left\{d \Pr_{j,k,k}^{T_j},  d\Pr_{j,k,l}^{T_j}\right\}\label{eq:lower_cons_1_1-dz} \\ 
        \geq& \;  
        \frac{r_jT_j}{32}\cdot\frac{1}{M_j}\sum_{l\neq k}\int_{A_j}\min\left\{ d \Pr_{j,k,k}^{T_j}, d \Pr_{j,k,l}^{T_j}\right\}\label{eq:lower_cons_1_2-dz} \\ 
        \geq& \;  
        \frac{r_jT_j}{32}\cdot\frac{1}{M_j}\sum_{l\neq k}\int_{A_j}\min\left\{ d \Pr_{j,k,k}^{T_{j-1}}, d \Pr_{j,k,l}^{T_{j-1}}\right\}, \label{eq:lower_cons_1-dz}
    \end{align}  
    where (\ref{eq:lower_cons_1_1-dz}) follows from data processing inequality of total variation and the equation $\int\min\left\{dP,dQ\right\}=1-TV(P,Q)$, (\ref{eq:lower_cons_1_2-dz}) restricts the integration to event $A_j$, and (\ref{eq:lower_cons_1-dz}) holds because the observations at time $T_j$ are the same as those at time $T_{j-1}$ under event $A_j$.
    
    For the term $\int_{A_j}\min\left\{d\Pr_{j,k,k}^{T_{j-1}},d\Pr_{j,k,l}^{T_{j-1}}\right\}$, it holds that 
    \begin{align}
        \int_{A_j}\min\left\{d\Pr_{j,k,k}^{T_{j-1}},d\Pr_{j,k,l}^{T_{j-1}}\right\} 
        =&\; \int_{A_j}\frac{d\Pr_{j,k,k}^{T_{j-1}}+d\Pr_{j,k,l}^{T_{j-1}}-\left|d\Pr_{j,k,k}^{T_{j-1}}-d\Pr_{j,k,l}^{T_{j-1}}\right|}{2}\nonumber\\ 
        =&\; 
        \frac{\Pr_{j,k,k}^{T_{j-1}}(A_j)+\Pr_{j,k,l}^{T_{j-1}}(A_j)}{2}-\frac{1}{2}\int_{A_j}\left|d\Pr_{j,k,k}^{T_{j-1}}-d\Pr_{j,k,l}^{T_{j-1}}\right|\nonumber\\
        \geq&\;  
        \(\Pr_{j,k,k}^{T_{j-1}}(A_j)-\frac{1}{2}TV \(\Pr_{j,k,k}^{T_{j-1}},\Pr_{j,k,l}^{T_{j-1}}\)\) 
        -TV \(\Pr_{j,k,k}^{T_{j-1}},\Pr_{j,k,l}^{T_{j-1}}\)\label{eq:lower_cons_2_1-dz}\\ 
        =&\; 
        \Pr_{j,k}(A_j)-\frac{3}{2}TV \( \Pr_{j,k,k}^{T_{j-1}},\Pr_{j,k,l}^{T_{j-1}} \), \label{eq:lower_cons_2-dz} 
    \end{align} 
    where (\ref{eq:lower_cons_2_1-dz}) uses the inequality $|\Pr(A)-\mathbb{Q}(A)|\leq TV(\Pr,\mathbb{Q})$, and (\ref{eq:lower_cons_2-dz}) holds because $I_{j,k}=I_{j,k,k}$ and $A_j$ can be determined by the observations up to $T_{j-1}$.
    
    We use an argument similar to (\ref{chain-dz}) to get 
    \begin{align*}
    	D_{KL}\(\Pr_{j,k,k}^{T_{j-1}}\|\Pr_{j,k,l}^{T_{j-1}}\)
    	\le
    	\frac{1}{2}\cdot\left(\frac{3r_j}{16}\right)^2\E_{\Pr_{j,k}}\tau_l
    	\le \frac{r_j^2}{32} \E_{\Pr_{j,k}}\tau_l,
    \end{align*}
    where $\tau_l$ denotes the number of pulls which is in $C_{j,l}$ before the batch of time $T_{j-1}$. Then from Lemma \ref{lem:pinsker-type}, we have
    \begin{align}
        \frac{1}{M_j}\sum_{l\neq k} TV \( \Pr_{j,k,k}^{T_{j-1}},\Pr_{j,k,l}^{T_{j-1}}\)
        \leq& \; 
        \frac{1}{M_j}\sum_{l\neq k}\sqrt{1-\exp\(-D_{KL}\(\Pr_{j,k,k}^{T_{j-1}}\|\Pr_{j,k,l}^{T_{j-1}}\)\)} \nonumber\\ 
        \leq& \; 
        \frac{1}{M_j}\sum_{l\neq  k}\sqrt{1-\exp\(-\frac{r_j^2}{32}\E_{\Pr_{j,k}}\tau_l\)} \nonumber\\ 
        \leq& \; 
        \frac{M_j-1}{M_j}\sqrt{1-\exp\(-\frac{r_j^2}{32(M_j-1)}\sum_{l\neq k}\E_{\Pr_{j,k}}\tau_l\)} \nonumber\\ 
        \leq& \; 
        \frac{M_j-1}{M_j}\sqrt{1-\exp\(-\frac{r_j^2 T_{j-1}}{32(M_j-1)}\)} \nonumber\\
        \leq& \; 
        \frac{M_j-1}{M_j}\sqrt{1-\exp\(-\frac{M_j}{32(M_j-1) B^2}\)} \label{eq:lower_cons_3_1-dz}\\
        \leq& \; 
        \frac{M_j-1}{M_j}\sqrt{\frac{M_j}{32(M_j-1) B^2}}\nonumber\\
        \leq& \; 
        \frac{1}{4B},\label{eq:lower_cons_3-dz} 
    \end{align}
    where (\ref{eq:lower_cons_3_1-dz}) uses (\ref{eq:rm-relation-dz}).
         
    Combining (\ref{eq:lower_cons_1-dz}), (\ref{eq:lower_cons_2-dz}) and  (\ref{eq:lower_cons_3-dz}) yields that
    \begin{align*} 
        \sup_{I\in\mathcal{I}_{j,k}}\E \[R_T(\pi) \] 
        \geq 
        \frac{1}{32}  r_jT_j\(\frac{\Pr_{j,k}(A_j)}{2}-\frac{3}{8B}\)\geq 
        \frac{1}{32B} T^{\frac{1-\varepsilon}{1 -  \varepsilon^B}}\(\frac{\Pr_{j,k}(A_j)}{2} -  \frac{3}{8B}\),
    \end{align*}  
    where $\varepsilon=\frac{1}{d_z+2}$. This inequality holds for any $k \leq M_j-1$. Averaging over $k$ yields
    \begin{align*}
        \begin{split}
            \sup_{I\in\cup_{k\leq M_j-1}\mathcal{I}_{j,k}}\E \[ R_T(\pi) \] 
            \geq& \; 
            \frac{1}{32B} T^{\frac{1-\varepsilon}{1-\varepsilon^B}} \( \frac{1}{2(M_j-1)}\sum_{k=1}^{M_j-1} \Pr_{j,k}(A_j)-\frac{3}{8B} \)\\ 
            \geq& \; 
            \frac{1}{32B} T^{\frac{1-\varepsilon}{1-\varepsilon^B}} \( \frac{7}{16B}-\frac{3}{8B} \)\\ 
            \geq& \;  
            \frac{1}{512B^2}T^{\frac{1-\varepsilon}{1-\varepsilon^B}}, 
        \end{split}
    \end{align*}
    where the second inequality holds from $p_j\geq\frac{7}{8B}$. Hence, the proof of Lemma \ref{adagrid_lemma-dz} is completed.
\end{proof}

Finally, combining the above two lemmas, we arrive at the lower bound in Theorem \ref{thm:lower-adaptive-dz}.
\end{proof}

\section{Additional Experimental results}
\label{app:blin}
\subsection{Repeated experiments of A-BLiN}
We present results of A-BLiN with some random seeds in Figure \ref{fig:regret-seeds}, where the figure legends and labels are the same as whose in Figure \ref{regret}. These results stably agree with the plot in the paper. The curve of zooming algorithm in Figure \ref{regret} and \ref{fig:regret-seeds} is the average of $10$ repeated experiments. The reason we did not present averaged regret curve of A-BLiN in Figure \ref{regret} is that we want to show the batch pattern of a single A-BLiN run in the figure. Averaging across different runs breaks the batch pattern. As an example, one stochastic run may end the first batch after 100 observations, while another may end the third batch after 110 observations.

\begin{figure*}[ht] 
    \centering
    \subfloat{ 
        \includegraphics[width=4.8cm]{./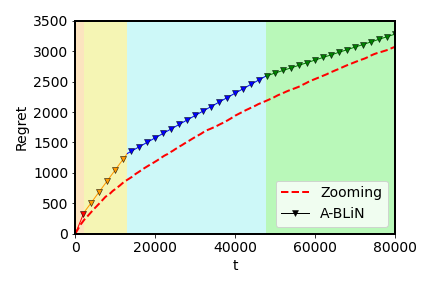}
    }
    \subfloat{
        \includegraphics[width=4.8cm]{./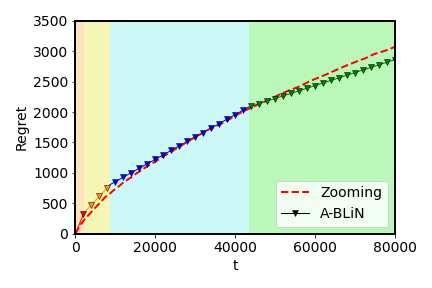}
    }
    \subfloat{
        \includegraphics[width=4.8cm]{./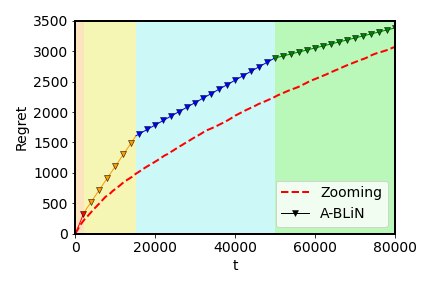}
    }\\
    \subfloat{
        \includegraphics[width=4.8cm]{./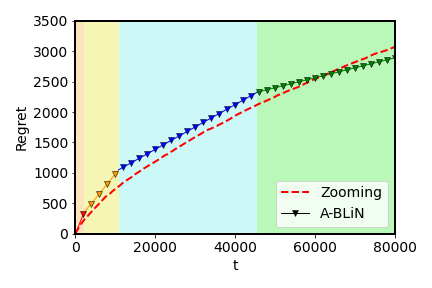}
    }
    \subfloat{
        \includegraphics[width=4.8cm]{./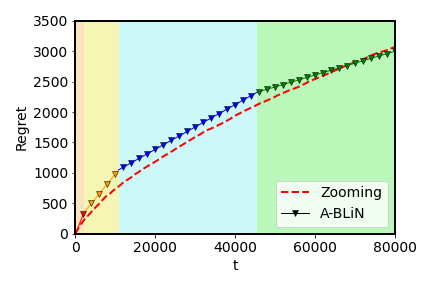}
    } 
    \subfloat{
        \includegraphics[width=4.8cm]{./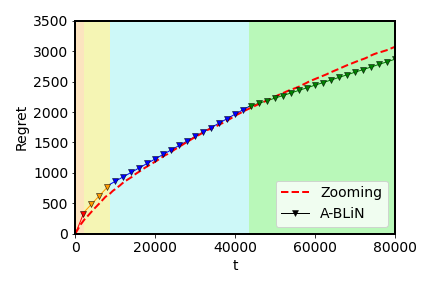}
    } 
    \caption{Results of A-BLiN with some random seeds. The figure legends and labels are the same as whose in Figure \ref{regret}.}
    \label{fig:regret-seeds}
\end{figure*} 

\subsection{Experimental results of D-BLiN}\label{app:exp-dblin}
We run D-BLiN to solve the same problem in Section \ref{exp}. The partition and elimination process of this experiment is presented in Figure \ref{app:partition}, which shows that the optimal arm $x^*$ is not eliminated during the game, and only $6$ rounds of communications are needed for time horizon $T=80000$. Moreover, we present the resulting partition and the accumulated regret in Figure \ref{app:parti-regret}. 

\begin{figure}[htbp]
    \centering
    \subfloat{ 
    	\includegraphics[width=3.5cm]{./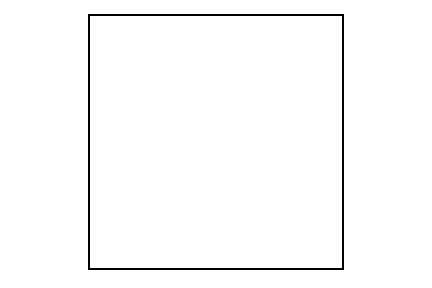}
    }\hspace{-15mm}
    \subfloat{ 
    	\includegraphics[width=3.5cm]{./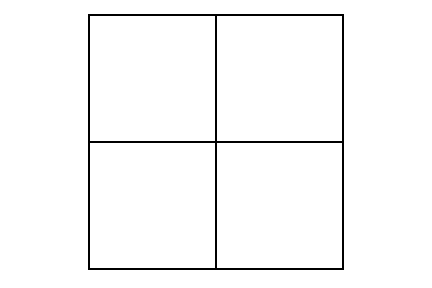}
    }\hspace{-15mm}
	\subfloat{
		\includegraphics[width=3.5cm]{./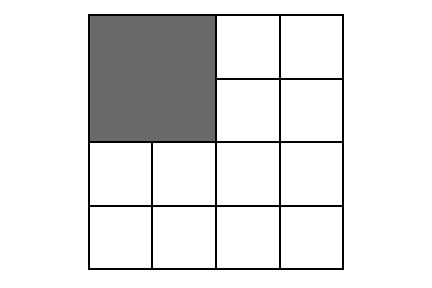}
	}\hspace{-15mm}
	\subfloat{
		\includegraphics[width=3.5cm]{./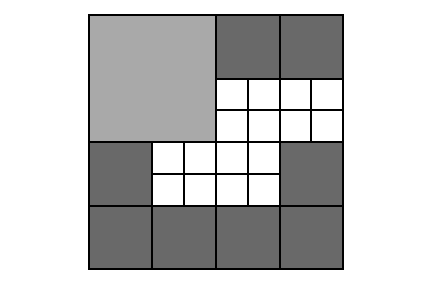}
	}\hspace{-15mm}
	\subfloat{
		\includegraphics[width=3.5cm]{./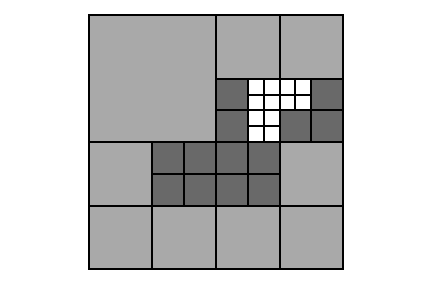}
	}\hspace{-15mm}
	\subfloat{
		\includegraphics[width=3.5cm]{./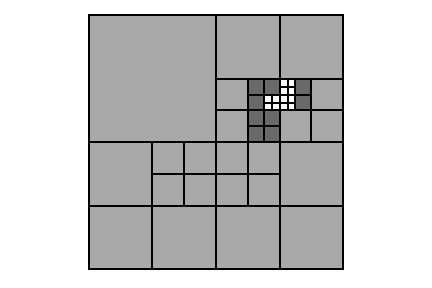}
	}
	\caption{Partition and elimination process of D-BLiN. The $i$-th subfigure shows the pattern before the $i$-th batch. Dark gray cubes are those eliminated in the most recent batch, while the light gray ones are those eliminated in earlier batches. For the total time horizon $T= 80000$, D-BLiN needs $6$ rounds of communications. 
	} 
	\label{app:partition}
\end{figure}
\begin{figure}[htbp]
	\centering
	\subfloat[Partition]{
		\includegraphics[height=5cm]{./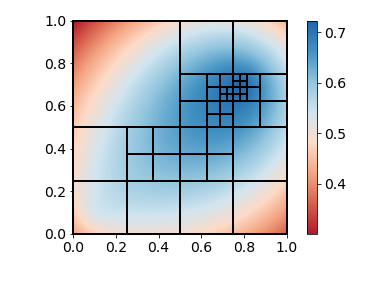}
		\label{app_parti}
	}
	\subfloat[Regret]{
		\includegraphics[height=5cm]{./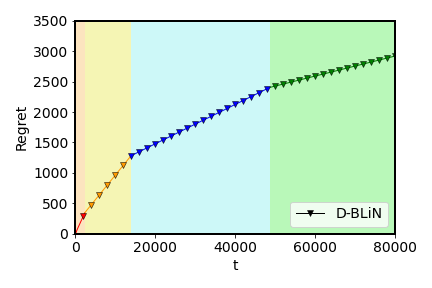}
		\label{app_regret}
	}
	\caption{Resulting partition and regret of D-BLiN. In Figure \ref{app_parti}, we show the resulting partition of D-BLiN. The background color denotes the true value of expected reward $\mu$, and blue means high values. The figure shows that the partition is finer for larger values of $\mu$. In Figure \ref{app_regret}, we show accumulated regret of D-BLiN. In the figure, different background colors represent different batches. For the total time horizon $T=80000$, D-BLiN needs $6$ rounds of communications (the first two batches are too small and are combined with the third batch in the plot).} 
	\label{app:parti-regret}
\end{figure}

\section{Auxiliary Lemmas}
\label{app:lemmas}
\begin{lemma}[Bretagnolle-Huber Inequality\cite{bretagnolle1978estimation}]
    \label{lem:pinsker-type}
    Let $P$ and $Q$ be any probability measures on the same probability space. It holds that 
    \begin{align*}
        TV(P, Q) 
        \le   
        \sqrt{ 1 - \exp \( - D_{KL} (P \| Q) \) } 
        \le   
        1 - \frac{1}{2} \exp \left(- D_{KL}( P \| Q) \right).
    \end{align*} 
\end{lemma}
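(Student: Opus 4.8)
The plan is to reduce everything to densities and then combine Cauchy--Schwarz with Jensen's inequality. First I would fix a common dominating measure, e.g.\ $\lambda = \tfrac{1}{2}(P+Q)$, and write $p = dP/d\lambda$ and $q = dQ/d\lambda$. If $D_{KL}(P\|Q) = +\infty$ then both inequalities are immediate (both the middle and the right expression equal $1$, while $TV(P,Q) \le 1$ always), so I may assume $P \ll Q$ and work with a finite KL divergence. The basic identities to record are $\int p\wedge q\, d\lambda = 1 - TV(P,Q)$ and $\int p\vee q\, d\lambda = 1 + TV(P,Q)$, which follow from $p\wedge q + p\vee q = p+q$ and $p\vee q - p\wedge q = |p-q|$ together with the definition $TV(P,Q) = \tfrac12\int|p-q|\,d\lambda$.

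The first main step is the estimate
\begin{equation*}
    \Big(\int \sqrt{pq}\,d\lambda\Big)^2 = \Big(\int \sqrt{(p\wedge q)(p\vee q)}\,d\lambda\Big)^2 \le \Big(\int p\wedge q\,d\lambda\Big)\Big(\int p\vee q\,d\lambda\Big) = 1 - TV(P,Q)^2,
\end{equation*}
which is Cauchy--Schwarz applied to the functions $\sqrt{p\wedge q}$ and $\sqrt{p\vee q}$. The second main step is the lower bound $\int\sqrt{pq}\,d\lambda \ge \exp\!\big(-\tfrac12 D_{KL}(P\|Q)\big)$, which comes from Jensen's inequality for the concave function $\log$: writing the integral as $\E_{P}\big[\sqrt{q/p}\big]$ gives $\log\int\sqrt{pq}\,d\lambda \ge \E_P\big[\tfrac12\log(q/p)\big] = -\tfrac12 D_{KL}(P\|Q)$. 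Combining the two steps yields $1 - TV(P,Q)^2 \ge e^{-D_{KL}(P\|Q)}$, which rearranges to the left inequality $TV(P,Q) \le \sqrt{1 - e^{-D_{KL}(P\|Q)}}$.

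For the right inequality I would set $u = e^{-D_{KL}(P\|Q)} \in (0,1]$ and verify $\sqrt{1-u} \le 1 - u/2$: since $1 - u/2 \ge 1/2 > 0$ one may square both sides, and the claim collapses to $0 \le u^2/4$. Chaining the two inequalities completes the proof.

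I do not expect a real obstacle here: the statement is a short chain of classical inequalities. The only points needing a little care are the degenerate case $D_{KL}(P\|Q) = +\infty$, which is dispatched at the outset, and checking that the Jensen step is legitimate — i.e.\ that $\sqrt{q/p}$ is $P$-integrable — which holds automatically since $\int\sqrt{pq}\,d\lambda \le \big(\int p\,d\lambda\big)^{1/2}\big(\int q\,d\lambda\big)^{1/2} = 1$.
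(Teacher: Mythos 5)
Your proof is correct. The paper states this lemma as a classical auxiliary tool (the Bretagnolle--Huber inequality) without providing a proof, and your argument is the standard derivation: bound the Hellinger affinity $\int\sqrt{pq}\,d\lambda$ from below by $\exp\bigl(-\tfrac12 D_{KL}(P\Vert Q)\bigr)$ via Jensen and from above by $\sqrt{1-TV(P,Q)^2}$ via Cauchy--Schwarz on $\sqrt{p\wedge q}$ and $\sqrt{p\vee q}$, then finish with the elementary bound $\sqrt{1-u}\le 1-u/2$, with the degenerate case $D_{KL}(P\Vert Q)=+\infty$ correctly dispatched at the outset.
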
 

\begin{lemma}[\cite{gao2019batched}]
    \label{lem:test-tree}
    Let $Q_1, \cdots , Q_n$ be probability measures over a common probability space $ (\Omega, \mathcal{F}) $, and $\Psi : \Omega \rightarrow [n]$ be any measurable function (i.e., test). Then for any tree $T = ([n], E)$ with vertex set $[n]$ and edge set $E$, we have
    \begin{enumerate}
        \item $\frac{1}{n} \sum_{ i = 1 }^n Q_i ( \Psi \neq i ) 
        \ge 
        \frac{1}{n} \sum_{ (i,j) \in E } \int \min \{ d Q_i, d Q_j \} ; $  
        \item $\frac{1}{n} \sum_{ i = 1 }^n Q_i ( \Psi \neq i ) 
        \ge 
        \frac{1}{2n} \sum_{ (i,j) \in E } \exp \left( - D_{KL} (Q_i \| Q_j ) \right). $
    \end{enumerate} 
\end{lemma}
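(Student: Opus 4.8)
The plan is to prove the stronger first inequality and then deduce the second edge by edge from the Bretagnolle--Huber bound already recorded in Lemma~\ref{lem:pinsker-type}. Write $A_k := \{\Psi = k\}$; since $\Psi$ takes values in $[n]$, the sets $A_1,\dots,A_n$ form a measurable partition of $\Omega$, so that
$$\sum_{i=1}^n Q_i(\Psi\neq i) = \sum_{k=1}^n \sum_{\ell\neq k} Q_\ell(A_k),$$
and, splitting each edge integral over this partition,
$$\sum_{(i,j)\in E}\int\min\{dQ_i,dQ_j\} = \sum_{k=1}^n \sum_{(i,j)\in E}\int_{A_k}\min\{dQ_i,dQ_j\}.$$
The first claim will follow once I establish, for each fixed $k$, the inequality $\sum_{(i,j)\in E}\int_{A_k}\min\{dQ_i,dQ_j\} \le \sum_{\ell\neq k}Q_\ell(A_k)$, and then sum over $k$.

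The heart of the argument is a charging scheme that exploits the tree structure. Fix $k$ and root the tree $T$ at the vertex $k$. Every non-root vertex $v\in[n]\setminus\{k\}$ then has a unique parent edge, which gives a bijection between $E$ and $[n]\setminus\{k\}$; denote by $c(e)\in\{i,j\}$ the endpoint of $e=(i,j)$ that is farther from $k$ (its \emph{child} end), so that $e\mapsto c(e)$ realizes this bijection and in particular $c(e)\neq k$ for every $e$. For each edge the elementary pointwise domination $\min\{dQ_i,dQ_j\}\le dQ_{c(e)}$ yields
$$\int_{A_k}\min\{dQ_i,dQ_j\} \le \int_{A_k} dQ_{c(e)} = Q_{c(e)}(A_k).$$
Because $e\mapsto c(e)$ is a bijection onto $[n]\setminus\{k\}$, summing over the $n-1$ edges charges each budget term $Q_\ell(A_k)$ (with $\ell\neq k$) exactly once, giving $\sum_{(i,j)\in E}\int_{A_k}\min\{dQ_i,dQ_j\}\le \sum_{\ell\neq k}Q_\ell(A_k)$, as required. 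Summing over $k$ and invoking the two displayed decompositions proves part~1 after dividing by $n$.

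For part~2 it suffices to bound each edge term from below. Since $\int\min\{dQ_i,dQ_j\} = 1 - TV(Q_i,Q_j)$, the rightmost inequality in Lemma~\ref{lem:pinsker-type} gives $TV(Q_i,Q_j)\le 1-\tfrac12\exp(-D_{KL}(Q_i\|Q_j))$, hence $\int\min\{dQ_i,dQ_j\}\ge \tfrac12\exp(-D_{KL}(Q_i\|Q_j))$. Substituting this into part~1 edge by edge and dividing by $n$ delivers the second claim.

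The main obstacle is the combinatorial bookkeeping in the middle step: one must avoid double-counting the budget $\{Q_\ell(A_k):\ell\neq k\}$ across the edges, and it is precisely the acyclicity of $T$ that supplies the needed bijection (on a graph with cycles the edge count can exceed $n-1$ and the scheme fails). The two points requiring care are verifying that the child-endpoint map is a bijection for \emph{every} choice of root $k$, and checking that the domination $\min\{dQ_i,dQ_j\}\le dQ_{c(e)}$ holds regardless of which endpoint is designated the child; once these are in place the remainder is routine.
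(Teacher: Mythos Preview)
Your argument is correct. The paper does not actually prove this lemma; it is listed under ``Auxiliary Technical Tools'' and attributed to \citet{gao2019batched} without a proof, so there is nothing in the paper to compare against. Your rooting-and-charging scheme is the standard way to obtain part~1 (and is essentially the argument in the cited source), and deducing part~2 edge by edge via $\int\min\{dQ_i,dQ_j\}=1-TV(Q_i,Q_j)$ together with Lemma~\ref{lem:pinsker-type} is exactly right.
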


\end{document}